\newcommand{\bC}{\mathbf{C}}
\newcommand{\dist}{\text{Dist}}
\newcommand{\dt}{d}
\newcommand{\children}{\text{children}\xspace}
\newcommand{\cost}{\textup{Cost}}
\DeclareMathOperator*{\argmax}{arg\,max}
\DeclareMathOperator*{\argmin}{arg\,min}
\pgfplotsset{compat=1.18}
\newcommand*\circled[1]{\tikz[baseline=(char.base)]{
            \node[shape=circle,draw,inner sep=2pt] (char) {#1};}}
\newcommand*\putinsquare[1]{\tikz[baseline=(char.base)]{
            \node[shape=rectangle,draw,inner sep=2pt] (char) {#1};}}
\theoremstyle{plain}
\newtheorem{theorem}{Theorem}[section]
\newtheorem{lemma}[theorem]{Lemma}
\newtheorem{idea}[theorem]{Idea}
\newtheorem{fact}[theorem]{Fact}
\newtheorem{corollary}[theorem]{Corollary}
\theoremstyle{definition}
\newtheorem{definition}[theorem]{Definition}
\theoremstyle{remark}
\title{Ultrametric Cluster Hierarchies: I Want `em All!}
\author{
  Andrew Draganov\thanks{First authors with equal contribution in alphabetical order.}~ $^{1,2}$ \quad 
  Pascal Weber\footnotemark[1]~ $^{3,4,5}$\\
  \textbf{Rasmus Jørgensen$^{1}$ \quad
  ~Anna Beer$^{3}$ \quad
  ~~~Claudia Plant$^{3,5}$ \quad 
  Ira Assent$^{1,2}$}
  \smallskip\\
\text{$^1$\textit{\href{https://cs.au.dk/}{Department of Computer Science, Aarhus University}}, Aarhus, Denmark} \\
\text{$^2$\textit{\href{https://www.fz-juelich.de/en/ias/ias-8}{IAS-8: Data Analytics and Machine Learning, Forschungszentrum Jülich GmbH}}, Jülich, Germany} \\
\text{$^3$\textit{\href{https://informatik.univie.ac.at/en/}{Faculty of Computer Science, University of Vienna}}, Vienna, Austria}\\
\text{$^4$\textit{\href{https://docs.univie.ac.at/}{UniVie Doctoral School Computer Science, University of Vienna}}, Vienna, Austria}\\
\text{$^5$\textit{\href{https://datascience.univie.ac.at/}{Data Science @ Uni Vienna, University of Vienna}}, Vienna, Austria}\smallskip\\
\href{mailto:draganovandrew@gmail.com,?subject=[SHiP]\%20I\%20want\%20to\%20reach\%20out\&cc=pascal.weber@univie.ac.at}{\texttt{draganovandrew@gmail.com}} \quad \href{mailto:pascal.weber@univie.ac.at,?subject=[SHiP]\%20I\%20want\%20to\%20reach\%20out\&cc=draganovandrew@gmail.com}{\texttt{pascal.weber@univie.ac.at}}
}
\begin{document}

\maketitle

\begin{abstract}

\renewcommand*{\thefootnote}{\fnsymbol{footnote}}

Hierarchical clustering is a powerful tool for exploratory data analysis, organizing data into a tree of clusterings from which a partition can be chosen. This paper generalizes these ideas by proving that, for any reasonable hierarchy, one can optimally solve any center-based clustering objective over it (such as $k$-means). Moreover, these solutions can be found exceedingly quickly and are \emph{themselves} necessarily hierarchical. 
Thus, given a cluster tree, we show that one can quickly access a plethora of new, equally meaningful hierarchies.
Just as in standard hierarchical clustering, one can then choose any desired partition from these new hierarchies. We conclude by verifying the utility of our proposed techniques across datasets, hierarchies, and partitioning schemes.\footnote[2]{\url{https://github.com/pasiweber/SHiP-framework/} -- Implementation and experiments}
\footnote[0]{\url{https://pypi.org/project/SHiP-framework/} -- Python interface package}

\renewcommand*{\thefootnote}{\arabic{footnote}}
\setcounter{footnote}{0}

\end{abstract}

\section{Introduction}
Hierarchical clustering is a fundamental technique for exploratory data analysis \citep{hierarchical_clustering_og_2, hierarchical_clustering_og}.
The key idea is that having a tree of clusterings over a given dataset allows users to choose any partition from this hierarchy that best suits the users' needs. These notions go beyond standard agglomerative clustering algorithms to also include density-based clustering techniques like DBSCAN \cite{dbscan} and HDBSCAN \cite{hdbscan}. Even non-hierarchical clustering algorithms like $k$-means can be solved efficiently by leveraging hierarchical representations \cite{fast_kmeans, hierarchical_kmedian, mettu_plaxton}.

The central underlying concept of hierarchical clustering methods is that they can be modeled using ultrametrics: distances which satisfy the strong triangle inequality $d(x, z) \leq \max(d(x, y), d(y, z))$ for all $x, y, z$. 
Put simply, a set of points can only satisfy this inequality if they are arranged in nested, hierarchical structures \cite{ultrametric_lca_def}.
Originally described for phylogenetic and agglomerative clustering tasks \cite{ultrametric_single_linkage, phylogenetic_ultrametric}, the depth of this equivalence between hierarchical clustering and ultrametrics has inspired multiple subfields of unsupervised learning theory \cite{hst_1, fitting_ultrametrics_min_disagreements, dasgupta_objective}.

\begin{figure*}[!ht]
    \centering
    \includegraphics[width=\textwidth, bb=0 0 356.7 118.5, trim=20 0 20 28, clip]{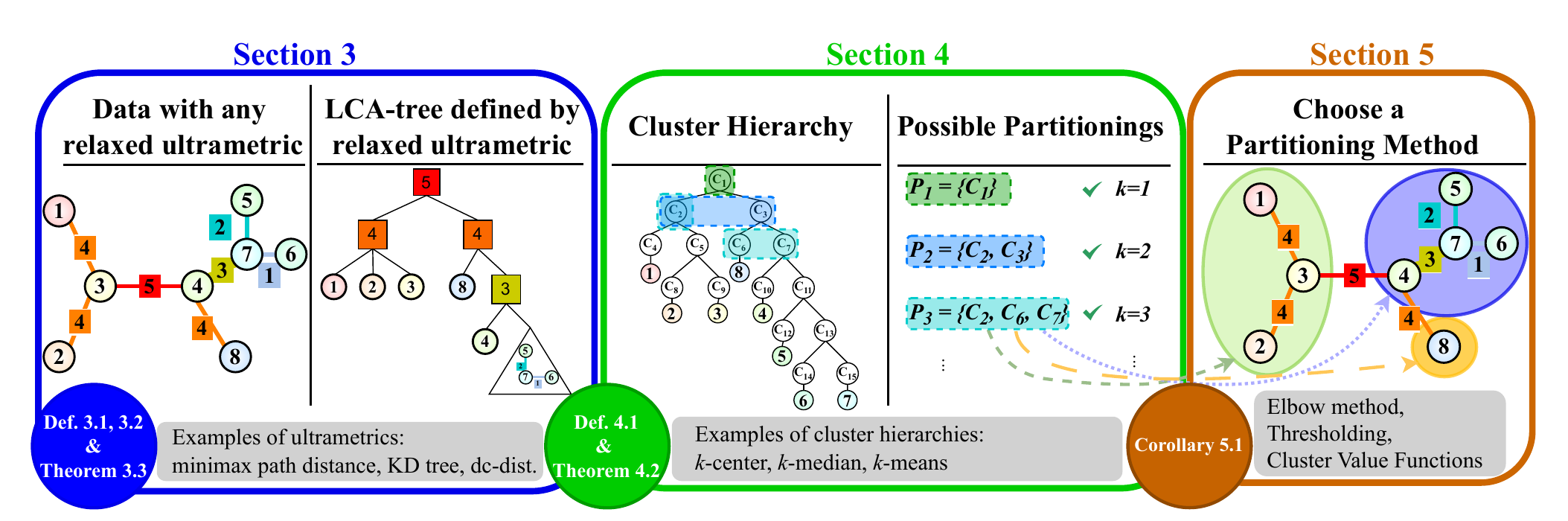}
    \caption{
    Overview of our proposed SHiP clustering framework in which we (1) fit an ultrametric, (2) choose a center-based hierarchy on the ultrametric, and (3) extract a partition from the hierarchy.
    }
    \label{fig:overview}
\end{figure*}

\paragraph{Our results.} We prove an elegant, previously unknown property of ultrametrics: all standard center-based clustering tasks (i.e., $k$-means, $k$-median, $k$-center) can be solved optimally in any ultrametric. The key insight is that, in this setting, these center-based clustering tasks can be reduced to sorting. Consequently, our algorithm is remarkably efficient: given an ultrametric on $n$ points, finding the full set of optimal solutions (i.e., for all $k \in [n]$) requires only $\texttt{Sort}(n)$ time -- the time to sort $n$ values. Our results therefore improve on recent work in both runtime and generality \citep{beer2023connecting, hierarchical_kmedian, cover_tree_k_means}. Moreover, these center-based partitions are \emph{themselves} necessarily hierarchical: the set of optimal center-based clustering solutions in an ultrametric form a cluster-tree.

Building on these theoretical insights, we introduce the SHiP (\textbf{S}imilarity-\textbf{Hi}erarchy-\textbf{P}artitioning) clustering framework. While traditional approaches merely extract flat partitions from predefined hierarchies, SHiP enables discovering entirely \emph{new} hierarchical relationships via center-based clustering, substantially increasing the expressiveness of hierarchical clustering. Users can then efficiently extract any desired partition from these novel hierarchies. We depict this process in Figure \ref{fig:overview}, where we specify which sections of our paper describe the SHiP framework's various components.

\begin{wrapfigure}{r}{0.45\linewidth}
    \centering 
    \vspace*{-0.55cm}
    \resizebox{\linewidth}{!}{
    \begin{tikzpicture}
    \tikzmath{
        \xmax = 4;
        \ymax = 1.09;
    }
        \node[inner sep=0pt] (img) at (0, 0) {\includegraphics[trim={1.75cm, 30.8cm, 0.05cm, 1.68cm}, clip, width=8cm]{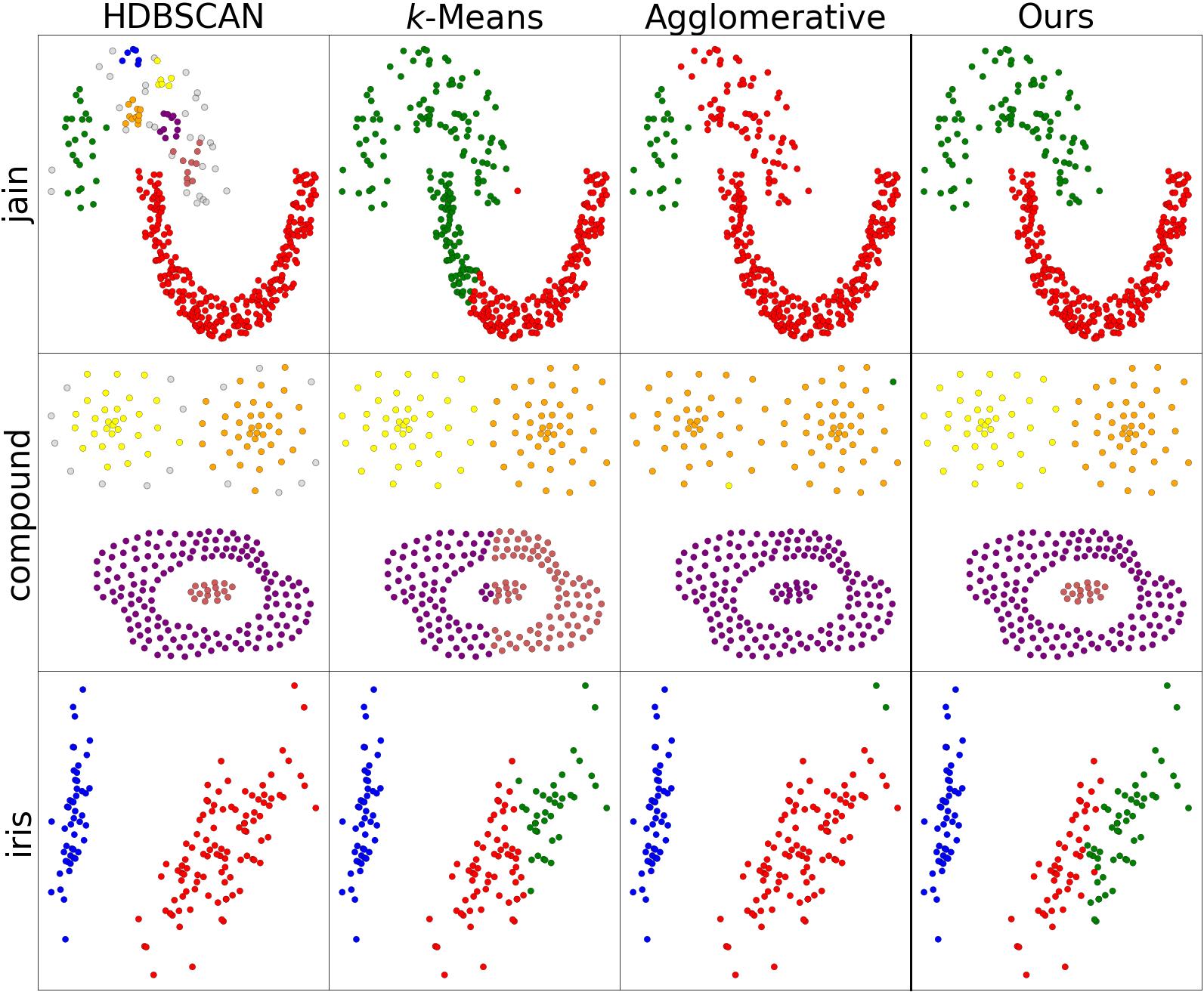}};
        \node (labels) at ($(-0.75*\xmax, 1.35*\ymax)$) {\textcolor{darkgray}{HDBSCAN}};
        
        \node (labels) at ($(-0.25*\xmax, 1.5*\ymax)$) 
        {\textcolor{darkgray}{Eucl.}};
        \node (labels) at ($(-0.25*\xmax, 1.2*\ymax)$) {\textcolor{darkgray}{$k$-means}};
        
        \node (labels) at ($(0.25*\xmax, 1.35*\ymax)$) {\textcolor{darkgray}{Ward}};
        
        \node (labels) at ($(0.75*\xmax, 1.5*\ymax)$) {\textcolor{darkgray}{dc-dist/}};
        \node (labels) at ($(0.75*\xmax, 1.2*\ymax)$) {\small \textcolor{darkgray}{$k$-median ($k$=2)}};
        
        \draw ($(-1*\xmax, -1*\ymax)$) rectangle ($(\xmax, \ymax)$);
    \end{tikzpicture}
    }
    \caption{Clusterings on the 2-moons dataset with varying densities.}
    \label{fig:exp_adapt}
    \vspace*{-0.15cm}
\end{wrapfigure}
Figure \ref{fig:exp_adapt} provides a motivating example of the SHiP framework's practical value. When HDBSCAN fails to find the desired partition, the typical workflow would involve trying entirely different algorithms like Euclidean $k$-means or Ward clustering \cite{hierarchical_clustering_og}, repeating the costly data-fitting step while still potentially failing to find the correct partition. SHiP eliminates this redundancy: having already fit the dc-dist ultrametric \cite{beer2023connecting} during HDBSCAN's execution, we can instantly explore alternative cluster structures through different hierarchies. In this case, the correct partition is found in the $k$-median hierarchy when $k=2$. This exemplifies SHiP's core advantage: once an ultrametric is computed, numerous clustering perspectives become available essentially for free, enabling rapid dataset exploration. Our extensive experiments confirm this is not a cherry-picked example; many SHiP-derived combinations consistently produce novel, high-quality clusterings that compete with or outperform state-of-the-art algorithms with minimal additional computational cost.

\section{Related Work}
\label{sec:related_work}

Our primary theoretical result is that all center-based clustering tasks in ultrametrics can be reduced to sorting. The novelty and simplicity of this is particularly surprising as each of these topics---ultrametrics, hierarchical clustering, and center-based clustering---has received significant attention:

\paragraph{Ultrametrics.}

Due to the strong triangle inequality, ultrametrics naturally encode hierarchical relationships \cite{ultrametric_lca_def}. Consequently, ultrametrics arise in numerous applications like biology~\cite{martins_phyogeny_2002}, number theory~\cite{p-adic}, or physics~\cite{spinglass}.
In computer science, they allow for simple, parallelizable solutions \cite{ultrametrics_in_tcs} and extensive research has explored low-distortion embeddings of metrics into ultrametrics~\cite{ultrametric_lca_def, fitting_ultrametrics_min_disagreements}. This is often done via hierarchically well-separated trees (HSTs), data structures which model the original distances via tree path-distances \cite{hst_1, hst_3}. Notable examples include KD trees \cite{kd_tree}, Cover trees \cite{cover_tree}, and HSTs which obtain optimal distortion but are slower to compute \cite{hst_2, hst_modern}. These are all used to accelerate machine learning algorithms \cite{acceleratedHDBSCAN, tsne_fast}, computer graphics \citep{kd_tree_graphics_1}, and nearest-neighbor search \citep{kd_tree_nn_search_2, kd_tree_nn_search_3}. Our framework enables the first comprehensive evaluation of HSTs across clustering tasks, revealing that they consistently fail to preserve underlying cluster structures regardless of hierarchy or partitioning method. Our findings challenge the commonly assumed suitability of HSTs for accelerating ML tasks, especially in precision-sensitive settings.

A separate line of work has studied clustering algorithms in trees. Here, it is known that, for a fixed value of $k$, $k$-center can be solved optimally in $O(n)$ time \citep{kcenter_trees_1, kcenter_trees_2, kcenter_trees_3}. Similarly, $k$-median can be solved optimally in $O(n \log^{k+2} n)$ time \cite{kmedian_trees_1, kmedian_trees_2}. There are two primary differences between this work and our setting. First, in our setting, the centers may only be placed on leaves in the tree. Second, we require that all distances are ultrametric, whereas distances over edge-weighted trees are not necessarily ultrametric. Consequently, our results show that under these additional constraints, all center-based clustering tasks in trees reduce to the same underlying problem.

\paragraph{Hierarchical Clustering.}

Hierarchical clustering methods represent data at multiple scales of granularity in a single structure. These techniques typically produce dendrograms---tree structures that encode nested partitions of the data---from which users can extract flat clusterings (partitions) at different resolution levels \cite{hdbscan, climbing_energies_2}. Part of hierarchical clustering's appeal is that, given a hierarchy, the partitions can often be extracted in $O(n)$ time \cite{climbing_energies_1}. Thus, hierarchical clustering is particularly valuable in exploratory data analysis, where the number of clusters is often unknown a priori \cite{hierarchy_exploratory_2, hierarchy_exploratory_1}.

Traditional hierarchical clustering algorithms (such as single- and complete-linkage) merge clusters according to distance calculations which are often ultrametric in nature \cite{ultrametric_single_linkage}. For instance, the single-linkage algorithm corresponds to an ultrametric over the dataset's minimum spanning tree (MST), where the \emph{single-link distance} (also known as the minimax path distance) between two points is given by the weight of the largest edge in the MST path between them \cite{minimax_distance}. Recent variants of hierarchical clustering have moved away from procedural merging rules in favor of finding ultrametrics which minimize an objective function with respect to the data \cite{ultrametric_fitting_2, ultrametric_fitting_1,  hierarchical_clustering_objective_2, fitting_ultrametrics_min_disagreements, dasgupta_objective}.

\paragraph{Center-Based Clustering.}
Center-based clustering is one of the most thoroughly researched paradigms in unsupervised learning. Formulations such as $k$-means, $k$-median and $k$-center are NP-hard in arbitrary metric spaces \cite{kmeans_hardness_1, Har-Peled}, leading to a rich literature on approximation algorithms with provable speed/accuracy guarantees \cite{kmeans++, gonzalez1985clustering, mettu_plaxton}. Within this, several papers have leveraged HST ultrametrics for faster center-based clustering \cite{fast_kmeans, settling_tradeoffs, cover_tree_k_means}.

Two recent works are particularly relevant to our approach. \citet{beer2023connecting} showed that $k$-center can be solved optimally in the density-connectivity ultrametric (dc-dist) in $O(n^2)$ time. This ultrametric is a generalization of the single-link distance and is the backbone behind the DBSCAN and HDBSCAN clustering algorithms. The algorithm in Beer et al. is essentially equivalent to algorithm 1 in \citet{hierarchical_kmedian}. Cohen-Addad et al. went further by providing a second algorithm which optimally solves $k$-median in a $2$-HST ultrametric in $O(n \log^2(\Delta + n))$ time, where $\Delta$ is the tree-depth of the HST they construct. Both papers observed that the resulting solutions are necessarily hierarchical. Our results can be interpreted as a generalization of algorithm 2 in Cohen-Addad et al. Specifically, we show that their algorithm 2 holds for any ultrametric and reduces to $k$-center (which itself reduces to sorting).

In this sense, our approach yields several advantages over the prior techniques. First, rather than designing algorithms for specific center-based clustering tasks on specific ultrametrics, we provide a general theoretical framework that handles \emph{all} center-based clustering objectives optimally in \emph{all} ultrametrics. Second, our runtime improves over the state-of-the-art and we prove it to be tight. Lastly, our algorithms are relatively simple to abstract, allowing us to evaluate our SHiP framework across many ultrametrics and datasets. Such an experimental ablation was previously absent from the literature.

\section{Ultrametrics and Tree Representations}
\label{sec:ultrametrics}

We begin by formally introducing the data structure that we require for our proof techniques. Namely, the upcoming results hold over a generalization of the standard ultrametric:
\begin{restatable}{definition}{RelaxedUltrametric}
    \label{def:relaxed_ultrametric}
    Let $L$ be a set. Then $d: L \times L \rightarrow \mathbb{R}_{\geq 0}$ is a \emph{relaxed ultrametric} over $L$ if, for all $\ell_i, \ell_j, \ell_k \in L$, the following conditions are satisfied:
    \[ \text{(1) } d(\ell_i, \ell_j) = d(\ell_j, \ell_i) \geq 0 \quad \text{and} \quad
        \text{(2) }d(\ell_i, \ell_k) \leq \max( d(\ell_i, \ell_j), d(\ell_j, \ell_k)). \]
\end{restatable}
Note that the standard ultrametric is a restriction that additionally requires $\dt(\ell_i, \ell_i) = 0$. Thus, not all relaxed ultrametrics are distances as $\dt(\ell_i, \ell_i) > 0$ is allowed. Still, we use the word ``distance'' for readability.
We represent relaxed ultrametric relationships via the following data structure:

\begin{definition}
    \label{def:lca_tree}
    A \emph{lowest-common-ancestor tree} (\emph{LCA-tree}) is a rooted tree $T$ such that every node $\eta \in T$ has value $d(\eta) \geq 0$ associated with it. We write $\eta_i \preceq \eta_j$ to indicate that $\eta_j$ lies on the path from $\eta_i$ to the root and $\eta_i \lor \eta_j$ to refer to the LCA of $\eta_i$ and $\eta_j$. We say that the \emph{LCA-distance} between two leaves $\ell_i, \ell_j \in T$ is given by $d(\ell_i \lor \ell_j)$.
\end{definition}

An LCA-tree is not necessarily binary: if three or more subtrees are all equidistant, they can all be children of the same node. While similar data structures already exist for standard ultrametrics \citep{ultrametric_lca_def, memory_efficient_minimax}, the following theorem (proof in \ref{app:ultrametric_proofs}) states that it can also encode all relaxed ultrametrics:\footnote{
Although standard ultrametrics are often encoded via shortest-path distances in a tree \citep{ultrametrics_root_equidist}, \textit{relaxed} ultrametrics cannot be represented in this way due to the possibility of having $d(\ell_i, \ell_i) > 0$.}

\begin{restatable}{theorem}{UltrametricEquivalency}
    \label{thm:ultrametric_equivalency}
    Let $(L, d')$ be a finite relaxed ultrametric space. Then there exists LCA-tree $T$ with LCA-distance $d$ and a bijection $f:L \leftrightarrow \text{leaves}(T)$ such that, for all $\ell_i, \ell_j \in L$, $d'(\ell_i, \ell_j) = d \left( f(\ell_i) \lor f(\ell_j) \right)$.
\end{restatable}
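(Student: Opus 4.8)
The plan is to prove this by strong induction on $n := |L|$. For $n = 1$, say $L = \{\ell\}$, take $T$ to be a single node (a leaf) with value $d(\ell) := d'(\ell, \ell) \geq 0$ and $f(\ell) := \ell$; since the LCA of a node with itself is that node, $d(f(\ell) \lor f(\ell)) = d(\ell) = d'(\ell, \ell)$, as required. For $n \geq 2$, the crucial preliminary observation is that applying the strong triangle inequality to the triple $\ell_i, \ell_j, \ell_i$ gives $d'(\ell_i, \ell_j) \geq \max\bigl(d'(\ell_i, \ell_i),\, d'(\ell_j, \ell_j)\bigr)$ for all $i, j$. Hence, setting $M := \max_{\ell_i \neq \ell_j} d'(\ell_i, \ell_j)$ to be the largest distance between \emph{distinct} points, we in fact have $M = \max_{\ell_i, \ell_j \in L} d'(\ell_i, \ell_j)$ (no self-distance can exceed it), and $M$ is attained by at least one pair of distinct points.

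Next I would construct the top-level split. Form the graph $G$ on vertex set $L$ that joins $\ell_i$ and $\ell_j$ ($i \neq j$) whenever $d'(\ell_i, \ell_j) < M$, and let $\mathcal{P}$ be the partition of $L$ into the vertex sets of the connected components of $G$. Two remarks: any point $\ell$ with $d'(\ell, \ell) = M$ satisfies $d'(\ell, \ell') \geq d'(\ell, \ell) = M$ for every $\ell'$, so it is an isolated vertex and $\{\ell\} \in \mathcal{P}$; and if $\ell_i, \ell_j$ lie in the same component, a short induction along a connecting path, using the strong triangle inequality at each step, yields $d'(\ell_i, \ell_j) < M$. Consequently (i) distinct blocks of $\mathcal{P}$ are at mutual $d'$-distance exactly $M$ (the distance is $\geq M$ by construction and $\leq M$ by the choice of $M$); (ii) since the distinct pair realizing $M$ must lie in different blocks, $|\mathcal{P}| \geq 2$, so every block has strictly fewer than $n$ points; and (iii) each block, equipped with the restriction of $d'$, is itself a finite relaxed ultrametric space (a restriction of one always is). By (ii) and (iii) the induction hypothesis applies to every block $B \in \mathcal{P}$, producing an LCA-tree $T_B$ with a bijection $f_B : B \leftrightarrow \text{leaves}(T_B)$. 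Now let $T$ have a fresh root $\rho$ with $d(\rho) := M$ whose children are the roots of the trees $\{T_B : B \in \mathcal{P}\}$, retaining the node values of each $T_B$, and set $f := \bigsqcup_{B} f_B$. Because $\mathcal{P}$ partitions $L$ and $\rho$ (having $\geq 2$ children) is not a leaf, $\text{leaves}(T) = \bigsqcup_B \text{leaves}(T_B)$ and $f : L \leftrightarrow \text{leaves}(T)$ is a bijection.

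Finally I would verify the identity $d'(\ell_i, \ell_j) = d(f(\ell_i) \lor f(\ell_j))$ for every pair, including $i = j$. If $\ell_i$ and $\ell_j$ lie in different blocks of $\mathcal{P}$, their only common ancestor in $T$ is $\rho$, so $d(f(\ell_i) \lor f(\ell_j)) = d(\rho) = M = d'(\ell_i, \ell_j)$ by (i). If they lie in the same block $B$ — which, when $\ell_i = \ell_j$, is the case producing the leaf values — then their lowest common ancestor in $T$ sits strictly below $\rho$ and coincides with their lowest common ancestor inside the subtree $T_B$, so the identity is exactly the induction hypothesis for $(B, d'|_{B \times B})$. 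This closes the induction.

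The one genuinely non-standard point — and the place I would be most careful — is the \emph{relaxed} feature: self-distances may be positive, so a leaf must carry its own value $d'(\ell, \ell)$, and one must ensure the top split is nontrivial even when some point sits at the maximal self-distance $M$. Both are handled by the inequality $d'(\ell_i, \ell_j) \geq \max(d'(\ell_i, \ell_i), d'(\ell_j, \ell_j))$, which forces $M$ to be realized between distinct leaves and makes the recursion well-founded; everything else is the familiar threshold/equivalence-class construction for ultrametrics, with transitivity of ``$d' < M$'' supplied by the strong triangle inequality.
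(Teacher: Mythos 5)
Your proposal is correct and follows essentially the same route as the paper's proof: recursively split the space at the maximal distance $M$ into clusters that are pairwise at distance exactly $M$ (the paper gets this partition from its isosceles fact, you from connected components of the ``$d' < M$'' graph), attach the recursively built subtrees under a root of value $M$, and let leaf values record self-distances. Your explicit observation that $d'(\ell_i,\ell_i) \leq d'(\ell_i,\ell_j)$, which guarantees the top split is nontrivial and that leaf values are consistent, is a nice touch the paper treats only implicitly.
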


This is visualized by the first box of Figure \ref{fig:overview}. It shows a minimum spanning tree (MST) over some data on the left. In this MST, the single-link ultrametric is given by the weight of the largest edge in the path between two nodes \cite{minimax_distance}. For example, the single-link distance between nodes $\circled{1}$ and $\circled{4}$ is $\putinsquare{5}$. The right-hand side of the first box of Figure 1 then stores these distances in an LCA-tree (i.e., the LCA of nodes $\circled{1}$ and $\circled{4}$ has value $5$). We prove in Appendix \ref{app:ultrametric_proofs} that all LCA-trees are relaxed ultrametrics as long as they satisfy the following conditions:

\begin{restatable}{corollary}{LCAcorollary}
    \label{cor:ultrametric_lca}
    Let $T$ be an LCA-tree. For any leaf $\ell \in T$, let $p(\ell) = [\ell, \eta_a, \ldots, \eta_b, r(T)]$ be the path from $\ell$ to the root of the tree $r(T)$.
    Then the LCA-distances on $T$ form a relaxed ultrametric if and only if, for all $\ell \in \text{leaves}(T)$ and $\eta_i, \eta_j \in p(\ell)$, the following conditions are satisfied:
    \[\text{\emph{(1)}} \quad d(\ell) \geq 0 \quad \text{and} \quad \text{\emph{(2)}} \quad \eta_i \preceq \eta_j \implies d(\eta_i) \leq d(\eta_j).\]
\end{restatable}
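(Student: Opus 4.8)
The plan is to verify the two axioms of \Cref{def:relaxed_ultrametric} directly for the LCA-distance $d(\ell_i,\ell_j) := d(\ell_i \lor \ell_j)$ and to read off exactly which properties of the node-values this forces. Symmetry is automatic since $\ell_i \lor \ell_j = \ell_j \lor \ell_i$, so the content of the definition reduces to (a) non-negativity, $d(\ell_i \lor \ell_j) \ge 0$, and (b) the strong triangle inequality, $d(\ell_i \lor \ell_k) \le \max(d(\ell_i \lor \ell_j), d(\ell_j \lor \ell_k))$, for all leaves. Throughout I will assume every internal node of $T$ has at least two children — this is the shape of the tree built in \Cref{thm:ultrametric_equivalency}, and in general a degree-one internal node is never an LCA of two distinct leaves, so contracting it changes no leaf-to-leaf LCA-distance. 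The single combinatorial fact I will need is the standard ``three-point'' property of rooted trees: for any leaves $\ell_a, \ell_b, \ell_c$, with $m := \ell_a \lor \ell_b \lor \ell_c$, two of the three pairwise LCAs equal $m$ and the third is a (weak) descendant of $m$. This holds because the three leaves lie in at least two of the child-subtrees of the minimal common ancestor $m$; if they occupy three distinct subtrees all pairwise LCAs are $m$, and if two of them share a subtree then exactly those two have an LCA strictly below $m$.

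For the ``if'' direction, suppose (1) and (2) hold. For (a): $\ell_i \lor \ell_j$ lies on the root-path $p(\ell_i)$, so $d(\ell_i) \le d(\ell_i \lor \ell_j)$ by (2), and $d(\ell_i) \ge 0$ by (1), hence $d(\ell_i \lor \ell_j) \ge 0$. For (b): apply the three-point fact to $\ell_i, \ell_j, \ell_k$. In each of the three cases at least one of $\ell_i \lor \ell_j$ and $\ell_j \lor \ell_k$ equals $m$, while $\ell_i \lor \ell_k \preceq m$; since $\ell_i \lor \ell_k$ and $m$ share the root-path $p(\ell_i)$, condition (2) gives $d(\ell_i \lor \ell_k) \le d(m) \le \max(d(\ell_i \lor \ell_j), d(\ell_j \lor \ell_k))$.

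For the ``only if'' direction, suppose the LCA-distances form a relaxed ultrametric. Condition (1) is immediate from the first axiom: $d(\ell) = d(\ell \lor \ell) = d(\ell,\ell) \ge 0$. For condition (2), take $\ell \in \text{leaves}(T)$ and $\eta_i \preceq \eta_j$ on $p(\ell)$; we may take $\eta_i \prec \eta_j$. Using the degree assumption, choose leaves $\ell_a, \ell_c$ in two distinct child-subtrees of $\eta_i$, so $\ell_a \lor \ell_c = \eta_i$, and choose a leaf $\ell_b$ in a child-subtree of $\eta_j$ not containing $\eta_i$; then $\ell_a$ and $\ell_b$ (likewise $\ell_c$ and $\ell_b$) lie in distinct child-subtrees of $\eta_j$, so $\ell_a \lor \ell_b = \ell_c \lor \ell_b = \eta_j$. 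The strong triangle inequality on $\ell_a, \ell_b, \ell_c$ now reads $d(\eta_i) = d(\ell_a \lor \ell_c) \le \max(d(\ell_a \lor \ell_b), d(\ell_c \lor \ell_b)) = d(\eta_j)$, as required.

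The step I expect to require the most care is the ``only if'' direction: realizing a prescribed ancestor pair $\eta_i \preceq \eta_j$ as the LCAs of a concrete triple of leaves, which is precisely where the reduction to internal nodes of degree at least two is used, and then confirming that those leaves sit in the subtrees claimed. The rest is a mechanical translation between the ultrametric axioms and the order $\preceq$ on the tree, with the three-point property doing the structural work in both directions.
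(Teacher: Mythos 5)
Your ``if'' direction is essentially the paper's argument: the paper likewise reduces to the observation that for three leaves either all pairwise LCAs coincide or, WLOG, $\ell_i \lor \ell_j \preceq \ell_i \lor \ell_k = \ell_j \lor \ell_k$, and then invokes the monotonicity condition (2); your explicit three-point lemma and your derivation of non-negativity from $d(\ell_i) \le d(\ell_i \lor \ell_j)$ simply spell out steps the paper states tersely. Where you genuinely go beyond the paper is the ``only if'' direction: the paper's proof (and the surrounding text, which speaks of ``sufficient conditions'') establishes only sufficiency, so your converse --- realizing an ancestor pair $\eta_i \prec \eta_j$ as the LCAs of a concrete leaf triple and reading condition (2) off the strong triangle inequality --- is additional content, and you are right that it requires the branching assumption (every internal node has at least two children, as in the tree built in Theorem \ref{thm:ultrametric_equivalency}), since the value of a degree-one internal node never appears as an LCA-distance and is therefore unconstrained by the ultrametric axioms; without that caveat the literal ``only if'' claim fails. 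One small repair to your converse: condition (2) quantifies over all of $p(\ell)$, which includes the leaf $\ell$ itself, and your construction (``two distinct child-subtrees of $\eta_i$'') does not apply when $\eta_i = \ell$; that case follows by taking $\ell_a = \ell_c = \ell$ and $\ell_b$ below $\eta_j$ but outside the child-subtree containing $\ell$, since Definition \ref{def:relaxed_ultrametric} applies to repeated points and yields $d(\ell) = d(\ell \lor \ell) \le \max\bigl(d(\ell \lor \ell_b), d(\ell_b \lor \ell)\bigr) = d(\eta_j)$.
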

Corollary \ref{cor:ultrametric_lca} describes a key property: if a tree's node values grow along paths from the leaves to the root, then the corresponding LCA-distances are a relaxed ultrametric. This is the natural representation of a hierarchy: since the subtrees grow in size as we go towards the root, the values corresponding to those subtrees also grow. Going forward, we assume that a relaxed ultrametric is given in its LCA-tree form, satisfying the properties in Corollary \ref{cor:ultrametric_lca}.
\section{Center-based Clustering in Ultrametrics}\label{sec:clustering_theory}
Our main theoretical result is that center-based clustering can be solved optimally in a relaxed ultrametric, that these solutions are hierarchical, and that they can all be found in $\texttt{Sort}(n)$ time.\footnote{Although sorting is traditionally an $O(n \log n)$ operation, it often only requires $O(n \log \log n)$ time \citep{sort_time}.} 
We define the \emph{$(k, z)$-clustering} and the \emph{$k$-center clustering} objectives over an LCA-tree $T$ as finding the set of centers $\mathbf{C} \subseteq \text{leaves}(T)$ with $|\bC| = k$ which minimize, respectively, 
\[\cost_z(T, \bC) = \underbrace{ \sum_{\ell \in \text{leaves}(T)} \min_{c \in \bC} \dt(\ell, c)^z}_{(k, z)\text{-clustering objective}}, \quad \quad \cost_\infty(T, \bC) = \underbrace{\max_{\ell \in \text{leaves}(T)} \min_{c \in \bC} \dt(\ell, c)}_{k\text{-center clustering objective}}.\]
Note that $(k, z)$-clustering gives the well-known $k$-median and $k$-means tasks for $z=1$ and $z=2$.

We now define what it means for a clustering to be hierarchical. Given a set of points $L$, we define a \emph{cluster} $C$ as any subset of $L$. We then define a \emph{partition} $\mathcal{P}_k = \{C_1, \ldots, C_k\}$ as any \emph{non-overlapping} set of $k$ clusters, i.e., for all $C_i, C_j \in \mathcal{P}$, $C_i \cap C_j = \emptyset$. Then:

\begin{restatable}{definition}{Hierarchy}(\citet{hierarchical_center_based})
    \label{def:hierarchical_clusters}
     A \emph{cluster hierarchy} $\mathcal{H} = \{\mathcal{P}_1, \ldots, \mathcal{P}_n\}$ is a set of partitions where
    \begin{align*}
        &\text{for } k=1: && \mathcal{P}_1 \subseteq L \text{, and} \\
        &\text{for } 1 < k \leq n:  && \mathcal{P}_k = \left( \mathcal{P}_{k-1} \setminus C_i \right) \cup \{C_j, C_l\}, \text{such that  (a) } C_i = C_j \cup C_l \in \mathcal{P}_{k-1}  \\
        &&&\text{with } C_j \cap C_l = \emptyset, \text{(b) } C_j \neq C_i \text{ and } C_l \neq C_i \text{, and (c) } i \neq j \neq l.
    \end{align*}
\end{restatable}
\noindent In short, each partition $\mathcal{P}_k$ is obtained by splitting one cluster from $\mathcal{P}_{k-1}$ in two, implying that all cluster hierarchies can be represented as a rooted tree. We depict this in \cref{fig:overview} (middle), where the hierarchy is obtained by subdividing the clusters. We can now state our primary theoretical result:
\begin{restatable}{theorem}{maintheorem}
    \label{thm:optimal_clusters}
    Let $(L, d)$ be a finite relaxed ultrametric space represented over an LCA-tree $T$. Let $n = |L|$ and $z \in \mathbb{N}_{>0}$.
    Then, for both the $k$-center and $(k, z)$-clustering objectives on $T$, there exists an algorithm which finds the optimal solutions $\{\mathbf{C}_1, \ldots, \mathbf{C}_n\}$ for all $k \in [n]$ in $\texttt{Sort}(n)$ time. Furthermore, the respective partitions $\mathcal{H} = \{\mathcal{P}_1, \ldots, \mathcal{P}_n\}$ obtained by assigning all leaves in $T$ to their closest center satisfy Definition \ref{def:hierarchical_clusters}.
\end{restatable}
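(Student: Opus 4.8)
The plan is to push everything onto the LCA-tree $T$ and use that, since every pairwise distance is a node value, a clustering is pinned down by its center set $\mathbf{C}\subseteq\text{leaves}(T)$ alone. The first step is to record the cost identities. Each leaf $\ell$ is served by its nearest center at value $d(\eta^\ast(\ell))$, where $\eta^\ast(\ell)$ is the lowest ancestor of $\ell$ whose subtree meets $\mathbf{C}$ but whose child toward $\ell$ does not (and $\eta^\ast(\ell)=\ell$ if $\ell\in\mathbf{C}$). Writing $T'$ for the subtree of $T$ spanned by the root-to-center paths, one checks $\text{leaves}(T')=\mathbf{C}$ and
\[\cost_z(T,\mathbf{C})=\sum_{c\in\mathbf{C}}d(c)^z+\sum_{\substack{\eta\in T'\text{ internal},\ \eta_j\notin T'\\ \eta_j\text{ a child of }\eta}}s(\eta_j)\,d(\eta)^z,\]
with $s(\cdot)$ the number of leaves in a subtree; likewise $\cost_\infty(T,\mathbf{C})=\max\{d(\eta):\eta\in T'\text{ has a child outside }T'\}$. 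So the problem becomes: over subtrees $T'\ni r$ with exactly $k$ leaves, minimize these quantities. By \cref{cor:ultrametric_lca} node values are non-decreasing toward the root, and I would first contract every edge $(\eta,\eta_j)$ with $d(\eta_j)=d(\eta)$ so that parent–child values are strictly increasing; this changes no leaf-distance.

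Next, the algorithm. In one bottom-up pass compute, for every node $\eta$, the leaf count $s(\eta)$ and the optimal single-center subtree cost $m_1(\eta)=\min_{c\preceq\eta}\sum_{\ell\preceq\eta}d(\ell\lor c)^z$; the minimizer descends, at each node, into a \emph{primary child} $\eta^\ast$ attaining $\min_j\!\big[d(\eta)^z(s(\eta)-s(\eta_j))+m_1(\eta_j)\big]$, so $m_1(\eta)=m_1(\eta^\ast)+(s(\eta)-s(\eta^\ast))d(\eta)^z$. This is $O(n)$. To each pair $(\eta,\eta_j)$ with $\eta_j$ a non-primary child of $\eta$ — there are exactly $n-1$ of them — attach the \emph{gain} $g(\eta,\eta_j)=s(\eta_j)\,d(\eta)^z-m_1(\eta_j)\ge 0$ (for $k$-center, simply the value $d(\eta)$). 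Sort these $O(n)$ numbers; this is the only superlinear step, at cost $\texttt{Sort}(n)$. I claim the optimum for $k$ centers equals $m_1(r)$ minus the sum of the $k-1$ largest gains (resp.\ the $k$-th largest of the $n-1$ values, for $k$-center), realized by starting from the root-to-primary-leaf path and repeatedly "uncutting" the branch of the next-largest gain along its own primary descent; the hierarchy is then read off in linear time.

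Correctness is the crux, and I expect the following lemma to be the main obstacle: $g(\eta,\eta_j)\ge g(\eta_j,\eta_{j,l})$ for every non-primary child $\eta_{j,l}$ of $\eta_j$. Using $d(\eta)\ge d(\eta_j)$ and the expansion of $m_1(\eta_j)$ above, $g(\eta,\eta_j)-g(\eta_j,\eta_{j,l})$ is bounded below by $(s(\eta_j^\ast)-s(\eta_{j,l}))d(\eta_j)^z-(m_1(\eta_j^\ast)-m_1(\eta_{j,l}))$, which is $\ge 0$ precisely by the defining optimality of the primary child $\eta_j^\ast$; the same computation propagates the bound through primary children, so a branch's gain dominates that of every branch nested inside it. Hence processing gains in sorted order (parent-first on ties) never uncuts a branch before the ancestor branches it requires, so the construction is well-defined; a routine exchange argument shows an optimal $k$-leaf $T'$ may be taken to include the full primary descent from each of its nodes, after which the cost identity makes $\cost_z$ telescope to $m_1(r)$ minus the total gain of the $k-1$ branches $T'$ uncuts — at most the sum of the $k-1$ largest gains, with equality for the construction. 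For $k$-center the parallel fact is that the number of radius-$\le t$ classes (an equivalence by the strong triangle inequality) equals $1+\sum_{d(\eta)>t}(m_\eta-1)$, so the optimum for each $k$ is again read off the same sorted list of slot values.

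Finally, the hierarchy. After contraction, when branch $(\eta,\eta_j)$ is uncut the new center (the primary leaf of $\eta_j$) is strictly closer to every leaf of the subtree at $\eta_j$ than any earlier center, so exactly those leaves leave their current cluster $C_i$ and form a new cluster $C'_j$; since $C_i$ still contains the center that had been serving them — which lies outside that subtree — $C_i$ stays nonempty, giving $\mathcal{P}_k=(\mathcal{P}_{k-1}\setminus C_i)\cup\{C'_i,C'_j\}$ with $C_i=C'_i\cup C'_j$ disjoint, exactly as in \cref{def:hierarchical_clusters}. Un-contracting the merged nodes only refines equidistant subtrees and preserves this. Thus a single sort simultaneously yields optimal solutions for all $k$ together with a valid cluster hierarchy, which proves the theorem.
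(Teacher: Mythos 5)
Your proposal is correct and is essentially the paper's own argument in different packaging: your ``primary child'' and the bottom-up $m_1$ recursion are precisely the corresponding $z$-centers of \cref{lma:optimal_subtrees} and \cref{alg:corresp_z_centers}, your gains $g(\eta,\eta_j)=s(\eta_j)d(\eta)^z-m_1(\eta_j)$ are exactly the paper's cost-decreases, your key domination lemma (proved via the same algebra, $d(\eta)\ge d(\eta_j)$ combined with the optimality defining the primary child) is \cref{lma:cost_decreases_ultrametric}, and sorting the $O(n)$ gains and uncutting branches in that order is the paper's farthest-first pass over the cost-decrease tree (\cref{lma:optimal_kcenter}, \cref{thm:optimal_kz}), with the hierarchy and $k$-center cases handled the same way. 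The only substantive differences are presentational: you derive optimality from the cut-edge cost identity plus telescoping, giving the closed form that the optimum for $k$ equals $m_1(r)$ minus the sum of the $k-1$ largest gains (with the ``normalize an optimal $T'$ to contain all primary descents'' step playing the role of the exchange argument in \cref{lma:greedy_is_optimal}), and you handle ties and strictness by contracting equal-valued edges where the paper instead invokes \cref{lma:cost_decreases_increase}.
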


\paragraph{Proof Overview.} In words, Theorem \ref{thm:optimal_clusters} states that given any LCA-tree over a relaxed ultrametric, it takes $\texttt{Sort}(n)$ time to find the full set of optimal solutions for $k$-means, $k$-median or $k$-center across all values of $k$ and that these solutions are themselves hierarchical. The $k$-center part of this result is an extension of the farthest-first traversal algorithm in which one places each subsequent center on the point that is farthest from the current set of centers \citep{Har-Peled, farthest_first}. In essence, Appendix \ref{app:k_center_proof} shows that under the strong triangle inequality, the farthest-first traversal algorithm (a) becomes optimal and (b) reduces to sorting the distances in the ultrametric. We then show in Appendix \ref{app:k_z_clustering_proof} that the $(k, z)$-clustering objectives in an ultrametric can be reduced to the $k$-center one in a relaxed ultrametric. That is, given an LCA-tree $T$ on which to do $(k, z)$-clustering, there exists a \emph{new} LCA-tree $T'$ such that an optimal $k$-center solution on $T'$ \emph{is} the optimal $(k, z)$-clustering solution on $T$. Interestingly, even if we are solving $(k, z)$-clustering in a standard ultrametric, it reduces to $k$-center in a relaxed one. This is why we require the notion of relaxed ultrametrics for our proofs.

A key insight which underpins this reduction is that these center-based cluster hierarchies \emph{themselves} constitute relaxed ultrametrics. That is, let $\mathcal{H}$ be a hierarchy of optimal $k$-center or $(k, z)$-clustering solutions from Theorem \ref{thm:optimal_clusters}. For each cluster $C$ in this hierarchy, consider the cost $d_{cost}(C)$ of assigning all of its points to a single optimal center. As we traverse the tree towards the root, these costs will necessarily grow (see Lemma \ref{lma:cost_decreases_increase}). Thus, by Corollary \ref{cor:ultrametric_lca}, the hierarchy $\mathcal{H}$ with LCA-distances $d_{cost}$ \emph{must} be a relaxed ultrametric. Indeed, any binary LCA-tree satisfying the properties in Corollary \ref{cor:ultrametric_lca} \emph{is} its own optimal $k$-center hierarchy.

Finally, the runtime bottleneck for $k$-center lies in sorting the $O(n)$ unique distances in the ultrametric in order to apply the farthest-first traversal algorithm. This is also the bottleneck for $(k, z)$-clustering, as the reduction to $k$-center only requires $O(n)$ time. We verify that this runtime is tight: one cannot find the optimal centers for all values of $k$ in faster than $\texttt{Sort}(n)$ time (Lemma \ref{lma:worst_case}). This speed is depicted in Figure \ref{fig:runtime_barplot}: although it takes several seconds to fit the density-connectivity ultrametric, it only takes \emph{milliseconds} to construct the $k$-median hierarchy.
\begin{figure}
\renewcommand\thesubfigure{\alph{subfigure}}
\captionsetup[subfigure]{labelformat=empty}
\begin{subfigure}[t]{0.48\textwidth}
    \centering
    \vspace*{1.5em}
    \resizebox{0.95\linewidth}{!}{
\begin{tikzpicture}[inner sep=0]


\tikzmath{
	\ymax = 4.75;
	\ydc = 2.75;
	\dcTime = 9.076; \hierTime = 0.022; \partTime = 0.001;
	\dcTotalTime = {floor((\dcTime + \hierTime + \partTime) * 1000) / 1000};
    \xscale = 0.7;
}

\foreach \x in {0,...,4}{
    \draw [help lines, color=gray!70, dashed, opacity=0.4] ($(\xscale*\x,0)$) -- ($( \xscale*\x,\ymax)$);
}
\foreach \x in {5,...,11}{
    \draw [help lines, color=gray!70, dashed, opacity=0.4] ($(\xscale*\x,0)$) -- ($( \xscale*\x,3.25)$);
}

\node at ($(\xscale*5.5, -0.8)$) {Time in seconds};
\draw[->,thick] (-0.1,0)--($(\xscale*11.5,0)$);  
\foreach \x in {0,1,2,...,11} {        
    \coordinate (X\x) at ($(\xscale*\x*1cm,0)$) {};
    \draw ($(X\x)+(0,3pt)$) -- ($(X\x)-(0,3pt)$);
    \node at ($(X\x)-(0,2.5ex)$) {\x};
}
\coordinate (dots) at ($(\xscale*11*1cm,0)$) {};
\node at ($(dots)+(0.43,-3ex)$) {...};

\draw[-,thick] (0,0)--(0,\ymax);

\foreach \y/\alg/\time in {
		5/SCAR/2.133,
		4/Eucl. $k$-means/3.386,
		3/{\textbf{DC/hier./part.}}/\dcTotalTime,
		2/Ward/10.309} {
	\coordinate (Y\y) at ($(0,0.5cm+\y*0.75cm)$) {};
	\draw ($(Y\y)+(3pt,0)$) -- ($(Y\y)-(3pt,0)$);
	\node [anchor=east, align=right] at ($(Y\y)-(1ex,0)$) {\alg};

    \draw  [fill=blue, fill opacity=0.3, blue] ($(Y\y)-(0,0.25)$) rectangle ($(Y\y)+(\xscale*\time,0.25)$);
	\node at ($(Y\y)+(\xscale*\time/2,0)$) {$\time$\,s};
}

\foreach \y/\alg/\xend\time in {
		1/AMD-DBSCAN/10.9/37.055,
		0/DPC/11.3/702.910} {
	\coordinate (Y\y) at ($(0,0.5cm+\y*0.75cm)$) {};
	\draw ($(Y\y)+(3pt,0)$) -- ($(Y\y)-(3pt,0)$);
	\node [anchor=east, align=right] at ($(Y\y)-(1ex,0)$) {\alg};
	
	\tikzset{decoration={snake,amplitude=.6mm,segment length=0.44cm, post length=0mm,pre length=0mm}}

	\fill[blue, opacity=0.3] ($(Y\y)-(0,0.25)$) -- ($(Y\y)+(0,0.25)$) -- ($(Y\y)+(\xscale*\xend,0.25)$) decorate { -- ($(Y\y)+(\xscale*\xend,-0.25)$) } -- cycle;
	\draw[blue] ($(Y\y)-(0,0.25)$) -- ($(Y\y)+(0,0.25)$);
	\draw[blue] ($(Y\y)+(0,0.25)$) -- ($(Y\y)+(\xscale*\xend,0.25)$);
	\draw[blue] ($(Y\y)-(0,0.25)$) -- ($(Y\y)+(\xscale*\xend,-0.25)$);
	\node at ($(Y\y)+(\xscale*\xend/2,0)$) {$\time$\,s};
    \draw[->] ($(Y\y)+(\xscale*\xend-0.05,0)$) -- ($(Y\y)+(\xscale*\xend+0.2,0)$);
}

\draw  [fill=blue, fill opacity=0.3, blue] ($(\xscale*\dcTime,\ydc)-(0,0.25)$) rectangle ($(\xscale*\dcTime,\ydc)+(\xscale*\hierTime,0.25)$);  
\draw  [fill=blue, fill opacity=0.3, blue]  ($(\xscale*\dcTime+\hierTime,\ydc)-(0,0.25)$) rectangle  ($(\xscale*\dcTime+\xscale*\hierTime,\ydc)+(\xscale*\partTime,0.25)$);  



\begin{scope}[shift={(\xscale*5,4.6)},scale=0.95, every node/.append style={transform shape}, inner sep=6, local bounding box=zoomed_in]
\tikzmath{
	\ymax = 1.;
	\ydc = 0.5;
	\dcTime = 0; \hierTime = 5.5; \partTime = 0.25;
}

\foreach \x in {0,...,5}{
  \draw [help lines, color=gray!80, dashed, opacity=0.4] (\x,0) -- (\x,\ymax);
}

\node at (3, -0.8) {Time in \textcolor{black}{milli}seconds};
\draw[->,thick] (-0.1,0)--(6,0);
\foreach \x in {4,8,...,22} {
  \coordinate (X\x) at ($(\x*0.25cm,0)$) {};
  \draw ($(X\x)+(0,3pt)$) -- ($(X\x)-(0,3pt)$);
  \node at ($(X\x)-(0,2.5ex)$) {+\x};
}
\coordinate (X0) at ($(0*0.25cm,0)$) {};
\draw ($(X0)+(0,3pt)$) -- ($(X0)-(0,3pt)$);
\node at ($(X0)-(0,2.5ex)$) {9076\,ms};

\draw[-,thick] (0,0)--(0,\ymax);

\tikzset{decoration={snake,amplitude=.4mm,segment length=0.35cm, post length=0mm,pre length=0mm}}

\fill[blue, opacity=0.3] ($(0,\ydc-0.25)$) -- ($(0,\ydc+0.25)$) -- ($(-0.3,\ydc+0.25)$) decorate { -- ($(-0.3,\ydc-0.25)$) } -- cycle;
\draw[blue] ($(0,\ydc-0.25)$) -- ($(0,\ydc+0.25)$);
\draw[blue] ($(0,\ydc-0.25)$) -- ($(-0.3,\ydc-0.25)$);
\draw[blue] ($(0,\ydc+0.25)$) -- ($(-0.3,\ydc+0.25)$);

\draw  [fill=blue, fill opacity=0.3, blue] ($(\dcTime,\ydc)-(0,0.25)$) rectangle ($(\dcTime,\ydc)+(\hierTime,0.25)$);  
\node at ($(2.75,\ydc)$) {$22$\,ms};
\draw  [fill=blue, fill opacity=0.3, blue]  ($(\dcTime+\hierTime,\ydc)-(0,0.25)$) rectangle  ($(\dcTime+\hierTime,\ydc)+(\partTime,0.25)$);  
\node at ($(5.63,\ydc)$) {$1$};

\draw[black,decorate,decoration={brace,amplitude=12pt}] (0.03,0.75) -- (5.497,0.75) node[midway, above,yshift=12pt,]{build hierarchy};
\draw[black,decorate,decoration={brace,amplitude=6pt}] (5.5,0.75) -- (5.75,0.75) node[midway, above,yshift=6pt,]{partition};

\end{scope}

\draw[red, very thick, dotted, opacity=0.8] ($(\xscale*8.9,\ydc+0.36)$) rectangle ($(\xscale*9.25,\ydc-0.36)$);

\draw[red, thick, dashed, opacity=0.6] ($(\xscale*8.8,\ydc+0.36)$) -- ($(zoomed_in.south west) - (0,0.02)$);
\draw[red, thick, dashed, opacity=0.6] ($(\xscale*9.3,\ydc+0.36)$) -- ($(zoomed_in.south east) - (0,0.03)$);

\draw[red, thick, opacity=0.55] ($(zoomed_in.south west)$) rectangle ($(zoomed_in.north east)$);

\end{tikzpicture}
}\vspace*{0.53em}
    \caption{
        Figure 3a: Runtimes of competitors and our framework's components on the letterrec dataset. Compared to computing the dc-dist ultrametric, finding the $k$-means hierarchy and extracting the elbow partition requires negligible time.
    }
    \label{fig:runtime_barplot}
    \end{subfigure}
    \quad
    \begin{subfigure}[t]{0.48\textwidth}
    \centering
    \vspace{0.2em}
    \resizebox{0.95\linewidth}{!}{%
    \begin{tikzpicture}

        \tikzmath{
            \xmin = -3.5;
        	\ymin = -2.75;
            \xmax = 3.5;
        	\ymax = 2.75;
        }
    
        \node[inner sep=0pt] (zoomed_out) at (0, 0) {\includegraphics[width=\linewidth, trim={0.75cm, 0.50cm, 0.5cm, 0.3cm}, clip]{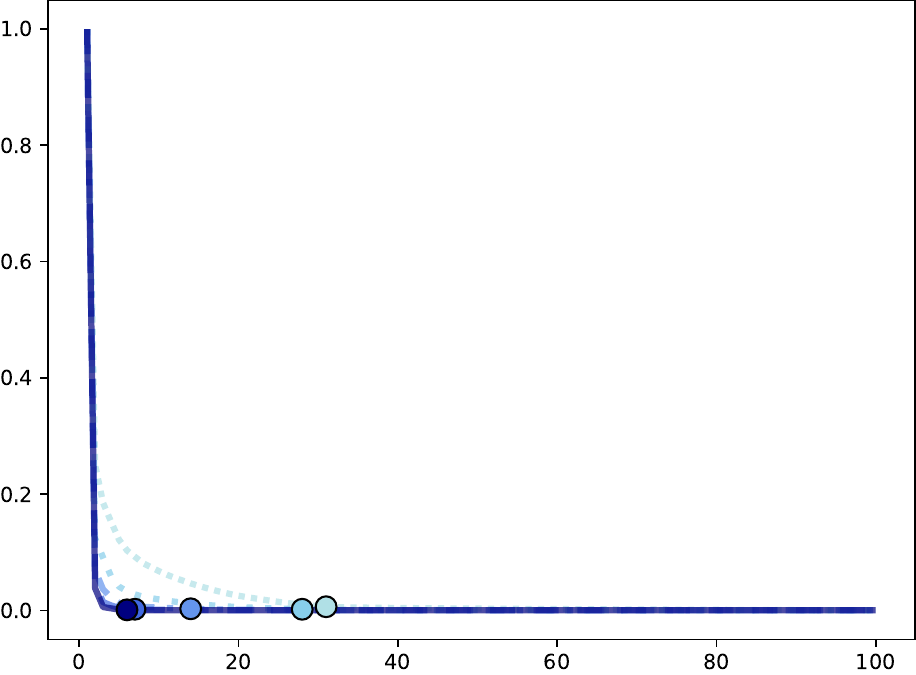}};

        \node[inner sep=0pt] (zoomed_in) at ($(0.25*\xmax, 0.25*\ymax)$) {\includegraphics[width=0.72\linewidth, trim={1.1cm, 0.7cm, 0.5cm, 0.3cm}, clip]{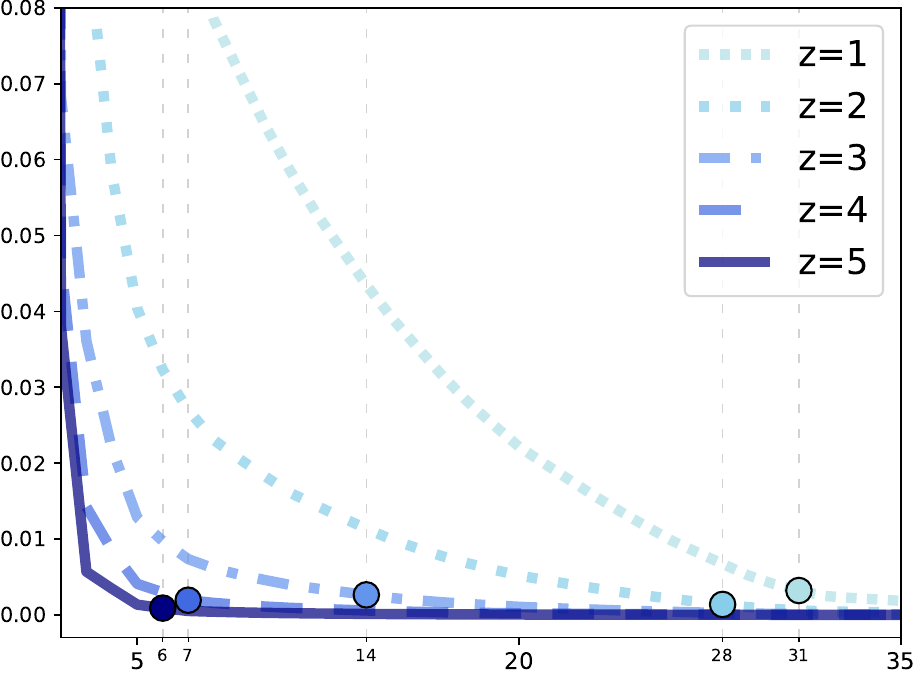}};
        \draw[red, thick, opacity=0.55] ($(zoomed_in.north west)$) rectangle ($(zoomed_in.south east)$);

        \draw[red, very thick, dotted, opacity=0.8] ($(0.83*\xmin, 0.85*\ymin)$) rectangle ($(0.25*\xmin,0.7*\ymin)$);
    
        \draw[red, very thick, dashed, opacity=0.6] ($(0.83*\xmin,0.7*\ymin)$) -- ($(0.44*\xmin,0.9*\ymax)$);
        \draw[red, very thick, dashed, opacity=0.6] ($(0.25*\xmin,0.85*\ymin)$) -- ($(0.945*\xmax,0.42*\ymin)$);

        \node[inner sep=0pt] (a) at ($(0.28*\xmin, 0.48*\ymin)$) {\tiny \textcolor{gray}{6}};
        \node[inner sep=0pt] (a) at ($(0.23*\xmin, 0.48*\ymin)$) {\tiny \textcolor{gray}{7}};
        \node[inner sep=0pt] (a) at ($(0.07*\xmax, 0.48*\ymin)$) {\tiny \textcolor{gray}{14}};

        \fill[white] ($(0.64*\xmax,0.45*\ymin)$) rectangle ($(0.71*\xmax,0.515*\ymin)$);
        \node[inner sep=0pt] (a) at ($(0.67*\xmax, 0.48*\ymin)$) {\tiny \textcolor{gray}{28}};

        \fill[white] ($(0.755*\xmax,0.45*\ymin)$) rectangle ($(0.83*\xmax,0.53*\ymin)$);
        \node[inner sep=0pt] (a) at ($(0.8*\xmax, 0.48*\ymin)$) {\tiny \textcolor{gray}{31}};

        \node[inner sep=0pt] (a) at ($(0.88*\xmin, \ymin)$) {\small \textcolor{darkgray}{0}};
        \node[inner sep=0pt] (a) at ($(0.15*\xmin, \ymin)$) {\textcolor{darkgray}{40}};
        \node[inner sep=0pt] (a) at ($(0.575*\xmax, \ymin)$) {\textcolor{darkgray}{80}};
        \node[inner sep=0pt] (a) at ($(0.955*\xmax, \ymin)$) {\textcolor{black}{\dots}};
        \node[inner sep=0pt] (a) at ($(0, 1.13*\ymin)$) {\textcolor{darkgray}{Number of clusters ($k$)}};

        \node[inner sep=0pt] (a) at ($(\xmin, 0.8*\ymin)$) {\small \textcolor{darkgray}{0}};
        \node[inner sep=0pt] (a) at ($(\xmin, 0.88*\ymax)$) {\textcolor{darkgray}{1}};
        \node[inner sep=0pt] (a) at ($(1.1*\xmin, 0)$) {\textcolor{darkgray}{\rotatebox{90}{Normalized cost}}};     

    \end{tikzpicture}%
    }
    \caption{Figure 3b: Example elbow plots for the $z=1, 2, 3, 4\text{, and }5$ settings under the dc-dist ultrametric on the D31 synthetic dataset. Elbow locations (circles) are determined using all $k \in [1,n]$ with $n=3100$.}
    \label{fig:elbow_plot}
\end{subfigure}
\renewcommand\thesubfigure{(\alph{subfigure})}
\captionsetup[subfigure]{labelformat=parens}
\end{figure}%

\section{Choosing a Partition}\label{sec:best_clustering}

Theorem \ref{thm:optimal_clusters} gives us a hierarchy of optimal center-based clustering solutions for all values of $k$ at once. However, what if we are interested in the ``best'' clustering from this hierarchy? To this end, there are several techniques for extracting additional partitions from these hierarchies in $O(n)$ time.

\subsection{Feasibility of the Elbow Method}\label{sec:elbow}

One of the most common methods for choosing a ``best'' clustering is the elbow method \citep{elbow_original}. Here, one is given a set of values of $k$, $\{k_1, k_2, \ldots, k_f\}$, and a set of corresponding partitions $\mathcal{P} = \{\mathcal{P}_{1}, \mathcal{P}_{2}, \ldots, \mathcal{P}_{f}\}$. Each partition incurs a cost $\mathcal{L}_i = \sum_{C \in \mathcal{P}_i} \mathcal{L}(C)$ with respect to the clustering objective. This gives us a plot of costs over the different values of $k$. Informally, the elbow method chooses the partition $\mathcal{P}_i$ whose cost $\mathcal{L}_i$ looks to be at a sharp point in this curve.

Due to the NP-hardness of $k$-means clustering, standard elbow plots can only compute approximate solutions, traditionally done sequentially for each $k$ \cite{elbow_issues}. However, the elbow method is surprisingly viable in the ultrametric setting: Theorem \ref{thm:optimal_clusters} yields optimal clusterings for all $k$ values at once, eliminating both computational overhead and approximation errors. Moreover, we show that the relaxed ultrametric's elbow plot for $(k, z)$-clustering is guaranteed to be convex:

\begin{restatable}{corollary}{ElbowPlotCor}
    \label{cor:elbow_plot}
    Let $\mathcal{P}$ and $\mathcal{L}$ correspond to the $n$ partitions and losses obtained in accordance with Theorem \ref{thm:optimal_clusters} for the $(k, z$)-clustering objective. Let $\Delta_i = \mathcal{L}_{i+1} - \mathcal{L}_i$. Then either $\Delta_{i} < \Delta_{i+1} \leq 0$ or $\Delta_{i} = \Delta_{i+1} = 0$ for all $i \in [n-1]$.
\end{restatable}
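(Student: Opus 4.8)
The plan is to recast the claim in terms of the \emph{cost decreases} along the hierarchy. For a cluster $C\in\mathcal{H}$ with children $C_1,C_2$ in the hierarchy tree, put $\cd(C)=\mathcal{L}(C)-\mathcal{L}(C_1)-\mathcal{L}(C_2)$, where $\mathcal{L}(\cdot)$ is the optimal single-center $(k,z)$-cost of a cluster. By Theorem~\ref{thm:optimal_clusters} the family $\mathcal{H}$ is nested, so $\mathcal{P}_{i+1}$ arises from $\mathcal{P}_i$ by splitting a single cluster $C^{(i)}\in\mathcal{P}_i$ into its two children, whence
\[
  \Delta_i=\mathcal{L}_{i+1}-\mathcal{L}_i=-\,\cd\bigl(C^{(i)}\bigr).
\]
An optimal single center for $C$ restricts to a no-better single center for each of $C_1,C_2$ (this is also what Lemma~\ref{lma:optimal_subtrees} records, the costs growing towards the root), so $\cd(C)\ge 0$ and thus $\Delta_i\le 0$ — the easy half. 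It remains to show that $\cd\bigl(C^{(1)}\bigr)\ge\dots\ge\cd\bigl(C^{(n-1)}\bigr)$ and that once this sequence reaches $0$ it stays there; together with $\Delta_i=-\cd(C^{(i)})$ this is exactly the convexity asserted by the corollary.

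The crux is a second-order monotonicity lemma: viewing $\mathcal{H}$ as a binary rooted tree (as in the \emph{Takeaway} discussion), if $C'$ is a child of $C$, then $\cd(C')\le\cd(C)$. I would prove it from two facts available in the proof of Theorem~\ref{thm:optimal_clusters}: \textup{(i)} if disjoint leaf-sets $A,B$ have constant mutual LCA-distance $\delta$, then $\mathcal{L}(A\cup B)=\min\bigl(\mathcal{L}(A)+|B|\delta^z,\ \mathcal{L}(B)+|A|\delta^z\bigr)$; and \textup{(ii)} splitting a cluster into its two subtrees is its cost-optimal bipartition. Write $C=C_1\cup C_2$ with $C'=C_1$, cross-distance $D$, and $C_1=C_{11}\cup C_{12}$ with cross-distance $D_1\le D$. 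If the optimal center of $C$ lies in $C_2$, then $\mathcal{L}(C)=\mathcal{L}(C_2)+|C_1|D^z$, so $\cd(C)=|C_1|D^z-\mathcal{L}(C_1)$, and applying \textup{(i)} to $C_1$ with $D\ge D_1$ gives $|C_1|D^z-\mathcal{L}(C_1)\ge\cd(C_1)$. If instead the optimal center of $C$ lies in $C_1$, I compare the true bipartition $\{C_1,C_2\}$ against $\{C_{11},\,C_{12}\cup C_2\}$: by \textup{(ii)}, $\mathcal{L}(C_1)+\mathcal{L}(C_2)\le\mathcal{L}(C_{11})+\mathcal{L}(C_{12}\cup C_2)$, and as $C_{12}$ and $C_2$ likewise have mutual LCA-distance $D$, expanding $\mathcal{L}(C_{12}\cup C_2)$ via \textup{(i)} collapses this to $\cd(C_1)\le|C_2|D^z-\mathcal{L}(C_2)=\cd(C)$. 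Either way $\cd(C')\le\cd(C)$.

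With the lemma, the proof concludes by an exchange argument mirroring the algorithm. First, $\mathcal{H}$ is greedy: since $\mathcal{P}_{i+1}$ is an optimal $(i+1)$-clustering and splitting any $\cd$-maximal cluster of $\mathcal{P}_i$ is a feasible $(i+1)$-clustering, the cluster $C^{(i)}$ that $\mathcal{P}_{i+1}$ splits is itself $\cd$-maximal in $\mathcal{P}_i$. Now process $\mathcal{H}$ top-down, keeping the current frontier of clusters: step $i$ removes the $\cd$-maximal frontier cluster $C^{(i)}$ and inserts its two children, each of $\cd$ at most $\cd(C^{(i)})$ by the lemma, while every surviving frontier cluster already had $\cd\le\cd(C^{(i)})$; hence the frontier maximum never increases, i.e.\ $\cd(C^{(i+1)})\le\cd(C^{(i)})$. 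Moreover, if $\cd(C^{(i)})=0$, then every frontier cluster has $\cd=0$ (since $\cd\ge 0$), and by the lemma so do all their children, forcing $\cd(C^{(i+1)})=0$. This gives the two required properties. The main obstacle is the monotonicity lemma, and inside it the ``center in $C_1$'' case, which requires anticipating the right comparison bipartition — the one moving the center-free part of $C_1$ onto the other branch; the remaining manipulations are routine.
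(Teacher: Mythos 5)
Your high-level route is the paper's route in disguise: the paper proves this corollary in two lines by citing Lemma~\ref{lma:greedy_is_optimal} (the hierarchy splits, at each step, the cluster of maximal cost-decrease) and Lemma~\ref{lma:cost_decreases_increase} (cost-decreases grow towards the root, or are zero), and your reduction $\Delta_i=-\mathrm{cd}(C^{(i)})$, the exchange argument that the split cluster is $\mathrm{cd}$-maximal, and the frontier argument are re-derivations of exactly that machinery. The genuine gap is in your central monotonicity lemma. Its proof assumes that the two children $C_1,C_2$ of a hierarchy cluster lie at a \emph{single} constant mutual LCA-distance $D$ (and $C_{11},C_{12}$ at $D_1\le D$); this is what makes your fact (i) applicable and what collapses the ``center in $C_1$'' case. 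But the clusters produced by Theorem~\ref{thm:optimal_clusters} are not subtrees: under the marking procedure a cluster is a union of subtrees hanging off its center's marked path at different heights, and since the $(k,z)$ splits are ordered by cost-decrease rather than by node value, a cluster can retain subtrees attached at nodes of large value but small cost-decrease. Consequently, when $C$ is split into $C_2=\text{leaves}(T[\eta])$ and $C_1=C\setminus C_2$, the cross-distances from $C_1$ to $C_2$ range over the values of several path nodes, the identity $\mathcal{L}(C)=\mathcal{L}(C_2)+|C_1|D^z$ fails as written, and the expansion of $\mathcal{L}(C_{12}\cup C_2)$ in the second case fails for the same reason. Your fact (ii) (the hierarchy split is the cost-optimal bipartition of such a broom-shaped cluster) is also not stated anywhere in the paper and would itself need proof. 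The paper avoids all of this by attaching cost-decreases to LCA-tree \emph{nodes}, i.e.\ full subtrees measured against $|T[\eta]|\cdot d(\text{parent}(\eta))^z$, where the constant-distance structure does hold, and proving monotonicity there (Lemmas~\ref{lma:cost_decreases_ultrametric} and~\ref{lma:cost_decreases_increase} in Appendix~\ref{app:k_z_clustering_proof}); the corollary then follows by citation.

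A second, smaller shortfall: your frontier argument only yields $\Delta_i\le\Delta_{i+1}\le 0$ together with zero-propagation, whereas the statement (and the paper's proof, via the strict inequality in Lemma~\ref{lma:cost_decreases_increase}) asserts $\Delta_i<\Delta_{i+1}$ in the nonzero case. A maximum-over-the-frontier argument cannot give strictness by itself; you would additionally need the analogue of Lemma~\ref{lma:cost_decreases_increase}, namely that along any root-to-leaf chain of the hierarchy the cost-decreases strictly drop as long as they are positive. The pieces that are sound — the identification $\Delta_i=-\mathrm{cd}(C^{(i)})$ (which tacitly uses that each serving center is an optimal $1$-center of its own cluster, an easy exchange argument worth stating), the $\mathrm{cd}$-maximality of the split cluster, and the zero case — do mirror the paper, but the lemma they rest on is exactly the part your sketch does not establish.
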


The idea here is that $\Delta_i$ represents the elbow plot's first derivative at $k=i$. Thus, Corollary \ref{cor:elbow_plot} states that the elbow plot's slope is steepest at $k=1$ and monotonically levels out to 0 as $k \rightarrow n$. We prove this in Appendix \ref{app:ultrametric_elbow}, where we also specify how we determine the index of the elbow. We depict relaxed ultrametric elbow plots for $(k, z)$-clustering with $z=1, 2, 3, 4, 5$ in Figure \ref{fig:elbow_plot}. An alternative plot with the $x$-axis going to $n$ can be found in Figure \ref{fig:elbow_plot_until_n} in the Appendix.

\subsection{Additional Partitioning Techniques}
\label{ssec:known_partition_methods}

Beyond the elbow method, there are several other standard approaches for selecting partitions and augmenting the hierarchy which can be applied to our hierarchies in $O(n)$ time. These are standard in the literature and we discuss them in more depth in Appendix \ref{app:best_clustering}.

\paragraph{Thresholding the Tree.} One can threshold the cluster hierarchy at a user-defined value $\varepsilon$, as is done in single-linkage clustering or DBSCAN \cite{beer2023connecting}. This involves labeling internal nodes of the tree by their costs and returning all clusters with costs below $\varepsilon$, thus extracting partitions based on a similarity threshold rather than a specific $k$ value. This naturally extends to the $(k, z)$ setting: simply return the non-overlapping nodes in the $(k, z)$-clustering hierarchy below some cost threshold.

\paragraph{Cluster Value Functions.} Rather than selecting based on costs, one can also assign a new value function to clusters and then choose the set of non-overlapping clusters that maximizes the sum of these values. For example, HDBSCAN uses the \emph{stability} objective to measure how well clusters persist across the largest range of threshold values. Similarly, techniques in hierarchical segmentation define an energy function over the clusters \citep{climbing_energies_1, climbing_energies_2}. We show in Appendix \ref{app:cluster_merging_rules} that, given any reasonable such function, one can find its maximizing partition in $O(n)$ time via depth-first search.

\paragraph{Handling Noise Points.} For applications requiring noise handling, subtrees consisting of outlier points can be pruned from the hierarchy without compromising the underlying ultrametric properties. For instance, with a minimum-cluster-size parameter $\mu$, clusters smaller than $\mu$ can be removed in $O(n)$ time, ensuring all clusters in the final partition meet size requirements \citep{hdbscan, k-center-q-coverage}. In practice, HDBSCAN's stability objective function is applied after noise points have been removed in this way.

\subsection{Integrating Multiple Partitioning Methods} \label{sec:integrating_multiple_partition_methods}

A key advantage of our approach is that, because the hierarchies and partitions can be extracted so quickly (as seen in Figure \ref{fig:runtime_barplot}), multiple hierarchies and partitions can be integrated with negligible runtime impact. We therefore introduce the \emph{Median-of-Elbows} (MoE) algorithm which we find to work well in practice. After fitting an ultrametric, MoE computes $(k, z)$-clustering hierarchies for $z = 1,2,3,4,5$ and applies the elbow method to each hierarchy. It then selects the median $k$ value from this set, representing the clustering cardinality that remains stable across different distance penalty settings. One can then use this $k$ value to extract the clustering from any hierarchy. Figure \ref{fig:elbow_plot} visualizes this process on the D31 dataset, where MoE selects $k=14$.
\section{From Theory to Practice}

\begin{wraptable}{r}{0.5\linewidth}
\vspace{-1.3em}
\caption{Example ultrametric, hierarchy, and partitioning options. Our SHiP framework allows one to pick any ultrametric, pair it with any clustering hierarchy, and extract any partition.}
\label{tbl:combinations}
\centering
\renewcommand{\arraystretch}{0.98}
\resizebox{\linewidth}{!}{
\begin{tabular}{c|c|c}
\textbf{Ultrametrics} & \textbf{Hierarchies} & \textbf{Partitioning Methods}\\
\toprule
DC tree & $k$-center & Ground Truth $k$ (GT) \\
Single Linkage tree & $k$-median & User-specified $k$ \\
Complete Linkage tree & $k$-means & Elbow \\
Cover tree & \dots & Median of Elbows (MoE) \\
KD tree & & Thresholding \\
HST-DPO & & Stability \\
& & \dots
\end{tabular}%
}
\renewcommand{\arraystretch}{1.0}
\end{wraptable}

Sections \ref{sec:ultrametrics}-\ref{sec:best_clustering} introduce the components of our SHiP (Similarity-Hierarchy-Partitioning) clustering framework: fitting an ultrametric to capture the data similarities, choosing a hierarchy, and partitioning the data. Table \ref{tbl:combinations} outlines various [ultrametric/hierarchy/partition] combinations. For example, [DC/$k$-median/thresholding] means that we are using the thresholding partition technique on the dc-dist's $k$-median hierarchy. In the following, we demonstrate that the SHiP framework includes diverse, effective clustering strategies and enables quickly finding the various data groupings that can be extracted from an ultrametric. This is particularly valuable for exploratory data analysis, where quickly switching between clustering methods and results is beneficial.

\subsection{Ultrametrics}
\label{ssec:ultrametrics}

We now formally introduce the two families of ultrametrics which we use to evaluate our framework: HSTs and the density-connectivity distance. We note, however, that one can also apply our techniques to nearly the full suite of agglomerative clustering hierarchies \cite{ultrametric_single_linkage}, which we leave for future work.

\paragraph{Hierarchically Well-Separated Trees (HSTs).}
\label{ssec:kd_tree}

HSTs recursively subdivide metric spaces into trees such that each internal node is equidistant to all its descendant leaves. The distances in the metric space are then approximated using the tree's path distance. We use three types of HST:

\begin{enumerate}[leftmargin=0.6cm]
    \item \textbf{Cover trees} \cite{cover_tree} define nested (potentially non-convex) cells so that points $p$ in cell $L_{i-1}$ and $q \in L_i$ (where $L_i \subset L_{i-1}$) satisfy $d(p,q) < 2^i$.
    \item \textbf{KD trees} \cite{kd_tree} recursively subdivide the space into axis-aligned hypercubes.
    \item \textbf{HST-DPOs} \cite{hst_modern} achieve optimal distance distortion by subdividing the space using metric balls.
\end{enumerate}

Figure \ref{fig:exp_ablation_ultrametrics} demonstrates how the same hierarchy+partition ($k$-median with ground truth $k$) behaves across different HSTs (Cover tree, KD tree, HST-DPO), revealing how the underlying tree structure shapes clustering outcomes. Namely, Cover trees construct non-convex regions of similarity, KD trees produce axis-aligned clusterings, while HST-DPOs use spherical divisions. The first two require $O(n \log n)$ construction time (assuming constant dimensionality), while HST-DPOs achieve optimal distortion at the expense of $O(n^2)$ runtime. Although HSTs traditionally utilize the tree's path distances, this can be transformed to an LCA-tree representation in $O(n)$ time by assigning $d(\eta) = 2 \cdot d_{HST}(\eta, \ell)$ for internal node $\eta$ with any descendant leaf $\ell$. Going forward, we use Cover trees as our default HST, with KD trees and HST-DPOs extensively compared in Appendix \ref{app:experiments}.



\begin{figure*}[t]
    \centering \hspace*{-0.65cm}
    \resizebox{\linewidth}{!}{%
    \begin{subfigure}[t]{0.5\textwidth}
        \centering
        \captionsetup{oneside,margin={0.6cm,0cm}}
        \begin{tikzpicture}
            \tikzmath{
            	\xmax = 3.3;
                \ymax = 1.38;
            }
            \node[inner sep=0pt] (img1) {
                \rotatebox{90}{
                    \includegraphics[width=0.38\linewidth, trim={28.72cm 0 0 0}, clip]{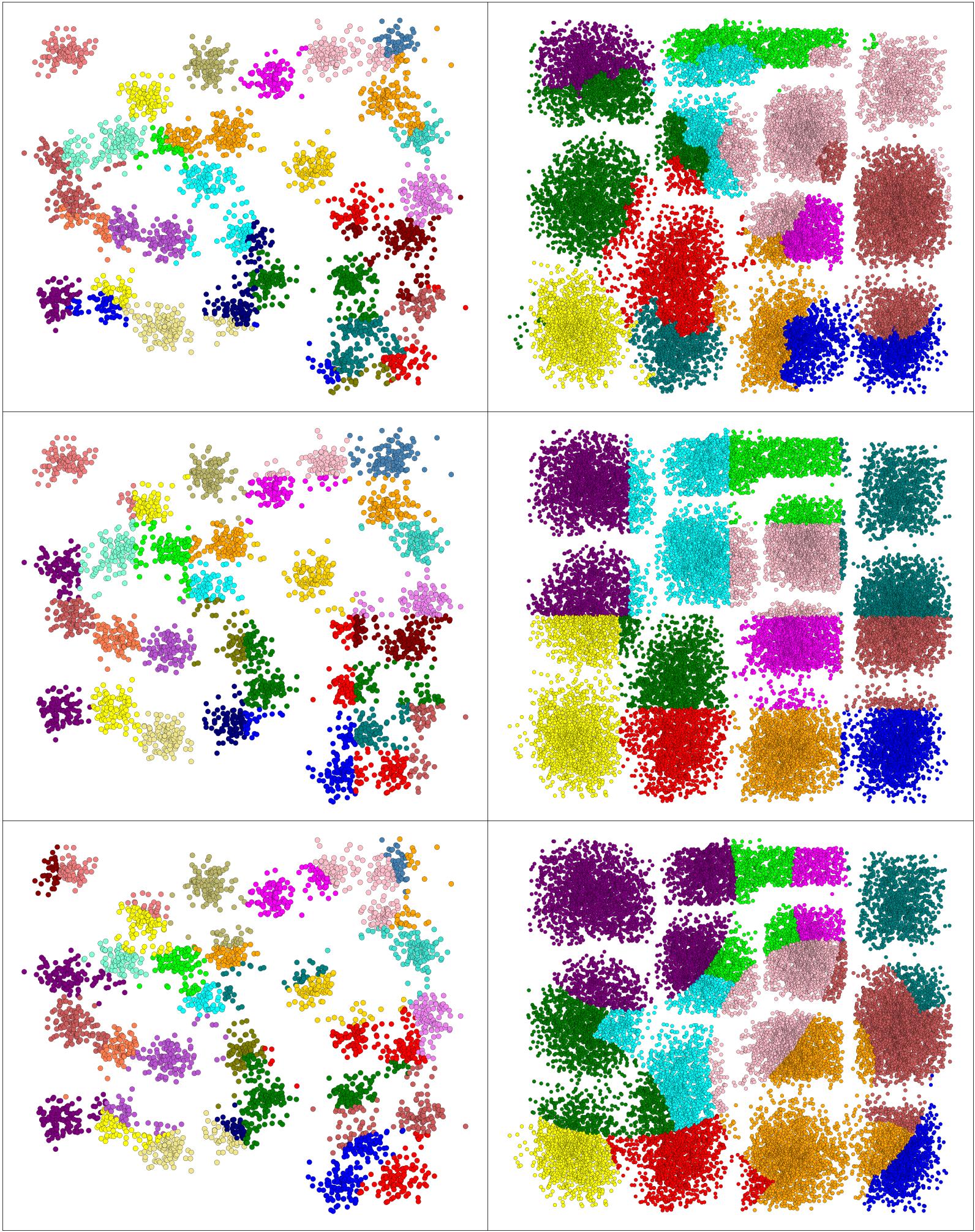}
                }
            };
            \node (labels) at ($(-0.67*\xmax, 1.33*\ymax)$) {\textcolor{darkgray}{\footnotesize Cover tree}};
            \node (labels) at ($(0, 1.33*\ymax)$) {\textcolor{darkgray}{\footnotesize KD tree}};
            \node (labels) at ($(0.67*\xmax, 1.33*\ymax)$) {\textcolor{darkgray}{\footnotesize HST-DPO}};

            \node (labels) at ($(-0.67*\xmax, 1.09*\ymax)$) {\textcolor{darkgray}{\footnotesize $k$-median/GT}};
            \node (labels) at ($(0, 1.09*\ymax)$) {\textcolor{darkgray}{\footnotesize $k$-median/GT}};
            \node (labels) at ($(0.67*\xmax, 1.09*\ymax)$) {\textcolor{darkgray}{\footnotesize $k$-median/GT}};
    
            \node () at ($(-1.1*\xmax, 0*\ymax)$) {\large \textcolor{darkgray}{\rotatebox{90}{Boxes}}};
        \end{tikzpicture}
        \caption{Comparison of HST ultrametrics using the $k$-median hierarchy and the ground-truth value of $k$.}
        \label{fig:exp_ablation_ultrametrics}
    \end{subfigure}
    \quad
    \begin{subfigure}[t]{0.46\textwidth}
        \centering
        \captionsetup{oneside,margin={0.3cm,0cm}}
        \begin{tikzpicture}
            \tikzmath{
            	\xmax = 3.3;
                \ymax = 1.38;
            }
            \node[inner sep=0pt] (img2) {
                \rotatebox{90}{
                    \includegraphics[width=0.4122\linewidth, trim={28.72cm 0 0 0}, clip]{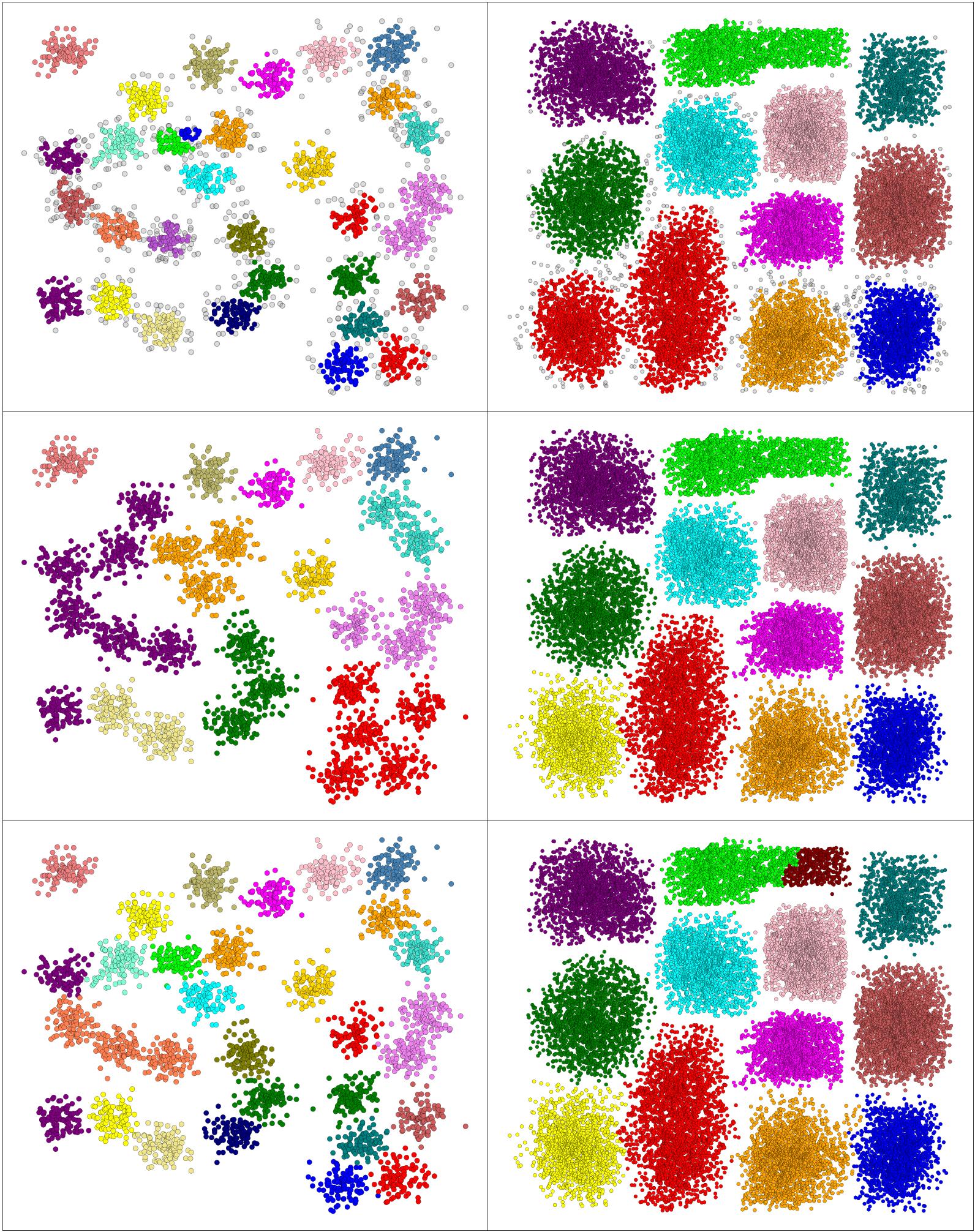}
                }
            };
            \node (labels) at ($(-0.665*\xmax, 1.33*\ymax)$) {\textcolor{darkgray}{\footnotesize DC tree}};
            \node (labels) at ($(0, 1.33*\ymax)$) {\textcolor{darkgray}{\footnotesize DC tree}};
            \node (labels) at ($(0.665*\xmax, 1.33*\ymax)$) {\textcolor{darkgray}{\footnotesize DC tree}};

            \node (labels) at ($(-0.665*\xmax, 1.09*\ymax)$) {\textcolor{darkgray}{\footnotesize $k$-center/Stability}};
            \node (labels) at ($(0, 1.09*\ymax)$) {\textcolor{darkgray}{\footnotesize $k$-median/MoE}};
            \node (labels) at ($(0.665*\xmax, 1.09*\ymax)$) {\textcolor{darkgray}{\footnotesize $k$-means/Elbow}};

        \end{tikzpicture}
        \caption{Comparison of hierarchy/partition combinations on the dc-dist ultrametric.}
        \label{fig:exp_ablation_partitioning}
    \end{subfigure}\vspace*{-0.1cm}
    }
    \caption{Visualizations of the ultrametrics and hierarchy/partition combinations on the Boxes dataset.}
    \label{fig:combined_ablation}
\end{figure*}

\paragraph{Density-Connectivity Distance (dc-dist).}
\label{ssec:dc_experiments}

The second ultrametric we use, the dc-dist, forms the foundation of density-connected clustering algorithms such as DBSCAN and HDBSCAN. In short, it is the single-linkage distance over the pairwise \emph{mutual-reachabilities} between the points:

\begin{definition}[Mutual reachability \citep{dbscan}]
    Let $(L, d')$ be a metric space, $x$ and $y$ be any two points in $L$, and $\mu \in \mathbb{Z}_{>0}$. Let $\kappa_{\mu}(x)$ be the distance from $x$ to its $\mu$-th closest neighbor in $L$. Then the \emph{mutual reachability} between $x$ and $y$ is $m_{\mu}(x, y) = \max(d'(x, y), \kappa_{\mu}(x), \kappa_{\mu}(y))$.
\end{definition}

\begin{definition}[dc-dist \citep{beer2023connecting}]
    \label{def:dc-dist}
    Let $(L, d')$ be a metric space, $x$ and $y$ be any two points in $L$, and $\mu \in \mathbb{Z}_{>0}$. Let $T$ be a minimum spanning tree over $L$'s pairwise mutual reachabilities. Let $p(x, y)$ be the path in $T$ from $x$ to $y$ given by edges $\{e_i, \ldots, e_j\}$ in $T$. Lastly, let $|e|$ be the weight of any edge $e$ in $T$. Then the \emph{dc-dist} between $x$ and $y$ is defined as
    \[ d_{dc}(x, y) = \max_{e \in p(x, y)} |e| \text{ if } x \neq y; \,\, \text{else } \, m_{\mu}(x, x).\]
\end{definition}

We note that \citet{beer2023connecting} introduced a strictly ultrametric variant of the dc-dist where the distance of a point to itself was hardcoded to be 0. However, removing this condition as we do in Definition~\ref{def:dc-dist} simply makes the dc-dist a relaxed ultrametric (proof in \ref{app:dc_dist_ultrametric}):

\begin{restatable}{proposition}{DcDistUltra}
    \label{fact:dc_dist_ultrametric}
    Let $(L, d')$ be a metric space. Then, the dc-dist over $L$ is a relaxed ultrametric.
\end{restatable}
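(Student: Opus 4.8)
The plan is to show that $d_{dc}$ satisfies the two conditions of Definition~\ref{def:relaxed_ultrametric}: symmetry-with-nonnegativity, and the strong triangle inequality. Nonnegativity is immediate since $d_{dc}$ is either a maximum of edge weights in the MST (each a mutual reachability $m_\mu \geq 0$) or $m_\mu(x,x) \geq 0$. Symmetry is also immediate: for $x \neq y$ the path $p(x,y)$ in the MST is the same set of edges as $p(y,x)$, so the maxima coincide; and $m_\mu(x,x)$ is trivially symmetric. So the entire content is the strong triangle inequality $d_{dc}(x,z) \leq \max(d_{dc}(x,y), d_{dc}(y,z))$ for arbitrary $x,y,z \in L$.

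For the strong triangle inequality I would split into cases according to which of $x,y,z$ coincide. The main case is $x,y,z$ pairwise distinct. Here I would invoke the standard fact that the minimax (bottleneck) distance through any graph — in particular through the MST, which realizes the bottleneck distances of the complete graph on $L$ with mutual-reachability weights — is itself an ultrametric on distinct points; concretely, the path $p(x,z)$ can be obtained by concatenating $p(x,y)$ and $p(y,z)$ (possibly with cancellation of a shared sub-path), so every edge on the unique tree-path from $x$ to $z$ lies on $p(x,y) \cup p(y,z)$, giving $\max_{e \in p(x,z)} |e| \leq \max\bigl(\max_{e \in p(x,y)} |e|, \max_{e \in p(y,z)} |e|\bigr)$. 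This is exactly the claim. The remaining cases involve at least one equality among $x,y,z$ and must be handled because $d_{dc}(x,x) = m_\mu(x,x)$ rather than $0$ — this is precisely the subtlety the paper flags. If $x = z$, we need $m_\mu(x,x) \leq \max(d_{dc}(x,y), d_{dc}(y,x))$ for all $y$; when $y \neq x$ the right side is $d_{dc}(x,y) = \max_{e \in p(x,y)} |e|$, and since the first edge $e$ on that path is incident to $x$ it has weight $\geq m_\mu(x, \cdot) \geq \kappa_\mu(x)$... wait, more carefully: every edge incident to $x$ in the mutual-reachability graph has weight $m_\mu(x,w) = \max(d'(x,w), \kappa_\mu(x), \kappa_\mu(w)) \geq \kappa_\mu(x) = m_\mu(x,x)$, so $d_{dc}(x,y) \geq m_\mu(x,x)$, as needed; when $y = x$ it is trivial. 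The cases $x = y \neq z$ and $y = z \neq x$ reduce to checking $d_{dc}(x,z) \leq \max(m_\mu(x,x), d_{dc}(x,z)) = d_{dc}(x,z)$ (or the symmetric version), which is trivial, but one should still double-check that $m_\mu(x,x) \le d_{dc}(x,z)$ is not needed — it isn't, since $d_{dc}(x,z)$ already appears on the right.

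I expect the main obstacle to be the bottleneck-is-ultrametric step stated cleanly: one must justify that the MST-path maximum equals the complete-graph bottleneck distance and that bottleneck distances concatenate correctly. The cleanest route is to cite the standard minimax/MST property (the paper already references \citet{minimax_distance} for exactly this) and then note that the edge set of the $x$–$z$ tree-path is contained in the union of the edge sets of the $x$–$y$ and $y$–$z$ tree-paths — a purely combinatorial fact about paths in a tree through an intermediate vertex. Everything else is bookkeeping over the degenerate cases, whose only non-obvious ingredient is the inequality $m_\mu(x,w) \geq m_\mu(x,x)$ for all $w$, which follows directly from the definition of mutual reachability.
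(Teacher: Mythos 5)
Your proposal is correct and follows essentially the same route as the paper: both reduce the off-diagonal case to the known fact that minimax (bottleneck) distances over the mutual-reachability graph form an ultrametric, and both isolate the only new ingredient — that $d_{dc}(x,x)=\kappa_\mu(x)$ is non-negative and dominated by every $d_{dc}(x,y)$ because each edge incident to $x$ has mutual-reachability weight at least $\kappa_\mu(x)$. The only difference is presentational: you verify Definition~\ref{def:relaxed_ultrametric} by explicit case analysis, whereas the paper packages the same two facts as the conditions of Corollary~\ref{cor:ultrametric_lca}.
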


We refer to the dc-dist's LCA-tree representation as the DC tree. Note that for $\mu = 1$, the dc-dist is simply the single-link distance. We default to $\mu = 5$ for all experiments. Within our SHiP framework, we can \emph{exactly} reproduce DBSCAN$^*$\footnote{DBSCAN$^*$ is a variant of DBSCAN which treats points on the borders of clusters as noise \cite{hdbscan}.} through [DC tree/$k$-center/thresholding] and HDBSCAN via [DC tree/$k$-center/stability].

\subsection{Hierarchies and Partitioning Methods}\label{ssec:partitioning}

Having fit an ultrametric, the SHiP framework allows users to explore multiple alternative groupings of the data through its many hierarchies and partitions. Each hierarchy represents a different perspective on how points in the ultrametric space should be organized, essentially defining a search space of possible clusterings. Partitioning methods then extract specific clusterings from these hierarchies, highlighting different structural characteristics. Our experiments focus on three illustrative hierarchy/partition combinations: $k$-center/stability, $k$-median/MoE and $k$-means/elbow.

Figure \ref{fig:exp_ablation_partitioning} offers an illustrative example of how these hierarchies and partitions interact. For example, the $k$-center hierarchy merges clusters based on distances to their centers, regardless of point count. When combined with the stability objective, this finds clusters which are well-separated over a wide range of thresholds. Consequently, in the Boxes dataset, $k$-center/stability merges the two red boxes since they share a density-connected path (and removes some outliers as noise).

In contrast, the $k$-median and $k$-means hierarchies strike a balance between the distances to the centers and the cluster cardinalities. For example, under the dc-dist, the $k$-median and $k$-means hierarchies evaluate how ``spread out'' a cluster is: the cost is low when there are a few points which are densely connected (have small gaps between dense regions). This explains why $k$-median/MoE and $k$-means/elbow separate the two red boxes in the last two columns of Figure \ref{fig:exp_ablation_partitioning} that $k$-center merges: they contained too many points with too large a gap between them. Figure \ref{fig:combined_ablation_d31} in the Appendix shows an example on the D31 dataset with more pronounced differences between the methods.

\section{Experiments}\label{sec:experiments}

We now verify the practical utility of our SHiP clustering framework by showcasing its speed and competitive outputs across ultrametrics and datasets. Table \ref{tbl:dataset_overview} in the Appendix gives an overview of the datasets we use. We evaluate the clustering quality with the adjusted rand index (ARI) \citep{ari}, treating points labeled as noise as singleton clusters. NMI \citep{nmi}, AMI \cite{ami} and correlation coefficient \cite{corr_coefficient} results can be found in Appendix \ref{app:experiments}. Both our runtime and accuracy\footnote{For non-deterministic algorithms, we also provide the standard deviation for the clustering accuracy.} tables report the mean over 10 runs. 
All experiments were performed on 2x Intel 6326 with 16 cores each and 512GB RAM.

\begin{table*}[t]
    \centering
    \caption{
        Runtimes of our SHiP framework's components (first three column groups) and competitors (last column group) in (minutes+)seconds (\emph{s}), or milliseconds (\emph{ms}).
        Computation times of the ultrametrics are comparable to the runtimes of the competitor algorithms. I.e., building the DC tree is on par with other density-based methods (highlighted in blue). Computation of the cluster hierarchies and partitioning methods then takes only \emph{milliseconds}. Full version: Table 5 in the Appendix.
    }
    \label{tbl:runtimes}
    \renewcommand{\arraystretch}{1.2}

\resizebox{1\linewidth}{!}{
\begin{NiceTabular}[color-inside]{|rl|rrrrrr|rrrrrr|}
\CodeBefore
    \rowcolors{4}{gray!20}{white}[cols=2-*]
\Body
\toprule
& & \multicolumn{2}{c}{\textbf{ultrametric}} & \multicolumn{1}{c}{\textbf{hier.}} & \multicolumn{3}{c}{\textbf{partitioning}} & & & \multicolumn{2}{c}{\textbf{competitors}} & & \\
& & \multicolumn{1}{c}{\multirow{2}{*}{Cover tree}} & \multicolumn{1}{c}{\multirow{2}{*}{\cellcolor{blue!16}{DC tree}}} & \multirow{2}{*}{$k$-means} & \multirow{2}{*}{Stab.} & \multicolumn{1}{c}{\multirow{2}{*}{MoE}} & \multirow{2}{*}{Elbow} & \multicolumn{1}{c}{$k$-means} & \multicolumn{1}{c}{$k$-means} & \multicolumn{1}{c}{\multirow{2}{*}{SCAR}} & \multicolumn{1}{c}{\multirow{2}{*}{Ward}} & \multicolumn{1}{c}{\cellcolor{blue!16}{AMD-}} & \multicolumn{1}{c}{\multirow{2}{*}{\cellcolor{blue!16}{DPC}}}\\
& \textbf{Dataset} & & \cellcolor{blue!16} & & & & & \multicolumn{1}{c}{($k=\text{GT}$)} & \multicolumn{1}{c}{($k=500$)} & & & \multicolumn{1}{c}{\cellcolor{blue!16}{DBSCAN}} & \cellcolor{blue!16} \\
\midrule
\parbox[t]{2mm}{\multirow{7}{*}{\rotatebox[origin=c]{90}{Tabular Data}}}

& Boxes & 0.059\,\emph{s} & 14.239\,\emph{s} & 33\,\emph{ms} & 3\,\emph{ms} & 134\,\emph{ms} & 2\,\emph{ms} & 0.114\,\emph{s} & 1.217\,\emph{s} & 1.481\,\emph{s} & 10.808\,\emph{s} & 1+04.670\,\emph{s} & 12+10.358\,\emph{s} \\
& D31 & 0.004\,\emph{s} & 0.327\,\emph{s} & 4\,\emph{ms} & 0\,\emph{ms} & 15\,\emph{ms} & 0\,\emph{ms} & 0.289\,\emph{s} & 0.509\,\emph{s} & 0.620\,\emph{s} & 0.153\,\emph{s} & 0.878\,\emph{s} & 12.816\,\emph{s} \\
& airway & 0.027\,\emph{s} & 4.997\,\emph{s} & 21\,\emph{ms} & 1\,\emph{ms} & 85\,\emph{ms} & 1\,\emph{ms} & 0.122\,\emph{s} & 0.392\,\emph{s} & 0.651\,\emph{s} & 5.243\,\emph{s} & 16.822\,\emph{s} & 6+46.726\,\emph{s} \\
& lactate & 0.161\,\emph{s} & 47.731\,\emph{s} & 68\,\emph{ms} & 6\,\emph{ms} & 275\,\emph{ms} & 4\,\emph{ms} & 0.126\,\emph{s} & 1.997\,\emph{s} & 2.612\,\emph{s} & 59.992\,\emph{s} & 7+43.293\,\emph{s} & 69+34.012\,\emph{s} \\
& HAR & 11.053\,\emph{s} & 23.144\,\emph{s} & 14\,\emph{ms} & 1\,\emph{ms} & 56\,\emph{ms} & 0\,\emph{ms} & 2.248\,\emph{s} & 8.478\,\emph{s} & 0.658\,\emph{s} & 18.448\,\emph{s} & 30.129\,\emph{s} & 3+45.745\,\emph{s} \\
& letterrec. & 0.322\,\emph{s} & 9.076\,\emph{s} & 22\,\emph{ms} & 2\,\emph{ms} & 87\,\emph{ms} & 1\,\emph{ms} & 3.386\,\emph{s} & 3.521\,\emph{s} & 2.133\,\emph{s} & 10.309\,\emph{s} & 37.055\,\emph{s} & 11+42.910\,\emph{s} \\
& PenDigits & 0.067\,\emph{s} & 2.566\,\emph{s} & 13\,\emph{ms} & 1\,\emph{ms} & 53\,\emph{ms} & 0\,\emph{ms} & 1.001\,\emph{s} & 2.725\,\emph{s} & 0.490\,\emph{s} & 3.281\,\emph{s} & 9.254\,\emph{s} & 3+21.409\,\emph{s} \\

\midrule
\parbox[t]{2mm}{\multirow{6}{*}{\rotatebox[origin=c]{90}{Image Data}}}

& COIL20 & 3.658\,\emph{s} & 14.817\,\emph{s} & 1\,\emph{ms} & 0\,\emph{ms} & 4\,\emph{ms} & 0\,\emph{ms} & 1.637\,\emph{s} & 13.524\,\emph{s} & 0.310\,\emph{s} & 8.941\,\emph{s} & 2.378\,\emph{s} & 11.710\,\emph{s} \\
& COIL100 & 3+52.397\,\emph{s} & 14+50.661\,\emph{s} & 10\,\emph{ms} & 0\,\emph{ms} & 39\,\emph{ms} & 0\,\emph{ms} & 32.122\,\emph{s} & 1+59.225\,\emph{s} & 7.964\,\emph{s} & 12+04.037\,\emph{s} & 2+58.177\,\emph{s} & 14+18.780\,\emph{s} \\
& cmu\_faces & 0.046\,\emph{s} & 0.238\,\emph{s} & 1\,\emph{ms} & 0\,\emph{ms} & 2\,\emph{ms} & 0\,\emph{ms} & 0.206\,\emph{s} & 0.661\,\emph{s} & 0.116\,\emph{s} & 0.064\,\emph{s} & 0.346\,\emph{s} & 0.515\,\emph{s} \\
& OptDigits & 0.335\,\emph{s} & 1.420\,\emph{s} & 6\,\emph{ms} & 0\,\emph{ms} & 24\,\emph{ms} & 0\,\emph{ms} & 0.502\,\emph{s} & 1.154\,\emph{s} & 0.290\,\emph{s} & 0.974\,\emph{s} & 3.073\,\emph{s} & 44.270\,\emph{s} \\
& USPS & 2.924\,\emph{s} & 8.670\,\emph{s} & 11\,\emph{ms} & 1\,\emph{ms} & 45\,\emph{ms} & 0\,\emph{ms} & 2.709\,\emph{s} & 8.493\,\emph{s} & 1.433\,\emph{s} & 6.092\,\emph{s} & 29.259\,\emph{s} & 1+48.607\,\emph{s} \\
& MNIST & 16+24.220\,\emph{s} & 37+05.095\,\emph{s} & 120\,\emph{ms} & 15\,\emph{ms} & 491\,\emph{ms} & 8\,\emph{ms} & 4.320\,\emph{s} & 3+29.969\,\emph{s} & 15.352\,\emph{s} & 19+02.498\,\emph{s} & 17+21.183\,\emph{s} & - \\

\bottomrule
\CodeAfter
  \tikz \draw [dotted] 
  (1-|5) -- (last-|5)
  (1-|6) -- (last-|6)
  (2-|11) -- (last-|11)
  (2.5-|12) -- (last-|12)
  (2-|13) -- (last-|13);
\end{NiceTabular}}

\renewcommand{\arraystretch}{1}

\end{table*}

\subsection{Runtimes}\label{ssec:runtime}

In practice, clustering is an exploratory data mining task that requires several runs of different methods or using different hyperparameter settings. Especially when done sequentially, this requires substantial computational time and resources.
In contrast, our SHiP clustering framework requires just a single upfront ultrametric computation, after which we can generate a myriad of different clustering solutions with negligible additional runtime.
Table \ref{tbl:runtimes} and \Cref{fig:runtime_barplot} highlight this efficiency. The first column group in \Cref{tbl:runtimes} shows that computing our ultrametrics (Cover tree and DC tree) has a runtime comparable to that of the corresponding standard clustering algorithms (last column group). 
Specifically, we show on the right (under competitors) the time required for Euclidean $k$-means with ground-truth $k$, and Euclidean $k$-means with $k=500$,\footnote{Note that the SHiP clustering framework simultaneously obtains the optimal solutions for every value of $k$.} Ward agglomerative clustering~\cite{hierarchical_clustering_og}, an adaptive multi-density DBSCAN version (AMD-DBSCAN)~\cite{Wang2022AMDDBSCANAA}, density peaks clustering (DPC)~\cite{density-peaks}, and an accelerated version of spectral clustering (SCAR)~\cite{hohma2022scar}. 
However, once the ultrametric has been computed, generating different hierarchies (column group 2) and partitioning methods (column group 3) requires only \emph{milliseconds}. Thus, users can explore many different clustering solutions in essentially the same time it takes to run a traditional clustering algorithm.

\subsection{Clustering Quality}\label{sec:exp:high_quality_benchmarks}

This efficiency in switching between clusterings is particularly valuable given the results shown in Table~\ref{tbl:experiments_ari}. Here, we study the quality of $k$-center/stability, $k$-median/MoE, and $k$-means/elbow on the DC- and Cover tree ultrametrics and compare them against the competitor algorithms. Importantly, \mbox{$k$-means}, Ward, and SCAR are all given the ground-truth $k$ to maximize their competitiveness. 
In comparison, even in the more realistic setting where the true number of clusters is unknown, we can achieve better clustering performance than they do using the DC tree. However, there is no best hierarchy/partitioning combination. Although $k$-means/elbow performs best in many cases, it is not universally superior to the other combinations; different pairings excel on different datasets. This underscores the value of rapidly switching between different hierarchies and also partitioning methods. 

Notably, Cover~tree combinations consistently perform worse than Euclidean $k$-means, with comparable results for KD trees and HST-DPOs in Appendix \ref{app:experiments}. This raises the question of whether HSTs are always well-suited for representing the underlying cluster structure in data, and under what conditions their use can effectively accelerate machine learning pipelines \cite{hierarchical_kmedian, cover_tree_k_means, acceleratedHDBSCAN, tsne_fast}.

If the ground truth number of clusters is given to our framework, $k$-median and $k$-means over the dc-dist (essentially density-connected $k$-median and $k$-means) achieve even slightly better results (see Table 6; Appendix).
Furthermore, we point out that Ward agglomerative clustering is ultrametric in nature and, therefore, falls under the umbrella of our framework \citep{ultrametric_single_linkage}.    

\begin{table*}[thb]
    \centering
    \caption{
    ARI values for the SHiP framework on the DC and Cover tree ultrametrics (resp. first/second column group). ARI values for competitor algorithms are in the third column group. Euclidean $k$-means, SCAR, and Ward are given the ground-truth $k$ value.
    Full version: Table 6 in the Appendix. 
    }
    \label{tbl:experiments_ari}
    \renewcommand{\arraystretch}{1.05}

\resizebox{\linewidth}{!}{
\begin{NiceTabular}[color-inside]{|rl|cccccc|ccccc|}
\toprule
& & \multicolumn{3}{c}{\textbf{DC tree}} & \multicolumn{3}{c}{\textbf{Cover tree}} & \multicolumn{5}{c}{\textbf{competitors}} \\ 
& & $k$-center & $k$-median & $k$-means & $k$-center & $k$-median & $k$-means & Eucl. & \multirow{2}{*}{SCAR} & \multirow{2}{*}{Ward} & AMD- & \multirow{2}{*}{DPC} \\ 
& \textbf{Dataset} & Stability & MoE & Elbow & Stability & MoE & Elbow & $k$-means &  &  & DBSCAN & \\
\midrule
\parbox[t]{2mm}{\multirow{7}{*}{\rotatebox[origin=c]{90}{Tabular Data}}}
& Boxes & \cellcolor{Green!63}$90.1$ & \cellcolor{Green!70}$\bm{99.3}$ & \cellcolor{Green!69}$\underline{97.9}$ & \cellcolor{Green!6}$2.6$ & \cellcolor{Green!32}$42.1 \pm 4.7$ & \cellcolor{Green!20}$24.2 \pm 1.6$ & \cellcolor{Green!66}$93.5 \pm 4.3$ & \cellcolor{Green!5}$0.1 \pm 0.1$ & \cellcolor{Green!67}$95.8$ & \cellcolor{Green!46}$63.9$ & \cellcolor{Green!21}$25.9$  \\
& \cellcolor{Gray!18}{D31} & \cellcolor{Green!57}$79.7$ & \cellcolor{Green!32}$42.7$ & \cellcolor{Green!59}$82.9$ & \cellcolor{Green!35}$46.5 \pm 1.8$ & \cellcolor{Green!45}$62.0 \pm 5.4$ & \cellcolor{Green!49}$67.7 \pm 3.2$ & \cellcolor{Green!65}$\bm{92.0 \pm 2.7}$ & \cellcolor{Green!32}$41.7 \pm 5.4$ & \cellcolor{Green!65}$\bm{92.0}$ & \cellcolor{Green!61}$\underline{86.4}$ & \cellcolor{Green!17}$18.5$  \\
& airway & \cellcolor{Green!29}$38.0$ & \cellcolor{Green!48}$\bm{65.9}$ & \cellcolor{Green!43}$58.8$ & \cellcolor{Green!5}$0.8$ & \cellcolor{Green!16}$18.2 \pm 2.4$ & \cellcolor{Green!12}$12.0 \pm 1.4$ & \cellcolor{Green!31}$39.9 \pm 2.0$ & \cellcolor{Red!5}$-0.9 \pm 0.5$ & \cellcolor{Green!33}$43.7$ & \cellcolor{Green!25}$31.7$ & \cellcolor{Green!47}$\underline{65.1}$  \\
& \cellcolor{Gray!18}{lactate} & \cellcolor{Green!31}$41.0$ & \cellcolor{Green!31}$41.0$ & \cellcolor{Green!49}$\underline{67.5}$ & \cellcolor{Green!5}$0.1$ & \cellcolor{Green!7}$4.1 \pm 0.6$ & \cellcolor{Green!6}$1.7 \pm 0.2$ & \cellcolor{Green!23}$28.6 \pm 1.1$ & \cellcolor{Green!5}$1.5 \pm 1.0$ & \cellcolor{Green!23}$27.7$ & \cellcolor{Green!51}$\bm{71.5}$ & \cellcolor{Green!5}$0.0$  \\
& HAR & \cellcolor{Green!24}$30.0$ & \cellcolor{Green!35}$46.9$ & \cellcolor{Green!39}$\bm{52.8}$ & \cellcolor{Green!14}$14.7 \pm 8.8$ & \cellcolor{Green!14}$14.2 \pm 4.7$ & \cellcolor{Green!11}$9.6 \pm 2.2$ & \cellcolor{Green!35}$46.0 \pm 4.5$ & \cellcolor{Green!8}$5.5 \pm 3.2$ & \cellcolor{Green!37}$\underline{49.1}$ & \cellcolor{Green!5}$0.0$ & \cellcolor{Green!26}$33.2$  \\
& \cellcolor{Gray!18}{letterrec.} & \cellcolor{Green!12}$12.1$ & \cellcolor{Green!15}$\underline{16.6}$ & \cellcolor{Green!16}$\bm{17.9}$ & \cellcolor{Green!8}$5.8 \pm 0.2$ & \cellcolor{Green!9}$7.2 \pm 0.6$ & \cellcolor{Green!9}$6.2 \pm 0.3$ & \cellcolor{Green!13}$12.9 \pm 0.6$ & \cellcolor{Green!5}$0.4 \pm 0.1$ & \cellcolor{Green!14}$14.7 \pm 0.9$ & \cellcolor{Green!10}$7.9$ & \cellcolor{Green!5}$0.0$  \\
& PenDigits & \cellcolor{Green!48}$66.4$ & \cellcolor{Green!52}$\underline{73.1}$ & \cellcolor{Green!54}$\bm{75.4}$ & \cellcolor{Green!10}$8.0 \pm 0.8$ & \cellcolor{Green!12}$12.0 \pm 0.6$ & \cellcolor{Green!10}$8.9 \pm 0.5$ & \cellcolor{Green!41}$55.3 \pm 3.2$ & \cellcolor{Green!5}$0.9 \pm 0.3$ & \cellcolor{Green!41}$55.2$ & \cellcolor{Green!41}$55.6$ & \cellcolor{Green!23}$28.8 \pm 1.1$  \\
\midrule
\parbox[t]{2mm}{\multirow{6}{*}{\rotatebox[origin=c]{90}{Image Data}}}
& \cellcolor{Gray!18}{COIL20} & \cellcolor{Green!58}$\bm{81.2}$ & \cellcolor{Green!52}$\underline{72.8}$ & \cellcolor{Green!52}$72.6$ & \cellcolor{Green!35}$46.4 \pm 4.4$ & \cellcolor{Green!35}$46.6 \pm 2.1$ & \cellcolor{Green!36}$47.7 \pm 2.0$ & \cellcolor{Green!43}$58.2 \pm 2.8$ & \cellcolor{Green!26}$33.5 \pm 2.0$ & \cellcolor{Green!49}$68.6$ & \cellcolor{Green!30}$39.2$ & \cellcolor{Green!28}$35.9 \pm 0.1$  \\
& COIL100 & \cellcolor{Green!57}$\bm{80.1}$ & \cellcolor{Green!48}$66.8$ & \cellcolor{Green!50}$\underline{70.0}$ & \cellcolor{Green!34}$44.6 \pm 4.2$ & \cellcolor{Green!35}$46.6 \pm 1.5$ & \cellcolor{Green!37}$50.1 \pm 1.2$ & \cellcolor{Green!41}$56.1 \pm 1.4$ & \cellcolor{Green!15}$16.7 \pm 0.8$ & \cellcolor{Green!45}$61.4$ & \cellcolor{Green!14}$14.2$ & \cellcolor{Green!5}$0.2$  \\
& \cellcolor{Gray!18}{cmu}\_faces & \cellcolor{Green!44}$60.2$ & \cellcolor{Green!42}$56.6$ & \cellcolor{Green!48}$\bm{66.5}$ & \cellcolor{Green!10}$8.6 \pm 3.1$ & \cellcolor{Green!29}$37.1 \pm 4.1$ & \cellcolor{Green!27}$34.2 \pm 2.1$ & \cellcolor{Green!39}$53.2 \pm 4.7$ & \cellcolor{Green!30}$38.5 \pm 2.9$ & \cellcolor{Green!45}$\underline{61.6}$ & \cellcolor{Green!5}$0.7$ & \cellcolor{Green!5}$0.6$  \\
& OptDigits & \cellcolor{Green!41}$55.3$ & \cellcolor{Green!55}$\bm{77.0}$ & \cellcolor{Green!55}$\bm{77.0}$ & \cellcolor{Green!31}$40.9 \pm 3.5$ & \cellcolor{Green!18}$20.9 \pm 2.3$ & \cellcolor{Green!16}$18.1 \pm 2.4$ & \cellcolor{Green!45}$61.3 \pm 6.6$ & \cellcolor{Green!14}$14.4 \pm 4.1$ & \cellcolor{Green!53}$\underline{74.6 \pm 2.4}$ & \cellcolor{Green!46}$63.2$ & \cellcolor{Green!5}$0.0$  \\
& \cellcolor{Gray!18}{USPS} & \cellcolor{Green!27}$33.7$ & \cellcolor{Green!24}$29.3$ & \cellcolor{Green!24}$29.3$ & \cellcolor{Green!12}$12.0 \pm 1.7$ & \cellcolor{Green!10}$8.7 \pm 1.0$ & \cellcolor{Green!12}$11.2 \pm 1.5$ & \cellcolor{Green!39}$\underline{52.3 \pm 1.7}$ & \cellcolor{Green!6}$2.9 \pm 0.9$ & \cellcolor{Green!46}$\bm{63.9}$ & \cellcolor{Green!5}$0.0$ & \cellcolor{Green!18}$21.0$  \\
& MNIST & \cellcolor{Green!17}$19.7$ & \cellcolor{Green!32}$41.7$ & \cellcolor{Green!35}$\underline{46.0}$ & \cellcolor{Green!12}$11.1 \pm 1.7$ & \cellcolor{Green!8}$5.4 \pm 0.6$ & \cellcolor{Green!8}$5.4 \pm 0.6$ & \cellcolor{Green!29}$36.9 \pm 1.0$ & \cellcolor{Green!5}$1.3 \pm 0.4$ & \cellcolor{Green!39}$\bm{52.7}$ & \cellcolor{Green!5}$0.0$ & \cellcolor{Red!0}-  \\
\bottomrule
\CodeAfter
  \tikz \draw [dotted, thick] 
  (1-|6) -- (last-|6)
  (2-|12) -- (last-|12);
  \tikz \draw [dotted]
  (2-|4) -- (4-|4)
  (2-|5) -- (4-|5)
  (2-|7) -- (4-|7)
  (2-|8) -- (4-|8);
\end{NiceTabular}}

\renewcommand{\arraystretch}{1}

\end{table*}

\section{Conclusions and Limitations}

This paper proposed and evaluated a generalization of standard hierarchical clustering. Our theoretical contributions suggest new algorithms for hierarchical clustering under, for example, the Dasgupta objective \cite{dasgupta_objective}, as well as connections to spectral clustering \cite{pathbasedSC}. We leave both to future work.
Furthermore, although our approach enables $\texttt{Sort}(n)$-time access to novel clustering hierarchies after fitting an ultrametric, the initial ultrametric computation remains the performance bottleneck. A key limitation is that the clustering quality depends significantly on this choice of ultrametric. I.e., we find that the dc-dist consistently produces high-quality clusterings across different hierarchy/partition combinations while HSTs produce subpar results. We therefore suggest that future work focuses on developing fast algorithms for fitting ultrametrics which effectively model the data.

\section*{Acknowledgements}
We thank Chris Schwiegelshohn, who provided advice and insights when developing the theoretical framework surrounding Theorem \ref{thm:optimal_kz}.

We gratefully acknowledge financial support from the Vienna Science and Technology Fund (WWTF-ICT19-041) and the Austrian funding agency for business-oriented research, development, and innovation (FFG-903641\footnote{\url{https://projekte.ffg.at/projekt/4814676}}). Andrew Draganov is partially supported by the Independent Research Fund Denmark (DFF) under a Sapere Aude Research Leader grant No 1051-00106B and by project W2/W3-108 Impuls und Vernetzungsfonds der Helmholtz-Gemeinschaft. We also acknowledge funding from Danish Pioneer Centre for AI\footnote{\url{https://aicentre.dk}}, DNRF grant number P1.

\newpage
\nocite{mlpack}
\afterpage{\balance}
\bibliography{main}
\bibliographystyle{plainnat}



\newpage
\appendix
\onecolumn

\newpage
\section{Proofs for Sections \ref{sec:ultrametrics} and \ref{sec:clustering_theory}}

In this section, we prove Theorem \ref{thm:ultrametric_equivalency}, Corollary \ref{cor:ultrametric_lca}, and Theorem \ref{thm:optimal_clusters} from the main body of the paper. We restate each here:

\UltrametricEquivalency*

\LCAcorollary*

\maintheorem*

\subsection{Overview of Ideas}

We begin with a more thorough proof outline to the one that appears in the main body of the paper.

First, we will define a relaxation of ultrametrics and show a few immediate properties of these spaces. Their key property is that there is essentially an equivalence relation induced by any relaxed ultrametric: for any distance value $d$, the sets of points that are within distance $d$ of each other partition the space. This is a generalization of the ideas in \citet{ultrametric_stability}.

We then show that all relaxed ultrametrics can be represented as lowest-common-ancestor-trees (LCA-trees). These are rooted trees where every node has a value associated with it. The distance between two leaves in an LCA-tree is the value in their lowest common ancestor. Furthermore, all LCA-trees are (relaxed) ultrametrics if the values are monotonically non-decreasing along the path from any leaf to the root (Corollary \ref{cor:ultrametric_lca}). As a result, there is a bijective relationship between LCA-trees and ultrametrics (Theorem \ref{thm:ultrametric_equivalency}). We proceed by only considering ultrametrics in the LCA-tree data structure. 

We now consider the center-based clustering objectives over these LCA-trees. Here, a centers are placed on leaves of the tree and the point-to-center distances are given by the LCA-distances. For $k$-center, we will see that the well-known strategy of farthest-first traversal~\cite{Har-Peled} (which achieves a $2$-approximation in Euclidean space) is
actually optimal in ultrametrics. This algorithm works by placing the first center on a random leaf and then, for each subsequent center, placing it on the node which induces the highest cost. The intuition here is that the farthest-first traversal's standard $2$-approximation is a direct consequence of the triangle inequality, so changing this to the strong triangle inequality resolves the approximation constant.

We now reduce the algorithm of farthest-first traversal in an ultrametric to sorting the nodes in the LCA-tree. By placing this first center on any leaf, every other leaf in the tree gets mapped to it. Thus, the largest point-center distance corresponds to two leaves whose LCA is the root of the tree. 
We must therefore place our subsequent centers on leaves until there is no point-center distance which goes through the root of the LCA-tree. 
This is done by placing centers in subtrees in which there are no placed centers.
After the root has been handled, we will place centers in accordance with the root's children, focusing on the one which has largest value first. By Corollary \ref{cor:ultrametric_lca}, it must be the case that the LCA-tree's values grow as we approach the root. Thus, the runtime being $\texttt{Sort}(n)$ comes from the fact that the farthest-first traversal algorithm ``fills in'' the tree from the root down. As a result, we will solve $k$-center in an ultrametric by sorting the distances from largest to smallest and placing the corresponding centers.

Given this, we will conclude the proof by showing how to reduce the general problem of $(k, z)$-clustering (such as $k$-means) to this $k$-center algorithm.
We will consider these through the lens of the \emph{cost-decreases} of placing $k$-means or $k$-median centers in an ultrametric. That is, if we have an optimal solution for some value of $k$, then the optimal solution for $k+1$ centers will have a lower cost. Thus, there is a cost-decrease associated with placing each optimal center. Our key observation is that these cost-decreases \emph{themselves} satisfy the strong triangle inequality.
We will also show that greedily choosing centers that maximize these cost-decreases gives optimal $k$-means and $k$-median solutions in an ultrametric. Putting the pieces together, these
results imply that we can apply the $k$-center algorithm in the LCA-tree of cost-decreases to get optimal $k$-means and $k$-median solutions.

\paragraph{Notation.} We use $T$ to represent an arbitrary rooted tree with root $r$. We use $\eta$ to represent arbitrary internal
nodes and $\ell$ to represent arbitrary leaves. We also assume that every internal node $\eta$ in the tree is equipped with a value, which we write by
$d(\eta)$. We use the notation $\eta_i \preceq \eta_j$ to indicate that $\eta_j$ lies on the path from $\eta_i$ to $r$.
We use $\text{children}(\eta)$ and
$\text{parent}(\eta)$ to indicate the direct children and parent of a node. Lastly, we use the notation from \cite{dasgupta_objective} with $\ell_i \lor \ell_j$
denoting the lowest common ancestor (LCA) of a set of nodes/leaves and $T[\eta]$ denoting the subtree rooted at $\eta$. Thus, $T[\ell_i \lor \ell_j]$ is the
smallest subtree containing both $\ell_i$ and $\ell_j$.

For notation on clustering, we define $(k, z)$-clustering as finding the set of centers $\mathbf{C} \in T$ with $|\bC| = k$ which minimize

\[\cost_z(T, \bC) = \sum_{\ell \in \text{leaves}(T)} \min_{c \in \bC} \dt(\ell, c)^z.\]

\noindent This corresponds to $k$-median and $k$-means for $z=1$ and $z=2$, respectively. We also define $k$-center clustering as finding the $\bC$ which
minimizes

\[ \cost_\infty(T, \bC) = \max_{\ell \in \text{leaves}(T)} \min_{c \in \bC} \dt(\ell, c). \]

\subsection{Ultrametrics and LCA-Trees}
\label{app:ultrametric_proofs}

In this section, we introduce relaxed ultrametrics and prove Theorem \ref{thm:ultrametric_equivalency} and Corollary \ref{cor:ultrametric_lca}.

Throughout the literature, the stand-out candidate for a hierarchical (dis-)similarity measure is the \emph{ultrametric}. We will, however require a looser
definition, which we refer to as a \emph{relaxed ultrametric}:

\RelaxedUltrametric*

\noindent We will prove our results over these relaxed ultrametrics. We note that the set of ultrametrics is a \emph{subset} of the set of relaxed
ultrametrics. That is, strict ultrametrics have the additional condition that the distance between two points is $0$ if and only if they are the same point. Thus, our theoretical results
immediately apply to all ultrametrics.

The \emph{strong triangle inequality} in Definition \ref{def:relaxed_ultrametric} is what allows ultrametrics to capture hierarchical relationships. Specifically, the strong triangle inequality implies that any three points in an ultrametric space must form an isosceles triangle with an angle less than 60 degrees:

\begin{fact}
    \label{fact:isosceles}
    Let $\dt$ be a dissimilarity measure on a space $L$, which satisfies the strong triangle inequality. Then for any $\ell_i, \ell_j, \ell_k \in L$, one
    of the following holds:
    \begin{enumerate}
        \item $\dt(\ell_i, \ell_j) \leq \dt(\ell_i, \ell_k) = \dt(\ell_j, \ell_k)$
        \item $\dt(\ell_i, \ell_k) \leq \dt(\ell_i, \ell_j) = \dt(\ell_j, \ell_k)$
        \item $\dt(\ell_j, \ell_k) \leq \dt(\ell_i, \ell_j) = \dt(\ell_i, \ell_k)$
    \end{enumerate}
\end{fact}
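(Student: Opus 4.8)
The plan is to reduce the three-way case distinction to a single case by a relabeling argument, and then obtain the required equality from two applications of the strong triangle inequality.

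First I would observe that the three listed conclusions are obtained from one another by permuting the roles of $\ell_i,\ell_j,\ell_k$: each one says precisely that the pair attaining the minimum of the three pairwise dissimilarities is at most the other two, and that those other two are equal. Hence it suffices to pick a pair --- say $\{\ell_i,\ell_j\}$ --- attaining $\min\{\dt(\ell_i,\ell_j),\dt(\ell_i,\ell_k),\dt(\ell_j,\ell_k)\}$ and prove $\dt(\ell_i,\ell_k)=\dt(\ell_j,\ell_k)$; the remaining inequality $\dt(\ell_i,\ell_j)\le\dt(\ell_i,\ell_k)$ is then immediate from minimality, so we land in conclusion~(1).

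For the equality, applying the strong triangle inequality of Definition~\ref{def:relaxed_ultrametric} to $(\ell_i,\ell_k)$ via $\ell_j$ gives $\dt(\ell_i,\ell_k)\le\max(\dt(\ell_i,\ell_j),\dt(\ell_j,\ell_k))=\dt(\ell_j,\ell_k)$, where the last step uses $\dt(\ell_i,\ell_j)\le\dt(\ell_j,\ell_k)$. Symmetrically, applying it to $(\ell_j,\ell_k)$ via $\ell_i$ and using symmetry of $\dt$ gives $\dt(\ell_j,\ell_k)\le\max(\dt(\ell_i,\ell_j),\dt(\ell_i,\ell_k))=\dt(\ell_i,\ell_k)$. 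Combining the two inequalities forces $\dt(\ell_i,\ell_k)=\dt(\ell_j,\ell_k)$.

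There is no real obstacle here --- the argument is a couple of lines --- so the only point worth flagging is the handling of ties: if two (or all three) of the pairwise dissimilarities coincide, then more than one of the listed conclusions holds simultaneously, which is consistent with the statement since it only asserts that at least one holds. I would also note that the proof never invokes $\dt(\ell,\ell)=0$, so the fact is valid for relaxed ultrametrics and hence a fortiori for ordinary ones; the geometric phrasing (``isosceles triangle with an angle below $60^\circ$'') is merely the Euclidean translation of conclusions~(1)--(3) and needs no separate argument.
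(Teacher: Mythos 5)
Your proof is correct, and it takes a mildly different route from the paper's. The paper argues by contradiction, enumerating the two configurations that would violate the conclusion (all three dissimilarities pairwise distinct, or the odd one out strictly larger than two equal smaller ones) and checking that each breaks the strong triangle inequality. You instead argue directly: after reducing by symmetry to the case where $\dt(\ell_i,\ell_j)$ attains the minimum, two applications of the strong triangle inequality (through $\ell_j$ and through $\ell_i$) sandwich $\dt(\ell_i,\ell_k)$ and $\dt(\ell_j,\ell_k)$ against each other and force equality, with the remaining inequality free from minimality. Both arguments use only symmetry and the strong triangle inequality, and neither needs $\dt(\ell,\ell)=0$, so both apply to relaxed ultrametrics as required later in the paper. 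What your version buys is that there is no case enumeration to get right (the paper's contradiction proof must implicitly check that its two cases exhaust the negation of the disjunction, and indeed its displayed inequality in the second case contains a slip, writing $\max(\dt(\ell_i,\ell_j),\dt(\ell_j,\ell_k))$ where $\max(\dt(\ell_i,\ell_k),\dt(\ell_j,\ell_k))$ is meant); what the paper's version buys is a very explicit picture of exactly which triangle shapes are forbidden. Your remarks on ties and on the purely decorative role of the geometric phrasing are accurate.
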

\begin{proof}
    Assume that all three are unequal, so WLOG $\dt(\ell_i, \ell_j) < \dt(\ell_i, \ell_k) < \dt(\ell_j,
    \ell_k)$. Then the strong triangle inequality does not hold, since $\dt(\ell_j, \ell_k) \not\leq \max(\dt(\ell_i, \ell_j), \dt(\ell_i, \ell_k))$.

    Similarly, assume that the singleton edge is longer than the two others, i.e. $\dt(\ell_i, \ell_k) = \dt(\ell_j, \ell_k) < \dt(\ell_i, \ell_j)$. This also
    breaks the strong triangle inequality, since $\dt(\ell_i, \ell_j) \not\leq \max(\dt(\ell_i, \ell_j), \dt(\ell_j, \ell_k))$.
\end{proof}

As a result, for any three points in a relaxed ultrametric space, knowing two of the pairwise distances is sufficient to give the ordering of all three. For example, if two of the distances in a triangle are equal, then the third must be equal to this distance or smaller.

This naturally extends to groups of more than 3 points. Consider the case where we have $n$ points $\{x_1, x_2, \ldots, x_n\}$ so that for any $x_i, x_j$, we have ultrametric distance $d(x_i, x_j) < \varepsilon$ for any small $\varepsilon>0$. Now let $y$ be a point with $d(x_1, y) = \delta$ with $\delta \gg \varepsilon$. Since the $x$'s are all close to one another, Fact \ref{fact:isosceles} implies that $d(x_i, y)$ must also equal $\delta$ for all $i$. This has the following fundamental consequences:

\UltrametricEquivalency*

We first put this in context before giving its proof. Theorem~\ref{thm:ultrametric_equivalency} states that any ultrametric can be represented over the leaves of a tree, with the property that the distance between two leaves is uniquely determined by the value in their LCA. We note that variants of this theorem have been given elsewhere \cite{ultrametric_minimax, ultrametric_single_linkage, hierarchical_clustering_combinations}; nonetheless, the results later in this section require the form given above. We refer to this data structure as an LCA-tree and the ``distances'' in this tree as LCA-distances.

\begin{proof}

    We use Fact \ref{fact:isosceles} to design an algorithm to construct the tree $T$ by repeatedly splitting the relaxed ultrametric over its largest distance $d_{max}$. First, let us see that $d_{max}$ induces a partition over $L$. Let $\ell_i \in L$ be any point and let $L_{\ell_i} = \{ \ell_j : d(\ell_i, \ell_j) < d_{max} \} \cup \{\ell_i\}$ be the set consisting of $\ell_i$ and all the points closer to it than $d_{max}$. Now let $\ell_k \in L \setminus L_{\ell_i}$ be any other point not in $L_{\ell_i}$. By definition $d(\ell_i, \ell_k) = d_{max}$. Furthermore, by Fact \ref{fact:isosceles}, $d(\ell_j, \ell_k) = d_{max}$ for all $\ell_j \in L_{\ell_i}$. Thus, $d_{max}$ induces a partition on a relaxed ultrametric where, across any two points in distinct clusters, the distance is necessarily $d_{max}$.
    
    We now use this idea to devise an algorithm that embeds the ultrametric in an LCA-tree. This algorithm receives a relaxed ultrametric space $(L, d)$ as input. It begins by instantiating a (initially empty) root node $r$ which will serve as the root of the LCA-tree. Let $d_{max}$ be the largest distance in $(L, d)$ and let $\mathcal{P}$ be the partition induced by $d_{max}$. Assign $d(r) = d_{max}$. For each group in the partition, make a node and assign it as a child of the root $r$.
    
    We apply this construction recursively for
    each of the children. The base case occurs when $L$ has either one or two points. If there is only one point, $x_i \in L$, we simply return a leaf $\ell_i$. This
    leaf is given weight $d(\ell_i) = d(\ell_i, \ell_i)$ and we define $f(x_i) = \ell_i$. If $L$ has two points, $x_i$ and $x_j \in L$, then we create two leaves $\ell_i$ and
    $\ell_j$ as children of the input node. The mapping is arbitrarily defined as $f(x_i) = \ell_i$ and $f(x_j) = \ell_j$. We assign the input node with weight
    $d(\eta) = \dt(x_i, x_j)$ and give the leaves weights $d(\ell_i) = \dt(\ell_i, \ell_i)$ and $d(\ell_j) = \dt(\ell_j, \ell_j)$.

    We inductively verify that this construction produces a valid LCA-tree and that all distances in the relaxed ultrametric are preserved by the LCA-distances. In the base case, the space $(L, \dt)$ has either one or two points. We respectively represent
    these as a singleton root node or a rooted tree with two children. In both cases, the value of the root is the distance between the two points and the distance of each point to itself is the value in the leaves. Thus, in the base case, all pairwise distances in $L$ are preserved via the LCA-distances.

    In the inductive step, assume that $(L, \dt)$ has more than two points, that the maximal distance is $d_{max}$, and that all smaller distances are
    represented via distinct trees. By the above logic, the distance between any two nodes in separate trees must be $d_{max}$. Since the construction described above assigns the value $d_{max}$ to the root and assigns the existing trees to it as children, this new node is a parent
    to the already-existing trees. Thus, their internal LCA-trees and LCA-distances are not affected. However, the LCA between any nodes in separate subtrees has value $d_{max}$. Therefore,
    the entire ultrametric $(L, \dt)$ is preserved.

\end{proof}

The following corollary gives the sufficient conditions for an LCA-tree to correspond to a relaxed ultrametric:

\LCAcorollary*
\begin{proof}

    We first assume that we have an LCA-tree satisfying the conditions and show that it must be a relaxed ultrametric. First, note that the LCA-distances in the tree satisfy symmetry by the definition of LCA: $d(\ell_i, \ell_j) = d(\text{LCA}(\ell_i, \ell_j)) = d(\ell_j, \ell_i)$. Furthermore, conditions (1) and (2) together ensure the
    non-negativity conditions required for a relaxed ultrametric. Therefore, it remains to show the strong triangle inequality. 
    
    Let $\ell_i, \ell_j$ and $\ell_k$
    be three leaves in the LCA-tree. If they all have the same LCA (i.e., $\ell_i \lor \ell_j = \ell_j \lor \ell_k = \ell_i \lor \ell_k$), then the leaves are
    equidistant in the LCA-tree, and the strong triangle inequality is satisfied. Thus, assume WLOG that $\ell_i \lor \ell_j \preceq \ell_i \lor \ell_k$. This
    implies that $\ell_i \lor \ell_k = \ell_j \lor \ell_k$. Consequently, the strong triangle inequality is satisfied:

    \[ \ell_i \lor \ell_j \preceq \ell_i \lor \ell_k = \ell_j \lor \ell_k \underset{\substack{\uparrow\\\text{By Assumption}}}{\implies} d(\ell_i \lor \ell_j)
    \leq d(\ell_i \lor \ell_k) = d(\ell_j \lor \ell_k). \]

    We now prove the other direction of the if-and-only-if: we assume an LCA-tree whose LCA-distances are a relaxed ultrametric and show that it must satisfy conditions 1 and 2. By the non-negativity of the ultrametric, the condition $d(\ell) \geq 0$ is satisfied for all leaves $\ell$. Regarding the second condition, assume for the sake of contradiction that it is not satisfied. Then we must have nodes $\eta_i$ and $\eta_j$ with $\eta_i \preceq \eta_j$ and $d(\eta_i) > d(\eta_j)$. We now choose leaves $\ell_{i}$, $\ell_i'$ and $\ell_j$ so that $\ell_i \lor \ell_i' = \eta_i$ and $\ell_i \lor \ell_j = \ell_i' \lor \ell_j = \eta_j$. Then we have $d(\ell_i, \ell_j) = d(\ell_i', \ell_j) < d(\ell_i, \ell_i')$.

    This is in violation of Fact \ref{fact:isosceles} and therefore contradicts our assumption that the LCA-distances are ultrametric.

\end{proof}

\noindent As a result of Corollary~\ref{cor:ultrametric_lca}, if we wish to show that a tree's LCA-distances satisfy the strong triangle inequality, we need to show that the tree's values are non-decreasing on paths from the leaves to the root. Going forward, every discussion of ultrametrics will implicitly be through their LCA-tree
representation. 

\subsection{$k$-Center in Ultrametrics}
\label{app:k_center_proof}

\subsubsection{Structure of a Center-Based Solution}

We first describe how cluster memberships are defined in an LCA-tree.
Recall that centers are placed on leaves and a cluster is the set of points which is closest to a center. Since many of the center-to-leaf relationships are equidistant in an LCA-tree, we use
the ``marking'' procedure from \cite{hierarchical_kmedian} to define a consistent notion of cluster attribution.

Let $\bC = [c_1, \ldots c_k]$ be $k$ arbitrarily ordered centers that correspond to distinct leaves in the LCA-tree. We obtain the cluster memberships $C_i = \{\ell \in
T: c_i = \argmin_{c \in \mathbf{C}} d(\ell, c)\}$ by adding the centers in the given order and, for each center placed, marking the nodes in the tree from the
corresponding leaf to its lowest unmarked ancestor. Thus, if we place center $c_i$ in a leaf node, we go up the tree towards the root and mark every node with ``$C_i$'' until we
hit a previously marked node. Leaves are then assigned to clusters by finding their lowest marked ancestor. An example is shown in Figure~\ref{subfig:marking_example}, where we first place a center in subtree $T_1$ and mark every node on the path from the center to the root with $C_1$. We then add the second center in subtree $T_2$ and mark along its path to the root until we reach an already-marked node. We therefore have that the leaves in $T_1
\cup T_3$ belong to cluster $C_1$ and the leaves in $T_2$ belong to cluster $C_2$.

Put simply, \emph{if a node $\eta$ is marked, then there is a center in $T[\eta]$}. We note that this attribution of leaves to centers applies in both the $k$-center and the $(k, z)$-clustering settings and is independent of the algorithm used to obtain the centers.

\begin{figure}[t!]
    \begin{subfigure}[t]{0.65\linewidth}
    \centering
    \begin{tikzpicture}[level distance=30pt]
        \Tree [.\textcolor{blue}{$C_1$} [.\textcolor{blue}{$C_1$} [.\textcolor{blue}{$C_1$} \edge[roof]; {$T_1$} ] [.\; \edge[roof]; {$T_2$} ]] 
                     [.\;  \edge[roof]; {$T_3$} ] ]
        \node at (-0.85, -2.8) {\textcolor{blue}{$\times$}};
        \node at (-0.85, -2.6) {\small \textcolor{blue}{$c_1$}};
    \end{tikzpicture}
    \quad \quad
    \begin{tikzpicture}[level distance=30pt]
        \Tree [.\textcolor{blue}{$C_1$} [.\textcolor{blue}{$C_1$} [.\textcolor{blue}{$C_1$} \edge[roof]; {$T_1$} ] [.\textcolor{green}{$C_2$} \edge[roof]; {$T_2$} ]] 
                     [.\;  \edge[roof]; {$T_3$} ] ]
        \node at (-0.12, -2.8) {\textcolor{green}{$\times$}};
        \node at (-0.12, -2.6) {\small \textcolor{green}{$c_2$}};
        
        \node at (-0.85, -2.8) {\textcolor{blue}{$\times$}};
        \node at (-0.85, -2.6) {\small \textcolor{blue}{$c_1$}};
    \end{tikzpicture}
    \caption{A visualization of the marking procedure. We first place center $c_1$ and mark every node from it to the root. We then place center $c_2$ and mark
    every node from it to its lowest unmarked ancestor.}
    \label{subfig:marking_example}
    \end{subfigure}
    \quad
    \begin{subfigure}[t]{0.3\linewidth}
        \centering
        \begin{tikzpicture}[level distance=30pt]
            \Tree [.\textcolor{orange}{$C_i$} [.\textcolor{orange}{$C_i$} [.$\ell_i$  ] [.$\ell_j$ ] [.$\ell_k$ ]] 
                         [.\;  \edge[roof]; {$T$} ] ]
            \node at (0, -1.8) {\textcolor{orange}{$\times$}};
            \node at (0, -1.55) {\textcolor{orange}{$c_i$}};
        \end{tikzpicture}
        \caption{The cost of the clustering does not depend on which leaf the center $c_i$ is assigned to.}
        \label{subfig:equivalent_clusterings}
    \end{subfigure}
    \caption{A visualization of how leaves get assigned to centers in the LCA-tree.}
    \label{fig:clustering_vis}
\end{figure}

Interestingly, the number of optimal center-based clustering solutions in any LCA-tree is necessarily exponential in $k$. This is shown in Figure \ref{subfig:equivalent_clusterings}: the set of distances from leaves in the tree to $c_i$ does not change regardless of whether we place $c_i$ on $\ell_i$, $\ell_j$ or $\ell_k$. When describing optimal solutions, we consider them equivalent up to such swaps in the last layer of the tree. We discuss heuristics for choosing the ``best'' of these optimal solutions in Appendix \ref{app:ties_heuristic}.

We lastly formalize the notion of \emph{hierarchical} clusters:

\Hierarchy*

In short, a hierarchical set of solutions means that the clustering in $\mathcal{P}_k$ is the same as the one at $\mathcal{P}_{k-1}$ except that a single cluster was split apart. We say a cluster $C_i \in \mathcal{H}$ if $\exists \; \mathcal{P}_j \in \mathcal{H}$ with $C_i \in \mathcal{P}_j$.

\subsubsection{$k$-Center in LCA-trees}

We now develop our center-based clustering algorithms in LCA-trees, starting with the $k$-center clustering task.
Recall that the $k$-center objective requires finding $k$ centers that minimize 
\[ \cost_\infty(T, \mathbf{C}) = \max_{\ell \in T} \min_{c \in \mathbf{C}} \dist(\ell, c). \]

Using the marking construction, we can succinctly state the cost of any $k$-center solution in an LCA-tree:

\begin{fact}
    \label{fact:kcenter_cost}
    Let $\mathbf{C}$ be a set of centers in an LCA-tree $T$. Let $\eta$ be the most expensive marked node in $T$ with at least one unmarked child. Then $\cost_{\infty}(T, \mathbf{C}) = d(\eta)$.
\end{fact}
\begin{proof}
    We will show a bijection between the leaf-to-center distances induced by solution $\mathbf{C}$ and the nodes which are marked and have an unmarked child. Since the cost of a $k$-center solution is the maximum over all such leaf-to-center distances, it will therefore be the maximum-valued such node.

    First, we show that all leaf-to-center distances are contained in nodes which are both (a) marked and (b) have an unmarked child. Let $c$ be any center in $\mathbf{C}$ and let $\ell$ be any leaf that is closest to center $c$, i.e., $c = \argmin_{c' \in \mathbf{C}} d(\ell, c')$. Let $\eta = c \lor \ell$. Then condition (a) is true of $\eta$ by definition of the marking procedure: notice that $c \lor \ell_j$ is marked for all leaves $\ell_j \in T$. To see why (b) is true, consider that if every child of $\eta$ is marked, then $c$ could not have been the closest center to $\ell_i$ (by Corollary \ref{cor:ultrametric_lca}).

    It similarly holds that all nodes which are marked and have an unmarked child correspond to a leaf-to-center assignment. I.e., any such node $\eta$ is marked, implying there is a center in $T[\eta]$. $\eta$ also has an unmarked child. Consequently, there is a leaf $\ell \in T[\eta]$ whose LCA with the center-set is $\eta$.

    Thus, the $k$-center cost is equal to the maximum leaf-to-center distance. As shown above, this in turn is equivalent to the maximum value among nodes which are marked and have an unmarked child.
\end{proof}

\paragraph{Farthest-First Traversal.} Although the $k$-center task is NP-hard in the general metric setting, Fact \ref{fact:kcenter_cost} allows us to solve it optimally (and almost trivially) in an LCA-tree using the farthest-first traversal algorithm \cite{farthest_first, gonzalez1985clustering, Har-Peled}.

We implement farthest-first traversal in an LCA-tree as follows. We assign the first center to a random leaf in the tree. For each subsequent center, we find the node whose value is equal to the cost of the solution (per Fact \ref{fact:kcenter_cost}). This node must have at least one unmarked child. We therefore choose one of these unmarked children at random and place the next center at a random leaf in it. We do this iteratively until we have placed $n$ centers. This is formalized in Algorithm \ref{alg:fft}. By Corollary \ref{cor:ultrametric_lca}, this algorithm will only place a center for the subtree at node $\eta$ if it has already placed a center for the subtree at $\text{parent}(\eta)$.

\begin{algorithm}
    \caption{\texttt{LCA-tree Farthest First Traversal}}\label{alg:fft}
\textbf{Input:} LCA-tree $T$
\begin{algorithmic}[1]
    \State Place first center on random leaf and mark $T$ accordingly
    \While{unmarked node exists}
        \State $\eta_i$ = highest value marked node with at least one unmarked child
        \While{$\eta_i$ has unmarked child $\eta_j$}
            \State Place a center on random leaf in $T[\eta_j]$ and mark $T$ accordingly
        \EndWhile
    \EndWhile
    \State Return centers
\end{algorithmic}
\end{algorithm}

\begin{lemma}
    \label{lma:fft-optimality}
    Let $T$ be an LCA-tree satisfying the conditions in Corollary~\ref{cor:ultrametric_lca} and let $k \in [n]$. Then the farthest-first traversal algorithm finds the optimal $k$-center solution in $T$.
\end{lemma}
\begin{proof}
    Let $k$ be any value in $[n-1]$ (if $k=n$, there is only one solution which is trivially optimal). Assume for contradiction that the `greedy' $k$-center solution $\bC_g$ which is obtained by farthest-first traversal is \emph{not} optimal. Then there must be another clustering $\bC_o$ of $k$ centers that is \emph{actually} optimal, i.e. $\cost_{\infty}(T, \bC_o) < \cost_{\infty}(T, \bC_g)$. The two solutions must differ by at least one center placement and, consequently, the sets of nodes which these solutions mark must also differ. Let $\eta$ be the maximum value node which is marked in $\bC_g$ but not in $\bC_o$.
    
    Since $\bC_o$ did not mark $\eta$, we have $\cost_\infty(T, \bC_o) \geq d(\text{parent}(\eta)) \geq d(\eta)$ by Fact \ref{fact:kcenter_cost} and Corollary \ref{cor:ultrametric_lca}. However, $\bC_g$ \emph{has}
    marked $\eta$. Furthermore, for all nodes $\eta' \in T$ with $d(\eta') > d(\eta)$, Algorithm \ref{alg:fft} has necessarily placed a center in $T[\eta']$ (otherwise it would not have reached the state where it placed a center on $T[\eta]$). Therefore, we must have $\cost_\infty(T, \bC_g) \leq d(\eta)$. This gives the
    desired contradiction.
\end{proof}

Interestingly, this correctness proof does not depend on \emph{which} leaf gets chosen as the center in a subtree -- just that any leaf is chosen. Also, the solutions induced by this clustering are necessarily hierarchical:

\begin{corollary}
    \label{cor:hierarchical}
    Let $\{\mathbf{C}_1, \ldots, \mathbf{C}_n\}$ be the sets of $k$-center solutions obtained by Algorithm \ref{alg:fft}. Let the respective partitions be $\mathcal{H} = \{\mathcal{P}_1, \ldots, \mathcal{P}_n\}$ obtained by assigning all leaves in $T$ to their closest center in each solution. Then $\mathcal{H}$ satisfies Definition \ref{def:hierarchical_clusters}.
\end{corollary}
\begin{proof}
    For $k=1$, all leaves belong to a single center. Now assume we have the farthest-first traversal solution $\mathbf{C}_k$ for some $1 < k \leq n-1$ and are placing the next center $c_{k+1}$. This center will be placed in the subtree belonging to an unmarked node $\eta$ whose parent is marked. Before placing $c_{k+1}$, all of $T[\eta]$ belonged to a single center in $\mathbf{C}_k$ per the marking procedure. After placing $c_{k+1}$, all of $T[\eta]$ belongs to it. Thus, one cluster which was present in the $\mathbf{C}_k$ solution was split into two clusters.
\end{proof}

Unfortunately, the naive farthest-first algorithm may take $O(n^2)$ time to obtain all clusterings for $k \in \{1, \ldots, n\}$ since, when placing center
$c_i$, we may have to search through $O(n)$ nodes to find the one with the next-highest cost. However, we can improve this to $\texttt{Sort}(n)$ time -- the
time it takes to sort a list of the $O(n)$ values in the LCA-tree -- by noting that the nodes' values grow as we go up the tree. Thus, it is sufficient to sort
the internal nodes by these costs and then place their corresponding centers in that order. The following lemma formalizes this intuition:

\begin{lemma}
    \label{lma:sort_n_time}
    There exists an algorithm that performs farthest-first traversal in an LCA-tree in $\texttt{Sort}(n)$ time.
\end{lemma}

\begin{proof}
    Before providing the algorithm, we first give an overview of the ideas which facilitate it. Assume we have placed $k-1$ centers and are placing the $k$-th one. If the cost is induced by node $\eta$, then this means that at least one of $\eta$'s children is marked and at least one is unmarked. Let $\eta_m$ be the marked one and $\eta_u$ be the unmarked one. We must therefore place a center in $T[\eta_u]$ in order to shrink the cost induced by $d(\eta)$. We refer to the leaf where this center will be placed as $\eta$'s \emph{corresponding center}.

    Thus, our algorithm must separate the nodes in the tree into two components. \emph{Either} a node gets marked when a center is placed in its parent's subtree \emph{or} it will have a center placed in its subtree. This partitioning is accomplished by Algorithm \ref{alg:corresp_centers}: at a given node $\eta$, Algorithm \ref{alg:ultrametric_kcenter} assigns $\eta$'s corresponding center as the corresponding center of its left-most
    child.  For $\eta$'s remaining children, we store their value in a global dictionary. Algorithm \ref{alg:corresp_centers} finally
    returns the dictionary of nodes in the tree and their values. This requires a single depth-first search and therefore Algorithm \ref{alg:corresp_centers} runs in $O(n)$ time.
    
    Given the output of Algorithm \ref{alg:corresp_centers}, we have a list of all the nodes whose corresponding centers must be placed. We now notice that, when placing the $k$-th center in accordance with the cost $d(\eta)$, we have \emph{necessarily} placed the centers with respect to $\eta$'s parent (per Corollary \ref{cor:ultrametric_lca} and Fact \ref{fact:kcenter_cost}). Thus, we can simply sort these nodes by their costs and place the corresponding centers in this order. This is done by Algorithm \ref{alg:ultrametric_kcenter} and runs in $\texttt{Sort}(n)$ time.
\end{proof}

\begin{algorithm}
\caption{\texttt{CorrespondingCenters}}\label{alg:corresp_centers}
\textbf{Input:} node $\eta$ in an LCA-tree; dict \texttt{Costs} mapping nodes to distances; dict $c$ mapping nodes to corresponding centers
\begin{algorithmic}[1]
    \If{$\eta$ is leaf}
        \State $c[\eta] = \eta$
        \State \textbf{Return}
    \EndIf \vspace*{0.2cm}

    \State ChildCount $= 0$
    \For{$\eta' \in \text{children}(\eta)$}
        \State \texttt{CorrespondingCenters}$(\eta', \texttt{Costs})$
        \If{ChildCount $= 0$}
            \State $c[\eta] = c[\eta']$
        \Else
            \State $\texttt{Costs}[\eta'] = d(\eta)$
        \EndIf
        \State ChildCount += 1
    \EndFor
    \State \textbf{Return}
\end{algorithmic}
\end{algorithm}

\begin{algorithm}
    \caption{\texttt{Ultrametric-kCenter}}\label{alg:ultrametric_kcenter}
\textbf{Input:} LCA-tree $T$
\begin{algorithmic}[1]
    \State \texttt{Costs}, $c$ = \{ \}, \{ \}
    \State \texttt{CorrespondingCenters}($T$.root, \texttt{Costs}, $c$) // assume pass-by-reference
    \State \texttt{Costs} = OrderedDict(\texttt{Costs}) // sorted from largest to smallest
    \For{$\eta \in \texttt{Costs}$}
        \State Place center at $c[\eta]$
    \EndFor
\end{algorithmic}
\end{algorithm}

In this sense, the hierarchy of optimal $k$-center solutions is essentially isomorphic to the LCA-tree. I.e. in the $k$-center hierarchy, the costs are precisely equal to the values of the nodes and we place centers which correspond to nodes ordered by their value. Thus, given a binary LCA-tree, its $k$-center hierarchy is necessarily equivalent to it.

Together, Lemma \ref{lma:fft-optimality}, Corollary \ref{cor:hierarchical} and Lemma \ref{lma:sort_n_time} prove the following statement about $k$-center in LCA-trees.

\begin{theorem}
    \label{thm:kcenter_theorem}
    Let $T$ be an LCA-tree satisfying the conditions in Corollary \ref{cor:ultrametric_lca} and $n$ be the number of leaves in $T$. Then there exists an algorithm which finds the optimal $k$-center solutions $\{\mathbf{C}_1, \ldots, \mathbf{C}_n\}$ for all $k \in \mathbb{N}_n$ in $\texttt{Sort}(n)$ time. Furthermore, the respective partitions $\mathcal{H} = \{\mathcal{P}_1, \ldots, \mathcal{P}_n\}$ obtained by assigning all leaves in $T$ to their closest center satisfy Definition \ref{def:hierarchical_clusters}.
\end{theorem}

We lastly note that this runtime is tight: 

\begin{lemma}
    \label{lma:worst_case}
    Let $T$ be an LCA-tree satisfying the conditions in Corollary~\ref{cor:ultrametric_lca}. Then there is a worst-case instance on which one cannot find all the optimal $k$-center solutions on $T$ for $k \in \{1, \ldots, n\}$ in faster than $\texttt{Sort}(n)$ time.
\end{lemma}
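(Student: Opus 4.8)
The plan is to prove this lower bound by a reduction from sorting: I will exhibit a family of valid LCA-trees, one for each $n$, on which recovering all optimal $k$-center solutions reveals the sorted order of $n$ arbitrary distances of the relaxed ultrametric, which by definition of $\texttt{Sort}$ requires $\texttt{Sort}(n)$ time. Combined with the $O(\texttt{Sort}(n))$ upper bound of Lemma~\ref{lma:optimal_kcenter}, this pins the complexity down exactly.

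\textbf{Construction.} Given $n$ distinct nonnegative reals $a_1, \dots, a_n$ (the list to be sorted; negative inputs are made nonnegative by an additive shift in $O(n)$ time), I build a ``star of pairs'' LCA-tree $T$: a root $r$ with $d(r) = 1 + \max_i a_i$, whose children are internal nodes $\eta_1, \dots, \eta_n$ with $d(\eta_i) = a_i$, and where each $\eta_i$ has exactly two leaf children of value $0$. This tree has $N = 2n$ leaves, is built in $O(n)$ time, and satisfies Corollary~\ref{cor:ultrametric_lca}: all node values are nonnegative, and along every leaf-to-root path the values read $0 \le a_i \le 1 + \max_j a_j$, which is nondecreasing. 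Crucially, each $d(\eta_i)$ is itself an LCA-distance of $T$ (the distance between the two leaves of pair $i$), so sorting the $\{d(\eta_i)\}$ is exactly ``sorting $\Theta(N)$ distances in the metric''.

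\textbf{What the solutions encode.} Using the marking analysis from Lemma~\ref{lma:optimal_kcenter}, I will show: for $k \le n$, every optimal $k$-center solution places one center in each of $k$ distinct pairs and has cost $d(r)$; and for $k = n+j$ with $1 \le j \le n$, every optimal solution places two centers in exactly the $j$ pairs $\eta_i$ with the largest values $a_i$, a single center in each remaining pair, and has cost equal to the $(j{+}1)$-st largest $a_i$ (or $0$ when $j=n$), since leaving a higher-valued pair half-covered is strictly worse. The only freedom is which leaf of a single-center pair is designated the center, so the \emph{set of fully covered pairs} at each $k$ is uniquely determined. Because Lemma~\ref{lma:optimal_kcenter} guarantees the solutions are hierarchical, the natural $O(N)$-size output is the incremental sequence of center placements, and the pair whose second center is added when passing from $k = n{+}j{-}1$ to $k = n{+}j$ is precisely the pair of rank $j$ by value. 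One extra $O(n)$ pass over this sequence therefore outputs the permutation sorting $a_1, \dots, a_n$.

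\textbf{Conclusion and main obstacle.} Building $T$, running any algorithm $\mathcal{A}$ that reports all optimal $k$-center solutions on $T$, and extracting the sorted order together sort $n$ distances, hence cost at least $\texttt{Sort}(n)$ in total; since the construction and extraction steps cost only $O(n) = o(\texttt{Sort}(n))$, $\mathcal{A}$ itself must run in $\Omega(\texttt{Sort}(N))$ time (using $\texttt{Sort}(2n) = \Theta(\texttt{Sort}(n))$), matching Lemma~\ref{lma:optimal_kcenter}. The subtlety I expect to handle most carefully is the output model: an algorithm forced to print all $n$ partitions explicitly already incurs $\Theta(N^2)$ output cost, which would make the bound vacuous, so I must argue that the informative and sufficient output is the incremental hierarchy of center placements and that the lower bound concerns producing exactly that. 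A secondary point, which the argument sidesteps cleanly, is robustness to the exponentially many co-optimal solutions, since only the uniquely-determined set of fully-covered pairs is ever used; verifying the cost formulas for $k > n$ via the marking procedure is routine but should be stated precisely.
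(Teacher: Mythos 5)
Your argument is correct and rests on the same underlying idea as the paper's proof: exhibit a family of valid LCA-trees on which reporting the optimal $k$-center solutions for all $k$ forces the algorithm to recover the sorted order of $\Theta(n)$ values, so anything faster than $\texttt{Sort}(n)$ would contradict the definition of $\texttt{Sort}$. The constructions differ, though. The paper uses a complete balanced binary tree whose node values are assigned level by level (distinct values per level) and argues that, once one center sits in each depth-$(w{-}1)$ subtree, the remaining $O(n)$ centers must be placed on the sibling leaves in decreasing order of their parents' values, i.e.\ the solution sequence is the sorted order; the reduction is left informal. Your ``star of pairs'' gadget embeds $n$ arbitrary reals directly as the pair values, shows that for $k=n+j$ the set of doubled pairs is \emph{uniquely} the top-$j$ set, and extracts the sorting permutation in $O(n)$, which makes the reduction from sorting fully explicit and is arguably cleaner. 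Two small points: your claim that for $k\le n$ every optimal solution uses $k$ distinct pairs is false for $k<n$ (any placement has cost $d(r)$ there, so ties are rampant), but this is immaterial since your extraction only uses $k>n$, where the doubled-pair sets are forced; and the output-model caveat you flag (compact incremental output versus $\Theta(n^2)$ explicit partitions) is real, but it applies equally to the paper's statement, whose matching upper bound in Lemma~\ref{lma:optimal_kcenter} also presumes the incremental-placement output, so your resolution is consistent with the intended reading.
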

\begin{proof}
    Consider a rooted tree that is complete and balanced: every leaf is at the same depth, and every internal node has two children. Let the leaves all be at depth $w$, so that there are $2^w$ leaves. Starting at depth $w$, assign the leaves' unique values from $1$ to $2^w$. Then, for the nodes at depth $w-1$, assign them unique values from $2^w+1$ to $2^w + 2^{w-1}$. Continue this process until we reach the root node, to which we assign value $2^{w+1}$. As a result, we visit the tree's nodes one-by-one from the lowest level to the root and maintain a counter of the number of visited nodes. Each node is assigned the value of the counter when it is visited.

    Labeling the nodes by these values necessarily gives us an LCA-tree: all values are non-negative, and values are non-decreasing along paths from the leaves to the root. Furthermore, all internal nodes at depth $i$ have distinct values. Suppose we are now performing $k$-center and have placed centers for all the nodes at depth $w-1$ but have not yet for the nodes at depth $w$. There are, therefore, $O(n)$ available nodes on which to place centers. Of these, only one has the maximum leaf-to-center distance and therefore induces the cost.
    
    Thus, we cannot place the remaining $O(n)$ centers faster than the time it takes to sort the remaining leaves' parents by their costs.
\end{proof}

\subsection{$(k, z)$-clustering in LCA-trees}
\label{app:k_z_clustering_proof}

The result for $k$-center essentially boils down to the speed and optimality of the classic 2-approximation when applied in an ultrametric. We now turn to the more
interesting result that the optimal $(k, z)$-clustering solutions behave very similarly to the optimal $k$-center ones:

\begin{restatable}{theorem}{kzclusteringtheorem}
    \label{thm:optimal_kz}
    Let $T$ be an LCA-tree satisfying the conditions in Corollary \ref{cor:ultrametric_lca}, $n$ be the number of leaves in $T$, and $z$ be any positive integer. Then there exists an algorithm which finds the optimal $(k, z)$-clustering solutions $\{\mathbf{C}_1, \ldots, \mathbf{C}_n\}$ for all $k \in \mathbb{N}_n$ in $\texttt{Sort}(n)$ time. Furthermore, the respective partitions $\mathcal{H} = \{\mathcal{P}_1, \ldots, \mathcal{P}_n\}$ obtained by assigning all leaves in $T$ to their closest center satisfy Definition \ref{def:hierarchical_clusters}.
    are hierarchical.
\end{restatable}

\subsubsection{Optimal $(k, z)$ Centers in Subtrees.} We start by considering what happens when we have a $(k-1, z)$-clustering solution in an LCA-tree and place the $k$-th optimal center. Placing this center will create a trail of markings from the leaf up until the first marked node along the path to the root. The key insight which we will utilize is that this center is \emph{necessarily} optimal everywhere along this trail:

\begin{lemma}
    \label{lma:optimal_subtrees}
    Let $c_z = OPT_{1, z}(T)$ be an optimal $(1, z)$-clustering solution for LCA-tree $T$. Then for every subtree $T' \subset T$ such that $c_z \in T'$, $c_z$ is an
    optimal $(1, z)$-clustering solution for $T'$.
\end{lemma}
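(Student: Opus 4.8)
The plan is to argue by contradiction, exploiting a simple combinatorial fact about LCA-trees: any leaf lying \emph{outside} a rooted subtree $T[\eta]$ is equidistant from \emph{all} leaves lying \emph{inside} $T[\eta]$. Write the given subtree as $T' = T[\eta]$ for the node $\eta$ at which it is rooted, so the hypothesis $c_z \in T'$ means $c_z \in \text{leaves}(T[\eta])$. First I would note that $T[\eta]$ again satisfies the conditions of Corollary~\ref{cor:ultrametric_lca} — value-monotonicity along root-paths is inherited by any rooted subtree — and that for leaves $\ell_i, \ell_j \in \text{leaves}(T[\eta])$ the LCA computed inside $T[\eta]$ agrees with the LCA computed inside $T$. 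Hence the $(1,z)$-clustering objective on $T[\eta]$ is well-defined and its costs use the same values $d(\cdot)$ as in $T$ (throughout, I abuse notation and write $\cost_z(T, c)$ for $\cost_z(T, \{c\})$).

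The key step is the equidistance claim: for every $\ell \in \text{leaves}(T) \setminus \text{leaves}(T[\eta])$ and every $c \in \text{leaves}(T[\eta])$, we have $d(\ell \lor c) = d(\ell \lor \eta)$, a quantity independent of $c$. To see this, observe that $\ell \notin \text{leaves}(T[\eta])$ means $\eta$ is not an ancestor of $\ell$, i.e. $\eta \notin p(\ell)$, whereas the root-path of $c$ passes through $\eta$. Therefore the first node shared by $p(\ell)$ and the root-path of $c$ is exactly the first node shared by $p(\ell)$ and the root-path of $\eta$, i.e. $\ell \lor c = \ell \lor \eta$.

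Given the claim, split the leaves of $T$ into those inside $T[\eta]$ and those outside it. For any center $c \in \text{leaves}(T[\eta])$,
\[
\cost_z(T, c) \;=\; \cost_z(T[\eta], c) \;+\; R, \qquad R := \sum_{\ell \in \text{leaves}(T) \setminus \text{leaves}(T[\eta])} d(\ell \lor \eta)^z ,
\]
where the residual $R$ is the same for every choice of $c$ in $\text{leaves}(T[\eta])$. Consequently $\cost_z(T, c) - \cost_z(T, c') = \cost_z(T[\eta], c) - \cost_z(T[\eta], c')$ for all $c, c' \in \text{leaves}(T[\eta])$. Now if $c_z$ were \emph{not} an optimal $(1,z)$-center for $T[\eta]$, some $c' \in \text{leaves}(T[\eta])$ would satisfy $\cost_z(T[\eta], c') < \cost_z(T[\eta], c_z)$, and then $\cost_z(T, c') < \cost_z(T, c_z)$, contradicting $c_z = OPT_{1, z}(T)$. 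Hence $c_z$ is optimal for every rooted subtree containing it.

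I expect the only delicate point to be the equidistance claim $\ell \lor c = \ell \lor \eta$; it is purely a statement about paths to the root in a rooted tree and uses none of the ultrametric structure — that structure enters only through the cost being a sum of $d(\text{LCA})^z$ terms and through Corollary~\ref{cor:ultrametric_lca} being inherited by subtrees. No induction is needed, and everything else is bookkeeping.
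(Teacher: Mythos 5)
Your proof is correct. It rests on the same key fact as the paper's argument -- every leaf outside the rooted subtree containing the center is equidistant (in the LCA sense) from all leaves inside it, so the total cost splits as the in-subtree cost plus a center-independent remainder $R$ -- but you deploy it differently. The paper proves the lemma by structural induction: assuming each child of the root carries its own optimal $(1,z)$-center, it shows the optimum for the whole tree must coincide with the optimum of the child subtree it lands in, and the statement for deeper subtrees follows by recursing; that bottom-up formulation mirrors the depth-first computation of corresponding $z$-centers used later (Lemma \ref{lma:cost_decreases_alg}). Your version dispenses with induction: the identity $\cost_z(T,c)=\cost_z(T[\eta],c)+R$ holds for an arbitrary ancestor $\eta$ of $c_z$ all at once, so optimality transfers in a single step to every rooted subtree containing $c_z$. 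This is slightly more direct and self-contained, at the cost of not exhibiting the recursive structure the paper exploits algorithmically. Your treatment of the one delicate point -- that $\ell\lor c=\ell\lor\eta$ whenever $\ell\notin\text{leaves}(T[\eta])$ and $c\in\text{leaves}(T[\eta])$ -- is right, and your decomposition also correctly covers the possibly nonzero leaf self-distances permitted in a relaxed ultrametric, since those terms sit inside $\cost_z(T[\eta],c)$ on both sides of the comparison.
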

\begin{proof}

    We show this inductively. The base case is the trivial LCA-tree of one node where, inherently, the only choice of the center is optimal, and there are no
    subtrees. For the inductive case, consider LCA-tree $T$ whose root has $k$ children, such that each child has an optimal center placed within it. Our goal
    is to show that if we were to have one center for all of $T$, the optimum would be one of the $k$ centers in its subtrees.

    If we only had one center to place for all of $T$, that center must be in one of its subtrees. WLOG, let the optimal center for $T$ be in the subtree
    $T_1$. Thus, our cost for one center is necessarily of the form $\text{cost}_1(T) = \text{cost}_1(T_1) + \sum_{i=2}^k |T_i| \cdot d(\text{root}(T))^z$,
    where $|T_i|$ is the number of leaves in the $i$-th subtree of $T$. By the inductive hypothesis, we already had an optimal center for subtree $T_1$,
    implying that $\text{cost}_1(T_1)$ is minimized by choosing the optimal center in $T_1$. The other term $\sum_{i=2}^k |T_i| \cdot d(\text{root}(T))^z$ does
    not depend on where in $T_1$ the center is placed. Therefore, the optimal center from $T_1$ remains optimal for $T$.

\end{proof}

The key thing to note about Lemma \ref{lma:optimal_subtrees} is that it must also apply to all internal nodes of an LCA-tree. I.e., suppose we place an optimal center with respect to the unmarked subtree $T' \subset T$. Since $T'$ is unmarked, Lemma \ref{lma:optimal_subtrees} holds for $T'$. Consequently, \emph{every internal node in the LCA-tree has an optimal center (leaf) associated with it.} We give this its own definition:
\begin{definition}
    Let $T$ be an LCA-tree satisfying the conditions in Corollary \ref{cor:ultrametric_lca}, let $\eta$ be a node in $T$, and let $z$ be a positive integer. Then $\eta$'s
    \emph{corresponding $z$-center} is $c_z(\eta) = OPT_{1, z}(T[\eta])$.
\end{definition}

Lemma \ref{lma:optimal_subtrees} is the key property of $(k, z)$-clustering in ultrametric spaces that makes the entire proof go through. To illustrate its effectiveness, consider the Euclidean $1$-means setting on a dataset of two clearly separated clusters. The Euclidean mean falls \emph{between} the two clusters. In contrast, Lemma \ref{lma:optimal_subtrees} says that, in the ultrametric setting, the optimal $1$-means solution is not only located within one of the two cluster, but that it is optimal for the cluster in which it is located. In practice, this means that after placing an optimal center, we can essentially forget about it -- it is guaranteed to be optimal for any set of points it serves.

\paragraph{Overview of Proof for Theorem \ref{thm:optimal_kz}.} We now give a simple blueprint illustrating how we use this for fast, optimal $(k, z)$-clustering. Assume we have placed the first center and have left a set of nodes unmarked. For each such unmarked
node, we can determine its optimal center as well as how much the subtree's $(k, z)$-clustering cost would decrease by placing this center. Curiously, an
application of Lemma \ref{lma:optimal_subtrees} shows that these cost-decreases themselves satisfy the strong triangle inequality.  Another application of Lemma
\ref{lma:optimal_subtrees} then allows us to show that greedily choosing the maximum cost-decrease gives an optimal $(k, z)$-clustering solution in an LCA-tree.
As a result, we can apply a variant of the $k$-center algorithm to the LCA-tree of cost decreases.

\paragraph{Notation.} We require additional notation to simplify the presentation. Let us define the cost of a node as \[ \text{NodeCost}(\eta, z) = |T[\eta]| \cdot d(\text{parent}(\eta))^z, \] where $|T[\eta]|$ is the number of leaves in the subtree
rooted at $\eta$. This represents the cost contributed by $\eta$'s leaves when $\eta$ is unmarked, but its parent is marked. In essence, this is the cost of
$T[\eta]$ in a $(k, z)$-clustering solution if there is no center in $T[\eta]$.  Similarly, we define the cost \emph{decrease} at $\eta$ as \[ \text{CostDecrease}(\eta, z)
= \text{NodeCost}(\eta, z) - \cost(T[\eta], c_z(\eta)),\]

\noindent where $\text{Cost}(T[\eta], c_z(\eta)) = \sum_{\ell \in \text{leaves}(T[\eta])} \dt(\ell, c_z(\eta))^z$. Here, $c_z(\eta)$ is $\eta$'s corresponding z-center. We define the cost-decrease of the root node to be infinite.

In essence, the cost-decrease quantifies how much placing an optimal center in $T[\eta]$ would decrease the subtree's total cost.  Importantly, the
cost-decrease of a node $\eta$ assumes that $\text{parent}(\eta)$ -- the node directly above $\eta$ -- is marked. We will see in Lemma
\ref{lma:cost_decreases_increase} that this is a reasonable assumption: every useful center we will place in the $(k, z)$-clustering setting will always have
a marked parent.

\subsubsection{Proving Theorem \ref{thm:optimal_kz}.}

\begin{algorithm}
    \caption{\texttt{Corresponding-z-Centers}}\label{alg:corresp_z_centers}
    \textbf{Input:} node $\eta$ in an LCA-tree; dict \texttt{Costs} mapping nodes to their ; dict \texttt{CostDecreases} mapping nodes to their cost decrease; dict $c_z$ mapping nodes to their corresponding-z-centers
    \begin{algorithmic}[1]
        \If{$\eta$ is leaf}
            \State $c_z[\eta] = \eta$
            \State \texttt{Costs}$[\eta] = d(\eta)$
            \State \textbf{Return}
        \EndIf \vspace*{0.2cm}

        \If{$\eta$ is root}
            \State ParentDist = $d(\eta) + 1$
            \State \texttt{CostDecreases}$[\eta] = \infty$
        \Else
            \State ParentDist = $d(\text{parent}(\eta))$
        \EndIf \vspace*{0.25cm}
        \State SumOfCosts = $\sum_{\eta' \in \text{children}(\eta)} |T[\eta']| \cdot d(\eta)$
        \State CostIfChosen = \{$\eta'$: 0 for $\eta' \in \text{children}(\eta)$\}
        \State ChildCostDecreases = \{$\eta'$: 0 for $\eta' \in \text{children}(\eta)$\}
        \For{$\eta' \in \text{children}(\eta)$}
            \State \texttt{Corresponding-z-Centers}($\eta'$, \texttt{Costs}, \texttt{CostDecreases}, $c_z$)
            \State CostIfChosen$[\eta']$ = SumOfCosts $- |T[\eta']| \cdot d(\eta)$ + \texttt{Costs}[$\eta'$]
            \State ChildCostDecreases$[\eta'] = |T[\eta]| \cdot $ParentDist - CostIfChosen$[\eta']$
        \EndFor \vspace*{0.25cm}
        \State ChosenChild = $\argmax$(ChildCostDecreases)
        \State $c_z[\eta] = c_z[\text{ChosenChild}]$
        \State \texttt{Costs}$[\eta]$ = CostIfChosen[ChosenChild]
        \For{$\eta' \in \text{children}(\eta)$ such that $\eta'$ is not ChosenChild}
            \State \texttt{CostDecreases}$[\eta']$ = $|T[\eta']| \cdot d(\eta)$
        \EndFor
        \State \textbf{Return}
    \end{algorithmic}
\end{algorithm}

First, we see that all of the corresponding $z$-centers can be found in $O(n)$ time on an LCA-tree:

\begin{lemma}
    \label{lma:cost_decreases_alg}
    Let $T$ be an LCA-tree satisfying the conditions in Corollary \ref{cor:ultrametric_lca}. Then there exists an algorithm which, for all $\eta \in T$, finds $c_z(\eta)$ and
    $\cost(T[\eta], c_z(\eta))$ in $O(n)$ time. Furthermore, this algorithm stores the cost-decrease for all nodes $\eta'$ for which $c_z(\eta') \neq
    c_z(\text{parent}(\eta'))$.
\end{lemma}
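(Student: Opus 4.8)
The plan is to show that Algorithm~\ref{alg:corresp_z_centers} — a single depth-first traversal of $T$ — does the job, proving correctness by structural induction on the subtree $T[\eta]$ and then bounding the work. The invariant I would carry is that, upon returning from the recursive call on $\eta$, the algorithm has (i) set $c_z(\eta)$ to an optimal $(1,z)$-clustering center of $T[\eta]$ (up to the standard equivalence among optimal solutions noted earlier) and (ii) stored $\cost(T[\eta], c_z(\eta))$ in \texttt{Costs}$\{\eta\}$. The base case is a leaf $\ell$: the only available center in $T[\ell]$ is $\ell$ itself, so $c_z(\ell)=\ell$ and $\cost(T[\ell],\ell)=d(\ell)^z$ (which is $0$ in a genuine ultrametric), exactly what the algorithm records.

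For the inductive step at an internal node $\eta$, I would invoke Lemma~\ref{lma:optimal_subtrees} twice. Applied to $T[\eta]$, whose root's children are $\children(\eta)$, it says the single optimal center of $T[\eta]$ lies in exactly one child subtree $T[\eta']$; applied again inside $T[\eta']$, it says that center is optimal for $T[\eta']$, hence equals $c_z(\eta')$. So the only candidates for $c_z(\eta)$ are the at most $|\children(\eta)|$ corresponding centers $c_z(\eta')$ — this reduction from a search over $|T[\eta]|$ leaves to one over the children is the whole point. For the cost if the center is $c_z(\eta')$: the leaves of $T[\eta']$ pay $\cost(T[\eta'],c_z(\eta'))$ by the induction hypothesis, while every leaf of $T[\eta]\setminus T[\eta']$ has $\eta$ as its LCA with $c_z(\eta')$ (because $\eta$ is the LCA of any two leaves lying in different children of $\eta$) and so contributes $d(\eta)^z$; the total is $\cost(T[\eta'],c_z(\eta'))+(|T[\eta]|-|T[\eta']|)\,d(\eta)^z$, which is precisely the quantity $\text{CostIfChosen}\{\eta'\}$ computed by the algorithm (since $\text{SumOfCosts}=|T[\eta]|\,d(\eta)^z$). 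Minimizing this over the children — equivalently, maximizing $|T[\eta]|\cdot\text{ParentDist}^z-\text{CostIfChosen}\{\eta'\}$, as the first term is constant across children — picks the right $c_z(\eta)$ and its cost, matching the algorithm. I would note that the root has no parent distance; substituting any fixed positive value (the algorithm uses $d(\text{root})+1$) only shifts all children's $\text{ChildCostDecreases}$ by a common constant, so the $\argmax$ is unchanged, and $\cd$ of the root is set to $\infty$ by fiat as in its definition.

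For the ``furthermore'' claim I would observe that a child $\eta'$ satisfies $c_z(\eta')\neq c_z(\text{parent}(\eta'))$ exactly when $\eta'$ is \emph{not} the chosen child of $\eta=\text{parent}(\eta')$, and for precisely those nodes the final loop records $\text{NodeCost}(\eta',z)=|T[\eta']|\,d(\eta)^z$; combined with $\cost(T[\eta'],c_z(\eta'))=$\texttt{Costs}$\{\eta'\}$, already available from part (i), this yields $\cd(\eta',z)=\text{NodeCost}(\eta',z)-\cost(T[\eta'],c_z(\eta'))$. For the runtime, I would precompute all subtree sizes $|T[\eta]|$ bottom-up in $O(n)$, then argue the DFS does $O(|\children(\eta)|)$ work at $\eta$ (forming $\text{SumOfCosts}$, the per-child $\text{CostIfChosen}$ and $\text{ChildCostDecreases}$, the $\argmax$, and the final loop, counting each arithmetic operation as $O(1)$); since $\sum_\eta|\children(\eta)|$ equals the number of edges of $T$, which is $O(n)$ for an LCA-tree (internal nodes may be assumed to branch, so there are $O(n)$ nodes), the total is $O(n)$. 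The step I expect to take the most care is the cost accounting in the inductive step: one must correctly see that every leaf outside $T[\eta']$ sits at LCA-distance exactly $d(\eta)$ from $c_z(\eta')$, which rests on the marking/LCA structure and the monotonicity from Corollary~\ref{cor:ultrametric_lca}; once Lemma~\ref{lma:optimal_subtrees} is in hand, everything downstream is bookkeeping.
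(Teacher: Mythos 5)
Your proposal is correct and follows essentially the same route as the paper: a depth-first traversal proved by induction, using Lemma~\ref{lma:optimal_subtrees} to restrict the candidate centers of $T[\eta]$ to its children's corresponding $z$-centers, with the cost recursion $\cost(T[\eta'],c_z(\eta'))+(|T[\eta]|-|T[\eta']|)\,d(\eta)^z$ and an $O(n)$ bound from constant work per tree edge. You simply spell out more of the bookkeeping (the LCA-based cost accounting, the root's dummy parent distance, and which non-chosen children get cost-decreases stored) than the paper's terser proof does.
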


\begin{proof}
    This is accomplished by Algorithm \ref{alg:corresp_z_centers}, which is essentially a depth-first implementation of Lemma \ref{lma:optimal_subtrees}'s
    proof.

    We prove by induction that Algorithm \ref{alg:corresp_z_centers} finds the costs for all internal nodes.  In the base case, our current node $\eta$ is
    a leaf. Thus, $\eta$'s corresponding $z$-center is $\eta$ and the cost of $\eta$ to this center is simply $d(\eta)$.

    We now essentially reuse the logic from Lemma \ref{lma:optimal_subtrees} to prove the inductive step. We begin the inductive step with a node $\eta$ along
    with the costs and corresponding $z$-centers of each of $\eta$'s children. That is, for all $\eta' \in \text{children}(\eta)$, we have access to both $c_z(\eta)$
    and $\cost(T[\eta'], c_z(\eta'))$. We now seek the optimal center for $\eta$ and what the cost would be in $T[\eta]$ after placing this center. By Lemma
    \ref{lma:optimal_subtrees}, we know that the optimal center for $\eta$ is one of its children's corresponding $z$-centers. I.e., $c_z(\eta)$ must be equal
    to $c_z(\eta')$ for one of the children $\eta'$. We, therefore, test what the cost would be if we placed the center at each of the children and chose the
    minimum. By Lemma \ref{lma:optimal_subtrees}, this center must be optimal. We therefore record the cost of placing this center in $\eta$, concluding the
    correctness proof.

    Since this is done by depth-first search, the algorithm runs in $O(n)$ time.
\end{proof}

Rather than thinking about corresponding $z$-centers as those which minimize the cost, we will instead think of them through the equivalent notion of the
centers which \emph{maximize} the cost-decrease.  The next lemma shows the peculiar property that these cost-decreases themselves form a relaxed ultrametric:

\begin{lemma}
    \label{lma:cost_decreases_ultrametric}
    Let $T$ be an ultrametric LCA-tree which satisfies the conditions in Corollary \ref{cor:ultrametric_lca} and let $T'$ be the LCA-tree obtained by replacing
    all values in $T$ by the cost-decreases. I.e., for all $\eta \in T'$, $d(\eta) = \text{CostDecrease}(\eta, z)$. Then, the LCA-distances over $T'$ also satisfy the conditions
    in Corollary \ref{cor:ultrametric_lca}.
\end{lemma}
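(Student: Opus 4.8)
The plan is to verify the two conditions of Corollary \ref{cor:ultrametric_lca} directly for the tree $T'$ whose node values are the cost-decreases. Condition (1), non-negativity, is immediate: by Lemma \ref{lma:optimal_subtrees}, placing the corresponding $z$-center $c_z(\eta)$ in $T[\eta]$ is optimal, so $\cost(T[\eta], c_z(\eta)) \leq \text{NodeCost}(\eta, z)$ (the latter being the cost when no center lies in $T[\eta]$ and every leaf pays $d(\text{parent}(\eta))^z$), and hence $\text{CostDecrease}(\eta, z) \geq 0$; the root gets value $\infty \geq 0$ by convention. The real work is condition (2): for any leaf $\ell$ and any two nodes $\eta_i \preceq \eta_j$ on the path from $\ell$ to the root, we need $\text{CostDecrease}(\eta_i, z) \leq \text{CostDecrease}(\eta_j, z)$. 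Since $\preceq$ on a path is generated by the parent relation, it suffices to prove this for $\eta_i$ a child of $\eta_j$, i.e. to show $\text{CostDecrease}(\eta', z) \leq \text{CostDecrease}(\text{parent}(\eta'), z)$ for every non-root $\eta'$.

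So fix a non-root node $\eta'$ with parent $\eta = \text{parent}(\eta')$, and let $g = \text{parent}(\eta)$ (handling $\eta = $ root separately, where $\text{CostDecrease}(\eta,z) = \infty$ trivially dominates). I would expand both cost-decreases using $\text{NodeCost}$ and the optimal subtree costs. Write $D(\eta) := \text{CostDecrease}(\eta,z) = |T[\eta]| \cdot d(g)^z - \cost(T[\eta], c_z(\eta))$ and $D(\eta') = |T[\eta']| \cdot d(\eta)^z - \cost(T[\eta'], c_z(\eta'))$. The key structural fact, coming from Lemma \ref{lma:optimal_subtrees} and its algorithmic realization in Lemma \ref{lma:cost_decreases_alg}, is that $c_z(\eta)$ equals $c_z(\eta'')$ for one particular child $\eta''$ of $\eta$ (the ``ChosenChild''), and $\cost(T[\eta], c_z(\eta)) = \cost(T[\eta''], c_z(\eta'')) + \sum_{\text{siblings } \eta''' \neq \eta''} |T[\eta''']| \cdot d(\eta)^z$. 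There are two cases. If $\eta'$ itself is the chosen child, then $\cost(T[\eta], c_z(\eta)) = \cost(T[\eta'], c_z(\eta')) + (|T[\eta]| - |T[\eta']|) d(\eta)^z$, and substituting gives
\[
D(\eta) - D(\eta') = |T[\eta]|\bigl(d(g)^z - d(\eta)^z\bigr) \geq 0,
\]
since $d(g) \geq d(\eta)$ by Corollary \ref{cor:ultrametric_lca} applied to $T$. If $\eta'$ is \emph{not} the chosen child, then $\eta'$ is a sibling contributing $|T[\eta']| d(\eta)^z$ to $\cost(T[\eta], c_z(\eta))$, and one computes $\cost(T[\eta], c_z(\eta)) \geq \cost(T[\eta], \text{(best center from } T[\eta'] \text{)}) = \cost(T[\eta'], c_z(\eta')) + (|T[\eta]| - |T[\eta']|) d(\eta)^z$ by optimality of the chosen child; combining this inequality with $d(g) \geq d(\eta)$ again yields $D(\eta) \geq D(\eta')$ after cancellation.

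The main obstacle I anticipate is bookkeeping in the second case: one must carefully track which terms of $\cost(T[\eta], c_z(\eta))$ involve $d(\eta)^z$ versus the recursively-computed interior cost of the chosen child's subtree, and use the fact that the chosen child was selected precisely because it maximizes the child cost-decrease (equivalently minimizes $\cost(T[\eta], \cdot)$ among the per-child choices). It may be cleaner to rephrase everything in terms of $\text{CostDecrease}$ recursions that mirror Algorithm \ref{alg:corresp_z_centers} — i.e. show $\text{CostDecrease}(\eta, z) = \max_{\eta' \in \text{children}(\eta)}\bigl[ |T[\eta]|\, d(g)^z - \text{CostIfChosen}(\eta')\bigr]$ and then argue monotonicity inductively — but the substance is the same: the single inequality $d(g) \geq d(\eta)$ from the original ultrametric, multiplied by the (monotone in depth) subtree sizes, is what forces the cost-decreases to be monotone along root paths. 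Once condition (2) is established, Corollary \ref{cor:ultrametric_lca} immediately gives that $T'$'s LCA-distances form a relaxed ultrametric, completing the proof.
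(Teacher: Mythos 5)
Your proposal is correct and takes essentially the same route as the paper's proof: you establish monotonicity of the cost-decreases along leaf-to-root paths by decomposing $\text{cost}(T[\eta], c_z(\eta))$ into the chosen child's cost plus the siblings' $|T[\cdot]|\,d(\eta)^z$ contributions and invoking $d(\text{parent}(\eta)) \geq d(\eta)$ (the paper chains child-to-parent via the chosen child and sibling-to-chosen-child, while you compare each child to its parent directly in two cases), together with non-negativity. One small slip to fix: in your second case the inequality should read $\text{cost}(T[\eta], c_z(\eta)) \leq \text{cost}(T[\eta], c_z(\eta'))$ -- the optimal center for $T[\eta]$ is no worse than the best center drawn from $T[\eta']$ -- which is the direction your own justification (``optimality of the chosen child'') provides and the one needed for the cancellation yielding $D(\eta) \geq D(\eta')$.
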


\begin{proof}
    By Corollary~\ref{cor:ultrametric_lca}, showing that $T'$ satisfies the strong triangle inequality simply requires verifying that the cost-decreases are
    non-negative and monotonically non-decreasing along any leaf-root path in $T'$. We first show that they are monotonically non-decreasing.

    Let $\eta$ be an unmarked node with $h$ children whose parent is marked.  We now place the optimal center $c_z$ in $T[\eta]$. WLOG, this center must land in
    one of $\eta$'s children's subtrees. Call this child $\eta_c$, implying that $c_z(\eta) = c_z(\eta_c)$. We now show that the cost-decrease of $\eta$ is
    greater than or equal to the cost decrease of $\eta_c$ by separating $\text{CostDecrease}(\eta, z)$ into a sum of terms, of which $\eta_c$ is a subset:
    \begingroup
    \allowdisplaybreaks
    \begin{align*} 
        \text{CostDecrease}(\eta, z) &= \text{NodeCost}(\eta, z) - \cost(T[\eta], c_z(\eta)) \\
        &= \left( |T[\eta_c]| \cdot d(\text{parent}(\eta))^z + \sum_{\substack{\eta' \in \text{children}(\eta) \\ \eta' \neq \eta_c}}|T[\eta']| \cdot
        d(\text{parent}(\eta))^z\right) \\
        &\quad - \cost(T[\eta], c_z(\eta)) \\
        &= \left( |T[\eta_c]| \cdot d(\text{parent}(\eta))^z + \sum_{\substack{\eta' \in \text{children}(\eta) \\ \eta' \neq \eta_c}}|T[\eta']| \cdot
        d(\text{parent}(\eta))^z \right) \\
        &\quad - \left( \cost(T[\eta_c], c_z(\eta)) + \sum_{\substack{\eta' \in \text{children}(\eta) \\ \eta' \neq \eta_c}} |T[\eta']| d(\eta)^z \right)\\
        &\geq \left( \text{NodeCost}(\eta_c, z) + \sum_{\substack{\eta' \in \text{children}(\eta) \\ \eta' \neq \eta_c}}|T[\eta']| \cdot d(\text{parent}(\eta))^z \right) -\\
        &\quad \left( \cost(T[\eta_c], c_z(\eta)) + \sum_{\substack{\eta' \in \text{children}(\eta) \\ \eta' \neq \eta_c}} |T[\eta']| d(\eta)^z \right)\\
        &= \left( \text{NodeCost}(\eta_c, z) - \cost(T[\eta_c], c_z(\eta)) \right) \\
        &\quad + \sum_{\substack{\eta' \in \text{children}(\eta) \\ \eta' \neq \eta_c}}|T[\eta']| \cdot \left( d(\text{parent}(\eta))^z - d(\eta)^z \right) \\
        &= \text{CostDecrease}(\eta_c, z) + \sum_{\substack{\eta' \in \text{children}(\eta) \\ \eta' \neq \eta_c}}|T[\eta']| \cdot \left( d(\text{parent}(\eta))^z - d(\eta)^z \right) \\
        &\geq \text{CostDecrease}(\eta_c, z),
    \end{align*}
    \endgroup

    \noindent where both inequalities are due to $d(\text{parent}(\eta)) \geq d(\eta)$. It must also be the case that $\text{CostDecrease}(\eta) \geq \text{CostDecrease}(\eta')$, where $\eta' \neq \eta_c$ is any other child of $\eta$. This is by transitivity, since $\text{CostDecrease}(\eta_c) \geq \text{CostDecrease}(\eta')$ by Lemma \ref{lma:optimal_subtrees}. Thus, the cost-decreases are monotonically non-decreasing along paths from the leaves to the root.

    It remains to be shown that the cost-decreases are necessarily non-negative. For this, we rely on the fact that the original distances in $T$ are non-negative.
    Since we already know that they are monotonically non-decreasing, we must only show that the cost-decrease of placing a center at a leaf is
    non-negative. To this end, let $\ell$ be any leaf. By Corollary \ref{cor:ultrametric_lca}. Then $\text{CostDecrease}(\ell, z) = \text{NodeCost}(\ell, z) - \cost(T[\ell], c_z(\ell))$.
    However, $\cost(T[\ell], c_z(\ell)) = d(\ell)$ while $\text{NodeCost}(\ell, z) \geq d(\text{parent}(\ell))$. Thus, $\text{CostDecrease}(\ell, z) \geq d(\text{parent}(\ell)) - d(\ell)
    \geq 0$.

\end{proof}

\noindent We note that the cost-decrease at a leaf $\ell$ is not necessarily $0$, meaning that the LCA-tree of cost decreases is \emph{not} an ultrametric (but still a relaxed ultrametric). This is why we required the definition of relaxed ultrametrics rather than standard ones. 

The next lemma shows that not only do these cost-decreases satisfy the conditions in Corollary \ref{cor:ultrametric_lca}, but they are also monotonically
\emph{increasing} in all relevant settings. In other words, if $T[\eta_1]$ has non-zero cost, then placing a center there decreases this cost by a non-zero amount.

\begin{lemma}
    \label{lma:cost_decreases_increase}
    Let $T$ be an ultrametric LCA-tree satisfying the conditions in Corollary \ref{cor:ultrametric_lca} and let $z$ be a positive integer. For any leaf $\ell
    \in T$, let $p(\ell) = [\ell, \eta_i, \ldots, \eta_j, r(T)]$ be the path from $\ell$ to the root.  Then for all $\eta_1,
    \eta_2 \in p(\ell)$ such that $d(\eta_1) > 0$, $\eta_1 \preceq \eta_2 \implies \text{CostDecrease}(\eta_1) < \text{CostDecrease}(\eta_2)$.
\end{lemma}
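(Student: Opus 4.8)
The plan is to upgrade the non-strict monotonicity of cost-decreases from Lemma~\ref{lma:cost_decreases_ultrametric} to a strict one. First I would reduce to a single step along the path: it suffices to prove $\text{CostDecrease}(\eta_1, z) < \text{CostDecrease}(\text{parent}(\eta_1), z)$ whenever $d(\eta_1) > 0$, because the hypothesis only grows as we move towards the root (node values are non-decreasing in the base ultrametric, and cost-decreases are non-decreasing by Lemma~\ref{lma:cost_decreases_ultrametric}); iterating this one-step inequality up to $\eta_2$ then gives the general claim, and the step into the root is trivial since $\text{CostDecrease}(r(T),z)=\infty$.

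For the one-step statement, fix $\eta = \text{parent}(\eta_1)$; if $\eta = r(T)$ we are done, so assume $\eta$ is an internal non-root node. I would refine the computation in the proof of Lemma~\ref{lma:cost_decreases_ultrametric} into an exact identity valid for every child of $\eta$ at once,
\[
    \text{CostDecrease}(\eta, z) \;=\; \max_{\eta' \in \text{children}(\eta)} \text{CostDecrease}(\eta', z) \;+\; |T[\eta]|\bigl(d(\text{parent}(\eta))^z - d(\eta)^z\bigr),
\]
which follows by writing $\text{Cost}(T[\eta],c_z(\eta))$ as the cost inside $c_z(\eta)$'s subtree plus the $d(\eta)^z$ contributions of the sibling subtrees, using that the algorithm picks the child whose corresponding $z$-center maximizes its own cost-decrease, and cancelling the $\text{NodeCost}$ terms. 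Hence for the child $\eta_1$ on the path, $\text{CostDecrease}(\eta,z) \ge \text{CostDecrease}(\eta_1,z) + |T[\eta]|\bigl(d(\text{parent}(\eta))^z - d(\eta)^z\bigr)$, with an additional strict gap precisely when $\eta_1$ is not a cost-decrease-maximizing child of $\eta$.

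Strictness therefore has two candidate sources: the distance gap $d(\text{parent}(\eta))^z > d(\eta)^z$, or $\eta_1$ being a non-maximizing child. Working in the canonical LCA-tree --- equivalently, collapsing every internal parent--child pair of equal value, which changes no LCA-distance --- makes the value of an internal node strictly smaller than that of its parent, so for $z\ge 1$ the distance gap is already strictly positive and $\text{CostDecrease}(\eta,z) > \text{CostDecrease}(\eta_1,z)$ follows at once. The hypothesis $d(\eta_1) > 0$ then only has to rule out the degenerate boundary configurations (for instance $\eta_1$ a leaf whose self-distance equals $d(\eta)$, or a subtree with no internal structure) in which the cost-decreases along the path collapse to $0$ and the statement the lemma is really needed for downstream --- that $\text{parent}(\eta_1)$ is already marked at the moment a \emph{useful} center is placed inside $T[\eta_1]$ --- is all that is at stake.

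I expect the true difficulty to be the careful treatment of ties rather than the algebra: equal node values along a root-to-leaf path in the base ultrametric (which forces one either to pass to the canonical LCA-tree or to confine the argument to where the hypothesis holds), and the possibility of several co-optimal centers inside a subtree (so the ``non-maximizing child'' slack may vanish). Isolating the unique configuration in which every source of strictness disappears and checking that $d(\eta_1)>0$ excludes it is the delicate part; the rest is a routine refinement of Lemma~\ref{lma:cost_decreases_ultrametric}.
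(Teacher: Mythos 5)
Your core argument is sound and takes a genuinely different route from the paper. The paper does not set up a one-step recursion; it proves the lemma in two pieces: non-strict monotonicity is quoted from Lemma \ref{lma:cost_decreases_ultrametric}, and then it argues it ``remains to show'' $\text{CostDecrease}(\eta_1,z)>0$ when $d(\eta_1)>0$, which it establishes by decomposing the cost-decrease over the child $\eta_c$ containing $c_z(\eta_1)$ and lower-bounding, explicitly invoking $d(\text{parent}(\eta_1))>d(\eta_1)$ along the way. Your exact identity
\[
\text{CostDecrease}(\eta,z)\;=\;\max_{\eta'\in\text{children}(\eta)}\text{CostDecrease}(\eta',z)\;+\;|T[\eta]|\bigl(d(\text{parent}(\eta))^z-d(\eta)^z\bigr)
\]
is correct (it is the computation inside Lemma \ref{lma:cost_decreases_ultrametric} carried out with equalities, using Lemma \ref{lma:optimal_subtrees} and $\text{NodeCost}(\eta',z)=|T[\eta']|\,d(\eta)^z$, and it also confirms that the optimal child is exactly a cost-decrease-maximizing one). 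It buys you both facts at once: non-strict monotonicity, and strictness whenever $d(\text{parent}(\eta))>d(\eta)$. That is logically cleaner than the paper's reduction, since ``non-decreasing plus positive at $\eta_1$'' does not by itself upgrade $\leq$ to $<$.

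The delicate point you flag --- equal values on a parent--child edge --- is real, and it is the one place where neither your plan nor the paper's proof is airtight as written. If $d(\eta_1)=d(\text{parent}(\eta_1))>0$ and $\eta_1$ is a cost-decrease-maximizing child, your identity yields exact equality $\text{CostDecrease}(\eta_1,z)=\text{CostDecrease}(\text{parent}(\eta_1),z)$; a chain of equal-valued internal nodes (value $5$ at $\eta_1$ and at its parent, each level adding one extra leaf, root value $10$) realizes this, so the hypothesis $d(\eta_1)>0$ does \emph{not} exclude the failure configuration, contrary to the hope in your last paragraph, and collapsing to the canonical tree proves the claim only for the nodes that survive the collapse. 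However, this is precisely the assumption the paper makes silently: its chain asserts ``$d(\text{parent}(\eta_1))>d(\eta_1)$'' as a fact, which holds only in the canonical strictly-increasing form (as produced by the construction in Theorem \ref{thm:ultrametric_equivalency}). So your proposal is correct modulo exactly the canonical-form assumption the paper also relies on, and your recursion is arguably the cleaner way to see both this lemma and the preceding one.
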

\begin{proof}
    
    By Lemma \ref{lma:cost_decreases_ultrametric}, we know that the cost decreases are monotonically non-decreasing along paths to the root. Thus, it remains to
    show that the cost-decrease at a node $\eta_1$ is non-zero if $d(\eta_1) > 0$. To do this, we first decompose the cost-decrease at $\eta$:
    \begin{align*}
        \text{CostDecrease}(\eta_1) &= \text{NodeCost}(\eta_1) - \cost(T[\eta_1], c_z(\eta_1)) \\
        &= |T[\eta_1]| \cdot d(\text{parent}(\eta_1)) - \cost(T[\eta_1], c_z(\eta_1)) \\
        &= d(\text{parent}(\eta_1)) \cdot \left( \sum_{\eta' \in \text{children}(\eta_1)} |T[\eta']| \right) - \cost(T[\eta_1], c_z(\eta_1)).
    \end{align*}

    \noindent Now, let $\eta_c$ be the child of $\eta_1$ containing $c_z(\eta_1)$. Much as in the proof of Lemma \ref{lma:cost_decreases_ultrametric}, we
    rewrite the above in terms of the costs in $\eta_c$ and the costs in the other children:
    \begingroup
    \allowdisplaybreaks
    \begin{align*}
        \text{CostDecrease}(\eta_1) &= d(\text{parent}(\eta_1)) \cdot \left( |T[\eta_c]| + \sum_{\substack{\eta' \in \text{children}(\eta_1) \\ \eta' \neq \eta_c}} |T[\eta']| \right) \\
        & - \cost(T[\eta_c], c_z(\eta_1)) - d(\eta_1) \cdot \left( \sum_{\substack{\eta' \in \text{children}(\eta_1) \\ \eta' \neq \eta_c}} |T[\eta']| \right) \\
        &= \left( \sum_{\substack{\eta' \in \text{children}(\eta_1) \\ \eta' \neq \eta_c}} |T[\eta']| \right)\cdot (d(\text{parent}(\eta_1)) - d(\eta_1)) \\
        &+ d(\text{parent}(\eta_1)) \cdot |T[\eta_c]| - \cost(T[\eta_c], c_z(\eta_1)) \\
        &\geq d(\text{parent}(\eta_1)) \cdot |T[\eta_c]| - \cost(T[\eta_c], c_z(\eta_1)) \\
        &\geq d(\text{parent}(\eta_1)) \cdot |T[\eta_c]| \\
        &\geq d(\eta_1) \cdot |T[\eta_c]| \\
        &> 0
    \end{align*}
    \endgroup
    where the first inequality is by the fact that $d(\eta_1) \leq d(\text{parent}(\eta_1))$, the second is due to the fact that the costs are strictly
    non-negative, the third is by the fact that $d(\text{parent}(\eta_1)) > d(\eta_1)$, and the fourth inequality is by the assumption that $d(\eta_1) > 0$.

\end{proof}

Lemma \ref{lma:cost_decreases_increase} allows us to address the fact that the cost-decrease at a node $\eta$ was defined under the assumption that $\eta$'s
parent is marked. By Lemma \ref{lma:cost_decreases_increase}, the cost decreases are either strictly increasing along paths to the root or are $0$. Thus, if two
nodes have equivalent non-zero cost-decreases, their subtrees must be disjoint.\footnote{Formally, for two nodes $\eta_1, \eta_2 \in T$ with $\text{CostDecrease}(\eta_1) > 0$ and
$\text{CostDecrease}(\eta_2) > 0$, we have that $\text{CostDecrease}(\eta_1) = \text{CostDecrease}(\eta_2) \implies T[\eta_1] \cap T[\eta_2] = \emptyset$.} As a result, when we go through the nodes sorted by their
cost-decreases, it will always be the case that the next node we place will either have its parent marked or will have a cost-decrease of $0$ (in which case it
is irrelevant in terms of the optimal solutions).

We finally move to our last lemma, which shows that greedily maximizing cost-decreases results in an optimal $(k, z)$-clustering over the LCA-tree.

\begin{lemma}
    \label{lma:greedy_is_optimal}
    The optimal $(k, z)$-clustering solution over an LCA-tree can be obtained by greedily choosing the $k$ centers that, at each step, maximize the
    cost-decrease.
\end{lemma}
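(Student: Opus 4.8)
The plan is to prove the lemma by strong induction on the number of leaves $|T|$, at each step reducing the problem to a separable integer resource-allocation problem whose ``diminishing returns'' property is supplied precisely by the monotonicity of cost-decreases established in \cref{lma:cost_decreases_ultrametric,lma:cost_decreases_increase}. The base case $|T|=1$ is immediate. For the inductive step, let $r$ be the root of $T$ with child-subtrees $T_1,\dots,T_h$. Any $(k,z)$-solution places $k_j\ge 0$ centers in $T_j$ with $\sum_j k_j = k$; since a leaf of $T_j$ is weakly closer to any center inside $T_j$ (whose LCA with the leaf lies in $T_j$, hence has value $\le d(\mathrm{root}(T_j))$) than to any center outside it (whose LCA with the leaf is $r$), the cost decomposes additively as $\cost_z(T,\bC)=\sum_{j:\,k_j\ge 1}\cost_z(T_j,\bC|_{T_j})+\sum_{j:\,k_j=0}|T_j|\,d(r)^z$, where ties in the marking procedure may be ignored because they occur only between equidistant centers and so never change the cost. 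Minimizing over $\bC$ therefore reduces to minimizing $\sum_j g_j(k_j)$ subject to $\sum_j k_j=k$, where $g_j(0)=|T_j|\,d(r)^z$ and $g_j(m)$ is the optimal $(m,z)$-cost in $T_j$ for $m\ge 1$ (the constraint that some $k_j\ge 1$ is automatic since $k\ge 1$).

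I then invoke the induction hypothesis on each $T_j$: greedy is optimal in $T_j$, so $g_j(m)$ equals the optimal $(1,z)$-cost in $T_j$ minus the sum of the $m-1$ largest cost-decreases among the non-root nodes of $T_j$ (exactly the values recorded by Algorithm \ref{alg:corresp_z_centers}; note these values are identical whether $T_j$ is viewed in isolation or inside $T$, as they involve only data interior to $T_j$). Hence the marginal gains $g_j(m)-g_j(m+1)$ for $m\ge 1$ are the successive non-root cost-decreases of $T_j$ in decreasing order, while $g_j(0)-g_j(1)=\text{NodeCost}(\mathrm{root}(T_j),z)-\cost(T_j,c_z(\mathrm{root}(T_j)))=\text{CostDecrease}(\mathrm{root}(T_j),z)$ computed with parent-distance $d(r)$. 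By \cref{lma:cost_decreases_ultrametric} cost-decreases are non-decreasing toward the root, so $\mathrm{root}(T_j)$'s cost-decrease dominates every other cost-decrease in $T_j$; thus each $g_j$ is convex. For separable minimization of a sum of convex functions under a cardinality constraint, the greedy rule — repeatedly assign the next unit of budget to the coordinate with the largest current marginal decrease — is optimal (a standard exchange argument, which I would spell out in a line). It remains to check that this coordinate-wise greedy \emph{is} the tree-level greedy of the statement: a unit assigned to $T_j$ while $k_j=0$ activates $\mathrm{root}(T_j)$ (whose parent $r$ is already marked), and a unit assigned while $k_j\ge 1$ activates the next non-root node of $T_j$ in cost-decrease order (whose parent is marked, by the inductive structure together with \cref{cor:ultrametric_lca}); in both cases the available marginal gains are exactly the cost-decreases of the currently activatable nodes, so picking the global maximum is precisely picking the center of largest cost-decrease. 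Hierarchicality is then immediate, since each activation splits exactly one cluster, just as in \cref{lma:optimal_kcenter}.

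The main obstacle is the bookkeeping in the inductive step, specifically: (i) verifying that embedding $T_j$ into $T$ does not alter its internal costs or its optimal $(m,z)$-solutions despite ties in the marking procedure, and (ii) making the correspondence between ``a unit of budget given to subtree $T_j$'' and ``activating a particular unmarked node whose parent is marked'' precise enough that the textbook greedy-for-convex-separable-allocation result and the tree-level greedy in the statement are visibly the same algorithm. Once \cref{lma:cost_decreases_ultrametric,lma:cost_decreases_increase} are in hand, the remaining ingredients — convexity of each $g_j$ and optimality of greedy allocation — are routine; indeed, those two lemmas are exactly what forces the marginal-gain sequences to be decreasing, which is what makes greedy optimal.
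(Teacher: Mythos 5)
Your proposal is correct in outline but takes a genuinely different route from the paper. The paper proves \cref{lma:greedy_is_optimal} by a direct exchange argument: it supposes a strictly better solution exists, maps each of its centers to the closest greedy center, and in both resulting cases (a one-to-one mapping, or some greedy center receiving two optimal centers while another receives none) uses \cref{lma:optimal_subtrees} together with the greedy selection rule to swap one center and strictly decrease the ``optimal'' cost, a contradiction. You instead induct on the tree, decompose the cost at the root into the child subtrees, and recast the problem as separable integer resource allocation with per-subtree curves $g_j$; \cref{lma:cost_decreases_ultrametric,lma:cost_decreases_increase} give that the marginal gains of each $g_j$ are non-increasing (the cost-decrease of $\mathrm{root}(T_j)$, taken with parent distance $d(r)$, dominates all interior cost-decreases, which are themselves produced in decreasing order by the inductive greedy), so the classical incremental-greedy optimality for separable convex allocation applies, and you then check that the allocation greedy coincides with the tree-level greedy of the statement. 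What your route buys: it makes the diminishing-returns structure explicit (essentially re-deriving what the paper later exploits for \cref{cor:elbow_plot}), and it argues via costs rather than center identities, which handles ties more cleanly than the paper's case (a) (``greedy must have chosen $c_o$''), which silently identifies equal-cost optima. What the paper's route buys: it is shorter and needs no induction or allocation machinery, since \cref{lma:optimal_subtrees} plus the greedy choice already power the swap. Two wording points to keep precise when you write yours up (you gesture at both, so these are not gaps): the marginal gains of $g_j$ are the cost-decreases of the nodes recorded by \cref{alg:corresp_z_centers}, i.e.\ those $\eta'$ with $c_z(\eta') \neq c_z(\mathrm{parent}(\eta'))$, not of all non-root nodes of $T_j$ --- otherwise the top-$(m-1)$ sum could double count along a single center's marked path --- and the fact that processing nodes in sorted order always finds the parent already marked is the consequence of \cref{lma:cost_decreases_increase} (strict increase toward the root for nonzero cost-decreases), not of \cref{cor:ultrametric_lca} alone.
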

\begin{proof}

    We show this by contradiction. Consider that there is a solution that was obtained greedily and another, different one that is actually optimal. We now map
    every center in the ``optimal'' solution to its closest center in the ``greedy'' one and observe how the optimal solution's cost changes. There are two
    cases that can occur: either all of the greedy centers receive one optimal center each, or there is at least one greedy center that receives more than one
    optimal center and another that receives none.
    
    We start with the case where each greedy center $c_g$ has one optimal center $c_o$ mapped to it. By definition, $c_o$ must be in the set of points that are
    assigned to $c_g$. Thus, it is either (a) in $T[c_g]$ or (b) in another subtree whose parent was marked by $c_g$. In case (a),
    Lemma~\ref{lma:optimal_subtrees} states that the greedy algorithm must have chosen $c_o$. In case (b), by the greedy algorithm, $T[c_g]$ has greater cost
    minimization than $T[c_o]$. Thus, we can decrease the cost of the optimal solution by replacing $c_o$ with $c_g$, giving a contradiction.  Therefore, we
    conclude that if one optimal center was mapped to each greedy center, then the solutions must be equivalent.
    
    It remains to consider the case where more than one optimal center was mapped to a greedy center $c_g$. WLOG, let there be two optimal centers $c_o^1$ and
    $c_o^2$ that are mapped to $c_g$. By a similar argument as above, $c_g$ must be the same as one of these optimal centers, i.e. $c_g = c_o^1$. By extension,
    $c_o^2 \neq c_g$.  Now consider the greedy center elsewhere in the tree that had no optimal center mapped to it. Call this center $c_g'$. This means that
    the greedy algorithm had both $T[c_o^2]$ and $T[c_g']$ available to it but chose $T[c_g']$. Thus, $\text{CostDecrease}(T[c_o^2]) < \text{CostDecrease}(T[c_g'])$. We can therefore decrease
    the cost of the optimal solution by replacing center $c_o^2$ with center $c_g'$.  This gives the desired contradiction.

\end{proof}

\noindent This brings us to the primary result of this chapter, restated from before:
\kzclusteringtheorem*

\begin{proof}

    We use Algorithm~\ref{alg:corresp_z_centers} from Lemma \ref{lma:cost_decreases_alg} to find the cost-decreases for $(k, z)$-clustering in the LCA-tree.
    By Lemma \ref{lma:cost_decreases_ultrametric}, these satisfy the strong triangle inequality. Furthermore, by Lemma \ref{lma:greedy_is_optimal}, greedily
    choosing centers that maximize the cost-decreases gives an optimal $(k, z)$-clustering. 

    Thus, running farthest-first traversal on the cost-decrease LCA-tree will give the partitions for $(k, z)$-clustering solutions. This guarantees that the corresponding partitions are hierarchical.
    
    However, we must be careful about where we place the centers: while running a naive farthest-first traversal on the cost-decrease LCA-tree will give the \emph{partitions} of the optimal $(k, z)$-clustering solutions, it will not necessarily give the optimal \emph{centers}. This is due to the fact that the farthest-first traversal is optimal regardless of which center we pick for every subtree. Luckily, we have already addressed this. When considering the LCA-tree of cost-decreases, we have a mapping between every cost-decrease and the center that induces it. Thus, we will perform the farthest-first traversal by placing the nodes' corresponding $z$-centers. This ensures that both the partition \emph{and} the centers align with the optimal $(k, z)$-clustering solutions.

    Putting this all together, our final algorithm -- Algorithm \ref{alg:ultrametric_kz} -- is quite simple. We first run Algorithm \ref{alg:corresp_z_centers}
    to return a list of nodes and their cost-decreases.  Importantly, Algorithm \ref{alg:corresp_z_centers} returns only the cost-decreases for those nodes
    $\eta'$ with $c_z(\eta') \neq c_z(\text{parent}(\eta'))$.  We then sort this list in $\texttt{Sort}(n)$ time. By the discussion after Lemma
    \ref{lma:cost_decreases_increase}'s proof, we can safely go through this list and place the nodes' corresponding $z$-centers: the nodes with non-zero
    cost-decrease will always have their parent marked. The nodes with zero cost-decrease come at the end of the sorted list and do not affect the optimality of
    the solution. By Lemma \ref{lma:optimal_subtrees}, each corresponding $z$-center is immediately optimal in every node that it marks. Thus, Algorithm
    \ref{alg:ultrametric_kz} optimally solves the $(k, z)$-clustering objective in LCA-trees which satisfy the conditions in Corollary
    \ref{cor:ultrametric_lca}. The bottleneck in this algorithm remains the time to sort the $O(n)$ internal values in the LCA-tree.
    
    \begin{algorithm}
        \caption{\texttt{Ultrametric-kz}}\label{alg:ultrametric_kz}
    \textbf{Input:} LCA-tree $T$
    \begin{algorithmic}[1]
        \State \texttt{Costs}, \texttt{CostDecreases}, $c_z$ = \{ \}, \{ \}, \{ \}
        \State \texttt{Corresponding-z-Centers}($T$.root, \texttt{Costs}, \texttt{CostDecreases}, $c_z$) // pass-by-reference
        \State \texttt{CostDecreases} = OrderedDict(\texttt{CostDecreases})
        \For{$\eta \in \texttt{CostDecreases}$}
            \State Place center at $c_z(\eta)$
        \EndFor
    \end{algorithmic}
    \end{algorithm}

\end{proof}

Together, Theorems \ref{thm:kcenter_theorem} and \ref{thm:optimal_kz} prove Theorem \ref{thm:optimal_clusters} from the main body of the paper.

\section{Further Details for Section \ref{sec:best_clustering} - Choosing a Partition}
\label{app:best_clustering}

\subsection{Ultrametric Elbow Method}
\label{app:ultrametric_elbow}

\begin{figure*}[hbt]
    \centering
    \includegraphics[width=\textwidth, trim={0.75cm 6.75cm 1.2cm 0.4cm}, clip]{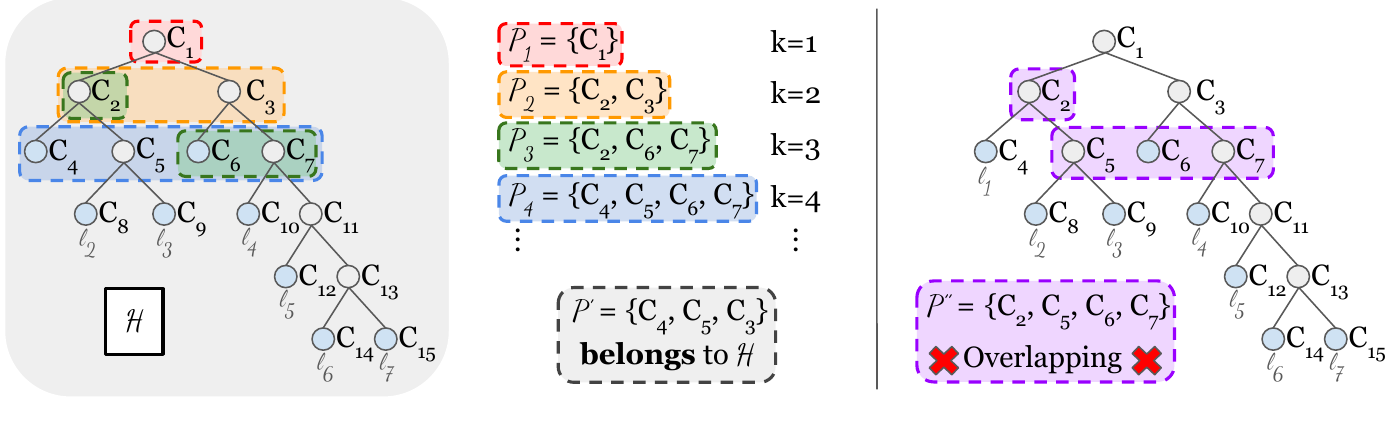}
    \caption{\emph{Left}: an example hierarchy $\mathcal{H} = \{\mathcal{P}_1, \mathcal{P}_2, \ldots \}$; $\mathcal{P}' \not\in \mathcal{H}$ is then an example partition which \emph{belongs} to $\mathcal{H}$. \emph{Right}: $\mathcal{P}''$ is \emph{not} a partition since $\ell_2$ and $\ell_3$ each belong to clusters $C_2$ and $C_5$.}
    \label{fig:elbow_weakness}
\end{figure*}

We begin by quickly showing Corollary \ref{cor:elbow_plot}, which holds as a direct consequence of Lemma \ref{lma:cost_decreases_increase} in Section \ref{app:k_z_clustering_proof}:
\ElbowPlotCor*

\begin{proof}
    Lemma \ref{lma:cost_decreases_increase} states that the cost-decreases associated with nodes in an LCA-tree monotonically increase along paths from any leaf to the root or are all 0. By Lemma \ref{lma:greedy_is_optimal}, we solve $(k, z)$-clustering using farthest-first traversal over the LCA-tree of cost-decreases. I.e., we start at the root and greedily pick the cluster that gives the maximal cost-decrease at that step. Consequently, each cost-decrease we pick must be smaller than the previous one.
    
    Thus, let $\Delta_k' = \mathcal{L}_k - \mathcal{L}_{k+1} = -\Delta_k$. By Lemma \ref{lma:cost_decreases_increase}, we have $\Delta_k' > \Delta_{k+1}' \geq 0$ or $\Delta_k' = \Delta_{k+1} = 0$. Plugging in $\Delta = -\Delta'$ completes the proof.
\end{proof}

\paragraph{Choosing the elbow}
Although there are many methods for finding the elbow index, we are in the privileged setting where a single elbow exists and is clearly delineated. Inspired by \citet{elbow_angle}, we simply define the elbow as the index where there most strongly appears to be a right angle. Namely, let $\vec{v}_i = (i, \mathcal{L}_i)$ be the $(x, y)$ position of the $i$-th point in the elbow plot. Then, for every index $k \in \{2, \ldots, n-1\}$, let $\theta_k$ be the angle induced by the vectors\footnote{In practice, we normalize the $k$ values and the costs to be in $[0, 1]$ so that the scales are comparable.} $(\vec{v}_1 - \vec{v}_k)$ and $(\vec{v}_k - \vec{v}_n)$. We define the elbow as being at the index $k$ where $\theta_k$ is closest to 90 degrees. Since Theorem \ref{thm:optimal_clusters} gives us all of the partitions for $k \in \{1, \ldots, n\}$ simultaneously, this index can be found $O(n)$ time given the cluster hierarchy.

We note that Figure \ref{fig:elbow_plot} only plots the elbow curves for values of $k$ up to 100. This is because plotting until $k=n$ makes the plot look in practice like a vertical line followed by a horizontal line, as seen in Figure \ref{fig:elbow_plot_until_n}. However, we use all values of $k$ from $1$ to $n$ when choosing the elbow.

\begin{figure}
    \centering
    \includegraphics[width=0.5\linewidth]{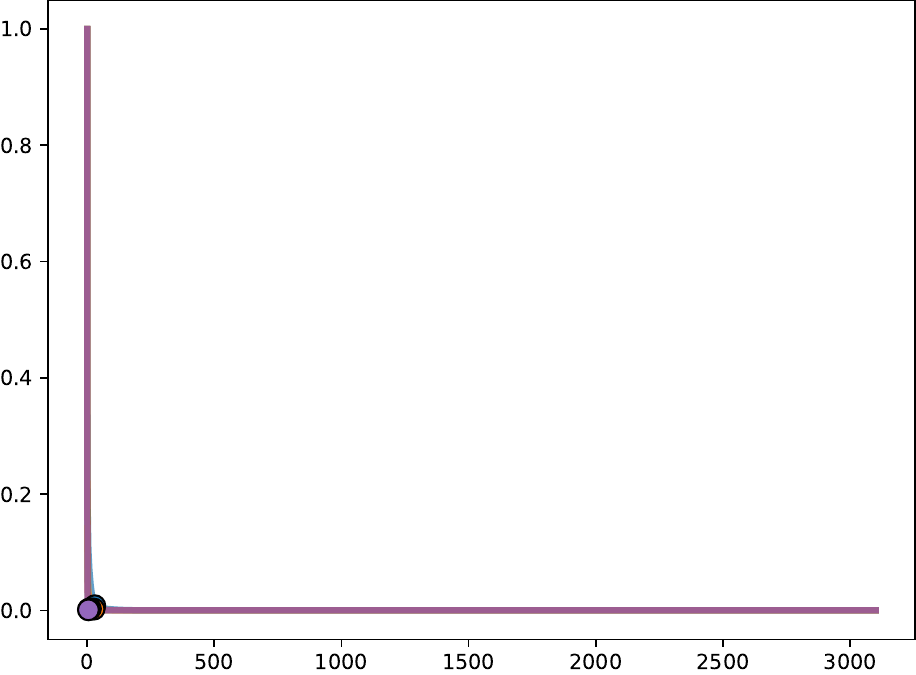}
    \caption{The same elbow plot as in Figure \ref{fig:elbow_plot} with values of $k$ from $1$ to $n$. The chosen elbows are higlighted with circles.}
    \label{fig:elbow_plot_until_n}
\end{figure}

\subsection{Agglomerative Clustering Algorithms under our Framework}

Corollary \ref{cor:ultrametric_lca} and Theorem \ref{thm:optimal_clusters} imply that a large set of agglomerative clustering algorithms can be interpreted as $k$-center over various relaxed ultrametrics. As an example, consider the complete linkage algorithm \citep{agglomerative_clusterings}. Here, one starts with every point in its own cluster and recursively merges those clusters $C_i$, $C_j$ which have the smallest merge distance $\argmin_{C_i, C_j} \max_{x_i \in C_i, x_j \in C_j} d(x_i, x_j)$. Importantly, with each subsequent merge, the merge distances are monotonically non-decreasing. Thus, by Corollary \ref{cor:ultrametric_lca}, labeling each cluster in the hierarchy by its merge distance gives us a relaxed ultrametric and, by Theorem \ref{thm:optimal_clusters}, $k$-center over this ultrametric gives us the complete-linkage hierarchy. Indeed, this is true of \emph{any} agglomerative clustering method where the merge distances progressively grow as we approach the root cluster.

\subsection{Thresholding}
\label{app:thresholding}

We quickly explain how one can partition a cluster hierarchy by thresholding. We assume that the cluster hierarchy $\mathcal{H}$ has every internal node labeled by its cluster's cost. As discussed in the main body of the paper, this constitutes a relaxed ultrametric.

Now let $\varepsilon$ be any threshold value. We can return the set of clusters that have cost less than $\varepsilon$ by depth-first search in $O(n)$ time. This is precisely what DBSCAN* does on the dc-dist relaxed ultrametric. Namely, DBSCAN* returns clusters of core points that are within $\varepsilon$ of each other under the dc-dist. Under the dc-dist's relaxed ultrametric definition, this is specifically the set of nodes with a value less than $\varepsilon$.

\subsection{Cluster Value Functions}
\label{app:cluster_merging_rules}

The idea is a generalization of the excess-of-mass measure in HDBSCAN: we have a user-defined function $v(C)$ which assigns a non-negative value to each cluster in the hierarchy. I.e., $v(C) \geq 0$ for all clusters $C \in \mathcal{H}$. Then the \emph{best} partition maximizes the sum of these values:

\begin{definition}
    Given a value function $v$, we define the \textbf{best} partition under $v$ as the partition that maximizes the sum of valuations. I.e. $\mathcal{P}_v(\mathcal{H}) = \argmax_{\mathcal{P} \text{ belonging to } \mathcal{H}} \sum_{C \in \mathcal{P}} v(C)$.
\end{definition}

This brings us to the following result:

\begin{restatable}{theorem}{BestClustering}
    \label{thm:best_clustering}
    Let $\mathcal{H}$ be a cluster hierarchy. Let $v$ be a function such that, for all $C \in \mathcal{H}$, $v(C)$ can be obtained in $O(1)$ time. Then there exists an algorithm to find the best partition $\mathcal{P}_v(\mathcal{H})$ in $O(n)$ time.
\end{restatable}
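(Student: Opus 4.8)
The plan is to phrase this as a bottom-up dynamic program over the tree underlying $\mathcal{H}$. By Definition~\ref{def:hierarchical_clusters}, $\mathcal{H}$ can be represented as a rooted tree $T_{\mathcal{H}}$ whose nodes are exactly the clusters of $\mathcal{H}$, whose root is the top cluster, and whose leaves are the singletons; every internal node is the union of its (two) children. First I would observe that a partition $\mathcal{P}$ \emph{belonging to} $\mathcal{H}$ is precisely a \emph{tree cut} of $T_{\mathcal{H}}$: a set of clusters $\{C_1,\ldots,C_k\}\subseteq\mathcal{H}$ such that every root-to-leaf path of $T_{\mathcal{H}}$ contains exactly one $C_i$. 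Indeed, ``exactly one per path'' is equivalent to the clusters being pairwise disjoint and jointly covering the leaves, which is exactly the condition in Figure~\ref{fig:elbow_weakness}. Since the number of such cuts is in general exponential in $n$, we cannot enumerate them; instead we exploit additivity of $v$, in direct analogy with the excess-of-mass computation in HDBSCAN.

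For each cluster $C\in\mathcal{H}$, let $\widehat{v}(C)$ denote the maximum of $\sum_{C'\in\mathcal{Q}}v(C')$ over all tree cuts $\mathcal{Q}$ of the subtree $T_{\mathcal{H}}[C]$ rooted at $C$. The key step is to establish the recursion
\begin{equation*}
    \widehat{v}(C) = \begin{cases} v(C), & C \text{ a leaf of } T_{\mathcal{H}},\\[2pt] \max\bigl( v(C),\; \textstyle\sum_{C'\in\text{children}(C)}\widehat{v}(C') \bigr), & \text{otherwise,} \end{cases}
\end{equation*}
and I would prove it by induction on the height of $C$. Any tree cut of $T_{\mathcal{H}}[C]$ either equals $\{C\}$ (contributing $v(C)$), or does not contain $C$, in which case its restriction to each child subtree $T_{\mathcal{H}}[C']$ is a tree cut of that subtree; since the child subtrees partition the leaves below $C$ and $v$ is summed over disjoint clusters, the total value is the sum of the values of these restrictions, which by the inductive hypothesis is at most $\sum_{C'}\widehat{v}(C')$. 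Conversely, pasting together optimal cuts of the child subtrees yields a valid cut of $T_{\mathcal{H}}[C]$ attaining $\sum_{C'}\widehat{v}(C')$. Taking the maximum of the two cases gives $\widehat{v}(C)$, and evaluating at the root gives $\widehat{v}(\text{root}) = \max_{\mathcal{P}\text{ belonging to }\mathcal{H}}\sum_{C\in\mathcal{P}}v(C)$, i.e.\ the value of $\mathcal{P}_v(\mathcal{H})$.

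Algorithmically, I would compute all values $\widehat{v}(C)$ in a single post-order traversal of $T_{\mathcal{H}}$, storing at each internal node a bit recording which of the two cases attained the maximum. The partition $\mathcal{P}_v(\mathcal{H})$ is then recovered by a top-down pass: starting at the root, if a node's bit says ``keep whole'' output that cluster and stop descending, otherwise recurse into its children. For the runtime: $T_{\mathcal{H}}$ has $O(n)$ nodes (at most $n$ leaves and, since each internal node merges two subtrees, at most $n-1$ internal nodes) and $O(n)$ edges in total, so the sums $\sum_{C'\in\text{children}(C)}\widehat{v}(C')$ cost $O(n)$ aggregated over all $C$; each $v(C)$ is an $O(1)$ evaluation by hypothesis; the post-order pass and the recovery pass each touch every node once. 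Hence $\mathcal{P}_v(\mathcal{H})$ is found in $O(n)$ time. I do not anticipate a serious obstacle; the only points requiring care are the identification of partitions belonging to $\mathcal{H}$ with tree cuts of $T_{\mathcal{H}}$, and the bookkeeping of the ``keep/split'' bits needed to reconstruct the optimal cut rather than merely its value.
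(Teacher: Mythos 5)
Your proposal is correct and takes essentially the same route as the paper: Algorithm~\ref{alg:best_clustering} is precisely your post-order dynamic program that computes $\max\bigl(v(C),\sum_{C'\in\text{children}(C)}\widehat{v}(C')\bigr)$ at each node and recovers the optimal cut, with the same $O(n)$ accounting. The only minor difference is that the paper treats candidate partitions as arbitrary non-overlapping collections of clusters (not necessarily covering the leaves) and uses the standing assumption $v(C)\geq 0$ to argue that all children may be included, whereas you identify partitions belonging to $\mathcal{H}$ with covering tree cuts; under $v\geq 0$ the two optima coincide, so the arguments agree.
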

\begin{proof}
    The algorithm for finding the best clustering is essentially Algorithm 3 from \citet{hdbscan}. We provide our own version of it in Algorithm \ref{alg:best_clustering} for completeness' sake.\footnote{Note, we write $B_v(C)$ to refer to the best clustering of the cluster hierarchy rooted at $C$.} It works by depth-first search where, at each node, we simply calculate its value and compare it against the sum of the values of its children. Since calculating the value takes $O(1)$ time and there are $O(n)$ nodes in the cluster hierarchy, the algorithm therefore runs in $O(n)$ time.

    We prove its correctness inductively. In the base case, we have a single leaf whose best clustering is simply the leaf itself. In the inductive case, we are given a cluster $C$ in the hierarchy and have the best clusterings of $C$'s children. We seek to find $B_v(C)$. By the non-overlapping requirement, if we include $C$ in $B_v(C)$, then we cannot include any of its children. Similarly, since the values of the children are non-negative, if we include one child, then we may as well include all of them. Thus, $B_v(C)$ is either $\{C\}$ or $\{C' : C' \in \text{children}(C)\}$.

    \begin{algorithm}
    \caption{\texttt{BestClustering}}\label{alg:best_clustering}
    \textbf{Input:} node $C$ in an cluster hierarchy;\\
    \textbf{Output:} the best clustering under this node $B_v(C)$, the value of the clustering $v(C)$;
    \begin{algorithmic}[1]
        \If{$C$ is leaf}
            \State $B_v(C) = \{C\}$
            \State \textbf{Return} $B_v(C)$
        \EndIf \vspace*{0.2cm}

        \State ChildValues $= 0$
        \State ChildClusterings $= \{ \}$
        \For{$C' \in \text{children}(C)$}
            \State $B_v(C'), v(C') = \texttt{BestClustering}(C')$
            \State ChildValues += $v(C')$
            \State ChildClusterings.append($B_v(C')$)
        \EndFor
        \If{$v(C) >$ ChildValues}
            \State \textbf{Return} $\{C\}$, $v(C)$
        \EndIf
        \State \textbf{Return} ChildClusterings, ChildValues
    \end{algorithmic}
    \end{algorithm}

\end{proof}


\paragraph{The \emph{stability} cluster value function.} 
We now introduce the stability cluster value function from \citet{hdbscan}. Let $C$ be any cluster in a hierarchy and let $C'$ be $C$'s parent in that hierarchy. Then the stability objective can be roughly phrased as the following:
\begin{equation}
    \label{eq:EoM}
    v_{E}(C) = |C| \cdot \left( \frac{1}{\mathcal{L}(C)} - \frac{1}{\mathcal{L}(C')} \right)
\end{equation}

We note that our description of the stability criterium differs slightly from the original function discussed in \cite{hdbscan, hdbscan_long, acceleratedHDBSCAN}. Namely, we omit here the notion that singleton points may fall out of the clustering as it is not easy to represent in our notation and the differences are negligible for the purposes of this discussion. Our implemented stability criterion is the original (correct) one. For a full discussion of the original function, we refer the reader to the referenced literature.

In either case, the stability value function can be interpreted as emphasizing those clusters that have a large number of points and have significantly lower costs than their parent. While we find that the stability criterion performs well in the $k$-center hierarchy, it can produce sub-par partitions when applied in the $(k, z)$-clustering hierarchies. This is because the per-point cost in the $(k, z)$-clustering task is comparable to the per-cluster cost in the $k$-center objective. 

To be consistent with the literature \cite{hdbscan}, we utilize the stability cluster value function in the noisy setting. Given a user defined parameter $\mu$ representing the minimum cluster size, we first prune the tree so that all nodes with fewer than $\mu$ children are removed. We then run the stability value function over the remaining points. This selects a set of internal nodes to represent the clusters. Finally, we re-introduce the points which were pruned away. If the re-introduced points are in the sub-tree of a cluster, we assign them to the cluster. If, instead, they are not below a cluster found by the stability function, we label them as ``noise''.

\subsection{Further details on the dc-dist ultrametric}
\label{app:dc_dist_ultrametric}

We begin by proving Proposition \ref{fact:dc_dist_ultrametric}:

\DcDistUltra*

\begin{proof}\label{prf:dc_dist_ultrametric}
    Note that it is known that the minimax distances over a space constitute an ultrametric \citep{minimax_distance}. Thus, we have that $d_{dc}(\ell_i, \ell_j) = d(\ell_j, \ell_i)$ and that $d_{dc}(\ell_i, \ell_j) \geq 0$ for all $\ell_i, \ell_j$.
    
    To show that the dc-dist is a relaxed ultrametric, we will leverage that the only difference between it and the minimax distance is that the minimax distance of a point to itself is 0 while the dc-dist of a point to itself is its mutual reachability. Thus, we will prove that the dc-dist is a relaxed ultrametric by showing that the distance to itself is non-negative and that it inherits the strong-triangle inequality from the minimax ultrametric. Together, these imply the two properties for Corollary \ref{cor:ultrametric_lca}.
    
    First, note that $d_{dc}(\ell_i, \ell_i) = \max(||\ell_i - \ell_i||, \kappa_{\mu}(\ell_i), \kappa_{\mu}(\ell_i)) = \max(0, \kappa_{\mu}(\ell_i)) = \kappa_{\mu}(\ell_i)$. Thus, the dc-dist of a point to itself is the distance of $\ell_i$ to its $\mu$-th nearest neighbor in the ambient metric. This is necessarily non-negative. Similarly, the mutual-reachabilities for any other pair of points are also necessarily non-negative. Because the dc-dist is the minimax distance over the pairwise mutual reachabilities and the pairwise mutual reachabilities are all non-negative, the dc-dist must also be.

    Let us now show that the dc-dist satisfies the second property of Corollary \ref{cor:ultrametric_lca}. First, notice that the dc-dist of a point to itself is necessarily less than or equal to the dc-dists of that point to any other point in the set, i.e., $d_{dc}(\ell_i, \ell_i) \leq \max(d_{dc}(\ell_i, \ell_j), d_{dc}(\ell_j, \ell_i))$ for all $\ell_j$. To see this, consider the mutual reachability of $\ell_i$ to any other point $\ell_j$ is necessarily greater than $m_{\mu}(\ell_i, \ell_i)$. Namely, $m_{\mu}(\ell_i, \ell_j) = \max(d'(\ell_i, \ell_j), \kappa_{\mu}(\ell_i), \kappa_{\mu}(\ell_j)) \geq \kappa_{\mu}(\ell_i)$. As a result, any step in the mutual reachability MST that originates at $\ell_i$ will be at least as large as $m_{\mu}(\ell_i, \ell_i) = d_{dc}(\ell_i, \ell_i)$. Put simply, a point's closet point under the dc-dist is itself. Pairing this with the inheritance of the strong triangle inequality from the minimax distance gives us the second property of Corollary \ref{cor:ultrametric_lca}.
    
    Thus, we have shown the two properties for Corollary \ref{cor:ultrametric_lca}: the dc-dists are non-negative and are non-decreasing as we traverse the DC tree from any leaf to the root. Consequently, it is a relaxed ultrametric.
\end{proof}

\paragraph{Relationship to DBSCAN and HDBSCAN.} First, \citet{hdbscan} showed that one can obtain DBSCAN* partitions using the single-linkage hierarchy over the mutual reachabilities. Rephrasing into this paper's notation, they showed that one can obtain DBSCAN* embeddings by thresholding the dc-dist's LCA-tree at a user-defined value $\varepsilon$; i.e., removing all nodes $\eta$ in the LCA-tree with $d(\eta) > \varepsilon$. 
Subsequently, \citet{beer2023connecting} proved that $k$-center can be optimally solved over the dc-dist and that these partitions correspond to single-linkage over the mutual reachabilities. 
In this sense, one can think of $k$-center on the minimax distances as equivalent to single-linkage clustering. 
Lastly, \citet{hdbscan} showed that, given the dc-dists's LCA-tree, one can choose a partition from it by optimizing the EoM cluster merging criterium over the pruned tree.

\section{Runtime Speedups / Efficient Operations}
\label{app:implementation}


We now give an overview of how we implement the theory from Section \ref{app:k_center_proof} and \ref{app:k_z_clustering_proof} in practice in our codebase. We assume that we have already computed an ultrametric so that queries to the ultrametric require $O(1)$ time. From this, we describe how we build an LCA-tree, how to transform it to other clustering hierarchies, and how we extract partitions from it. We center this discussion on the dc-dist's dc-tree as a point of reference since we believe that this is the most common use case of our framework. However, we note that the discussion applies immediately to any relaxed ultrametric.

\subsection{Building and utilizing the hierarchy efficiently (theoretical complexities)}
The overall structure of our implementation is the following:
\begin{outline}[enumerate]
    \1 \textbf{Building the hierarchy}
        \2 Compute annotations of cost-decreases over the dc-tree bottom up. These contain information that we will use for creating hierarchies.
        \2 Sort the annotations.
        \2 Create the hierarchy in a way that takes O(n) time utilizing the parent pointers in the annotations and in-place pointer updating. 
            
    \1 \textbf{Get each solution in O(n)}
        \2 Annotate each node in the tree with the k that resulted in creating this node, given that the $k$-th center is chosen.
        \2 Top-down (or bottom-up) algorithm that cuts all edges in the tree going from $k$-annotation $> k$ to $k$-annotation $\leq k$. The result is all nodes above the cut.
        \2 Small edge case to deal with for nodes with $> 2$ children.
    \1 \textbf{Initialize smart pointer access} to the leaves in the tree so that internal nodes can get their leaves in constant time.
\end{outline}

\subsubsection{Building the new LCA-tree of cost-decreases}
How we build the tree in $O(n \cdot log(n))$.

Two main functions:
\begin{outline}
    \1 Annotate tree
    \1 Create hierarchy
\end{outline}

\textbf{Annotating the tree}\\
`Annotate tree' creates the annotations for each internal node in the tree. Each annotation is the following:
$A_k = [cost\_decrease, center, parent\_pointer, tree\_node]$.
\begin{idea}
    Only maximal annotations for a given center will be picked with the greedy algorithm.
\end{idea}
By maximal annotations, we mean those highest in the tree corresponding to a specific center, i.e., the annotation for that center with the highest annotated cost-decreases. To see this, consider that only maximal annotations will be children of marked paths of other, already picked, centers in the tree.

\begin{idea}
    The cluster corresponding to the parent center $p'$ of the maximal annotation of $p$ is exactly the cluster/set of points from which choosing $p$ will exclusively take points.
\end{idea}
This comes from the fact that each annotation is maximal in its corresponding subtree, which means that the maximal subtree above always will have chosen that center first. Also, any other center $p''$ chosen within that subtree of the maximal annotation of $p'$ will have taken disjoint points from that center $p'$'s cluster that $p$ cannot otherwise a contradiction and $p''$ should have been the parent annotation of $p$.

We formally define the parent center as the single center in solution $k-1$ that contains all points of the $k$'th center, which will get assigned for $k$. We now get the following insights needed to make building the tree efficient: 

1: We only need to store one annotation for each center, where we just store the highest / best annotation cost-decrease for that center. This can be updated as the annotations are computed. \\
2: Each annotation/center can contain a pointer to the parent center. 

Having these, we now have all the information required to build the tree - for each new center, we have exactly the cost-decrease corresponding to picking it and the parent center. We can sort the list of annotations, and each k solution will correspond to the first k centers picked in that order.

\textbf{Building the tree from the annotations}\\
First, we sort the annotations. Picking the first annotation corresponds to the root node and cluster of all points with that center. Picking a subsequent new center will always correspond to a split in the tree, splitting up the parent center's cluster. One side of the split will be what is left over from the parent cluster, and the other side will be the new cluster for this $k$. The only caveat is that if there are multiple, some number $a$, annotations with the same cost-decrease and parent, we only get $a+1$ nodes from this multi-split. We get a node for each of the $a$ new centers and the $a+1$'st node for what is left over in the partition of the parent node's cluster. 

Now, the key is that the cost-decreases are decreasing in the sorted order. This means that any subsequent annotations with a parent that already has been split up will always create a new split at the lowest possible node/cluster corresponding to that parent center, and all nodes above that also correspond to that parent center will never get new splits from them. This means that we can maintain for each center the current lowest/active node, which will be where any new center pointing to it should split from. This can easily be managed within the annotations with a pointer that is updated as the tree is created. 

The steps are then generally the following at a given $k$'th annotation for some center $c_k$ in the traversal of the sorted list:

Case 1: If the parent node $p$ corresponding to some center $c_p$ has no offspring, add two new nodes below. One for the new annotation and $p'$ for the old center representing that nodes have been taken from it. For the new node for $c_k$, update the annotation to point to it.  

Case 2: If the parent $p$ already has offspring associated with a higher cost-decrease, then add the new split nodes below $p'$  that then have to exist as that other offspring will have created that in case 1. Update the parent center's annotation to now point to $p'$ instead of $p$, as we can now be sure everything pointing to that center should never be added to $p$ but instead to $p'$ or further below at a later point. For the new node for $c_k$, update the annotation to point to it. 

Case 3: The other offspring of the parent node has the same cost-decrease associated with the current annotation. This means the $p'$ has already been created, so just add a singular new node as a child of $p$ corresponding to center $c_k$. For the new node for $c_k$, update the annotation to point to it.

\textbf{Complexity}
Computing the annotations is a single-pass bottom-up algorithm over the tree that does constant work in each tree node. As there are $O(n)$ nodes in the tree, this step has a worst-case complexity of $O(n)$.

Sorting the list of annotations has a worst-case complexity of $O(n \cdot log(n))$ as there are $n$ annotations to sort. 

Computing the tree from the sorted annotations does a single scan over the list of annotations, with constant work in any iteration and, therefore, a worst-case complexity of $O(n)$. 

Therefore, the total worst-case complexity is $O(n \cdot log(n))$.

\subsubsection{Optimizing the tree: Resolving ties better} \label{app:ties_heuristic}

An important observation is that due to the number of equidistant points under the relaxed ultrametric, many centers will often have tied distances to points between them. For example, consider that we have placed our first center $c_1$ and it marks every node along the path to the root. Now let there be two subtrees $T_i$ and $T_j$ which are \emph{not} marked but whose parent node \emph{is} marked. We now place a center $c_2$ in $T_i$. This creates a new path of markings that stops at the root of $T_i$.

Now notice that \emph{every} node in $T_j$ is equidistant to center $c_1$ and to center $c_2$. Thus, we can leave $T_j$ assigned to $c_1$, or we can re-assign it to $c_2$, and the cost remains \emph{unchanged}. Over the course of placing $k$ clusters, there will inevitably be many such ties, implying that there are an exponential number of optimal solutions!

The previous theory described this setting by simply utilizing a ``first come, first served'' principle and never re-assigning points to new clusters. However, under the dc-dist and other minimax path distances, this may result in unintuitive clusterings. To see this, consider a set of 10 copies of the same cluster which are equally spaced apart. Let there be two centers assigned to these: $c_1$ is placed in the first copy of the cluster, and $c_2$ is placed in the second copy of the cluster. The remaining 8 clusters must now be assigned to either $c_1$ or $c_2$. However, due to them being equally spaced apart, we can assign each of these 8 clusters to either center and the cost will remain the same.

Our recommendation on how to resolve this is to introduce a secondary heuristic for tie-breaking. Namely, if a subtree is equidistant to two centers under the ultrametric, assign it to the center which it is closest to in the data's original metric. I.e., if our ambient distance metric is Euclidean, then we use the Euclidean distance as a tie-breaker. This provides the intuitive clustering that one expects when looking at groups of points.

In practice, we achieve this by giving each internal node in the tree a representative point - the Euclidean mean of its leaves. Each annotation not yet marked (and thus chosen at a later k) maintains the closest center based on the Euclidean distance to its representative, which is resolved between any center that marks its parent. At the k where this annotation itself is chosen, we have found the best annotation, and thus, the internal node that this annotation and sub-tree of nodes itself should become a subtree of. We update all annotation pointers in this way by placing the centers one by one, following the path from the center to the leaf checking distances to unmarked sub-trees, and updating the pointer if the new center is closer than other tied centers that had already been marked. 

As this is just a processing step over the annotations before the tree is actually constructed, it can easily be plugged in/out based on a boolean flag. Furthermore, the choice of heuristic can also easily be changed, but here it should be kept in mind that it might influence the running time if any complex function is used. 

The choice of using the mean point as a representative and comparing Euclidean distances is based on the following: under the dc-dist, one often gets one cluster which consists of multiple equidistant sub-clusters. These equidistant sub-clusters often follow snaky paths throughout the ambient space. Let us now suppose our large cluster gets split into two, implying that its sub-clusters must get assigned to one of these two new groups. Since the sub-clusters are all equidistant, they can be arbitrarily assigned to either of the two groups. By breaking ties so that the sub-clusters are closest to each other in the ambient space, we dramatically increase the chance of the sub-clusters getting re-assigned in an intuitive manner.

\textbf{Complexity}\\
This process requires traversing the tree from the leaf to the root for all centers placed. To construct the full hierarchy/dendrogram/tree of solutions, we do constant time calculations (scales with dimensionality) at each visit of an internal node, as it only requires computing the Euclidean distance of two points. This means that the worst-case complexity becomes $O(n^2)$ in the case of a fully unbalanced tree, but in practice and expectation, this will be much closer to $O(n \cdot log(n))$.

\subsubsection{Find solution for a given $k$}
To find the solution for a given $k$ efficiently, we start out by annotating each node of the new tree with additional information. Generally, the insight is that going from $k$ to $k+1$ corresponds to splitting up a single node/cluster into two new nodes in the tree. A slight detail is that some nodes will have multiple children, where the "split" is implicit as all the children split from the same node. We can mark at which $k$ each node is split from the parent in constructing the tree, where $k$ is just the current iteration. We simply mark all new nodes created in an annotation with that $k$.

To then recover a solution for a given $k$, all that is required is to cut all edges in the tree going from $k' \geq k$ to $k' < k$. The solution is the clusters associated with all the nodes above the cut. The only note is that if a leaf is reached and no cut is found in that path, the leaf is just part of the solution. 

The correctness of this algorithm comes from the fact that any clusters below this cut only existed at a higher $k''$ than the solution for $k$ we are recovering.

\textbf{Complexity}\\
Marking the tree is done with a constant factor added to the complexity of the list traversal of building the tree, so $O(n)$. 

Finding the cut can also be done in a single traversal of the tree top-down, simply stopping the traversal of a given branch when a cut is found and returning the nodes. So this is also $O(n)$.

Furthermore, the solution for a given $k$ can be stored, and the algorithm to find another $k$ can be "resumed" from this solution, making finding similar $k$ values very fast, almost constant.

\subsubsection{Find leaves for a given node quickly / labeling a solution}
We create an array of the id's in the leaves, where each id is inserted in postfix order. This means that looking at the tree, each internal node of the tree will correspond to a continuous area of that array. If we store the bounds of those segments in each internal node, the nodes corresponding to a node can be returned in constant time by just returning that continuous segment of data of the new array. Only a single postfix order traversal of the tree is required to set up this array and pointers in the internal node, which then takes $O(n)$.

However, if we want to recover the labels in the standard form of each label being in the order that the points were provided, a traversal of the recovered solution is required, putting the corresponding label in the right place in an output array of labels. Simply put, the label value of point $p$ has to be inserted in place $p$ of the output array of labels. This requires worst-case $O(n)$ complexity for a given $k$ solution, as it is just a linear scan with constant work for each of the $n$ points.

\section{The Algorithms}

We now describe how these algorithms are implemented. We use the terminology $n$-ary dc-tree to refer to a non-binary LCA-tree storing dc-dist relationships. Again, this immediately transfers to all relaxed ultrametrics.

Algorithm \ref{alg:k-centroid-annotation} describes how we find the corresponding $z$-centers and the costs associated with them in practice. Algorithm \ref{alg:k-centroid-hierarchy} describes how we use this information to obtain the corresponding $(k, z)$-clustering hierarchy. Algorithm \ref{alg:k-centroid-cluster} shows how we extract clusterings for a specific value of $k$ from a hierarchy. Lastly, Algorithm \ref{alg:opt_annos} describes how we implement the tie-breaking heuristic assuming Euclidean distance.

\renewcommand{\cost}{\text{cost}} 
\newcommand{\id}{\text{id}}
\newcommand{\parent}{\text{parent}} 
\renewcommand{\children}{\text{children}}
\renewcommand{\dist}{\text{dist}}
\newcommand{\size}{\text{size}}
\newcommand{\tree}{\text{tree}}
\newcommand{\cent}{\text{center}}
\newcommand{\add}{\text{add}}
\newcommand{\costDec}{costDecrease}

\begin{algorithm}[bht]
    \newcommand{\bestCost}{bestCost}
    \newcommand{\bestCenter}{bestCenter}
    \newcommand{\currCost}{currCost}
    \newcommand{\currCenter}{currCenter}
    
    \caption{$k$-centroid-annotation}\label{alg:k-centroid-annotation}
    \textbf{Input:} An $n$-ary dc-tree $T$ over the dataset $\mathbf{X}$, power $z$, array $A$
    \begin{algorithmic}[1]
        \If{$|T| = 1$}
            \State $A[T.\id] = \{ (T.\parent.\dist)^z, T.\id, nullPtr\}$
            \State \textbf{return} 0, $T.\id$
        \Else
            \State $\bestCost = \infty$, $\bestCenter = -1$
            \For{$C \in T.children $}
                \State $subCost, subCenter = k\text{-centroid-annotation}\left( C, z \right)$
                \State $\currCost  = subCost + (T.\dist)^z \cdot (T.\size - C.\size)$
                \If{$\currCost < \bestCost$}
                    \State $\bestCost =\currCost$, $\bestCenter = \currCenter$
                \EndIf
            \EndFor
            
            \For{$C \in T.\children $}
                \State $A[C.\id].\parent = \bestCenter$
            \EndFor
            
            \State $\costDec = (T.\parent.\dist^z \cdot T.\size - \bestCost$
            \State $A[T.\id] = \{ \costDec, \bestCenter, nullPtr\}$
            \State \textbf{return} $\bestCost, \bestCenter$)
        \EndIf
    \end{algorithmic}
\end{algorithm}

\newpage

\newcommand{\Anno}{A} 
\newcommand{\ParentAnno}{a_{par}}
\newcommand{\CurrAnno}{a_{cur}} 
\newcommand{\Node}{\text{Node}} 
\newcommand{\node}{N}
\newcommand{\NewNode}{\node_{new}}
\newcommand{\NewParent}{\node_{par}}
\newcommand{\ParentResidual}{\node'_{par}}

\begin{algorithm}[h]
    \caption{$k$-centroid-hierarchy}\label{alg:k-centroid-hierarchy}
    \textbf{Input:} An $n$-ary dc-tree $T$ over the dataset $\mathbf{X}$, power $z$
    \begin{algorithmic}[1]
        \State $\Anno = k\text{-centroid-annotation}\left( T, z, [T.size] \right)$
        \State $\text{Sort}(\Anno)$
        \State Root $ = \Node(\parent = nullPtr, \cost = -1, \id = \Anno[0].\cent)$
        \State $\Anno[0].\tree = $ Root
        
        \For{$\CurrAnno \in \Anno[1]...\Anno[n-1]$}
            \State $\NewNode = \Node(\parent = nullPtr,\cost = -1, \id = \CurrAnno.\cent)$
            \State $\CurrAnno.\tree = \NewNode$

            \State $\ParentAnno = A[\CurrAnno.\parent], c_{p} = \ParentAnno.\tree.\cost $
            \If{$c_p \neq \CurrAnno.\cost \land c_p \geq 0$} \Comment{Parent has added children with higher cost}
                \State $\NewParent = \ParentAnno.\tree.\children[0]$ 
                \State $\NewParent.\cost = \CurrAnno.\costDec$
                \State $\NewNode.\parent = \NewParent$
                \State $\ParentResidual = \Node(\parent = \NewParent, \cost = -1, \id = \NewParent.\id)$
                \State $\NewParent.\children.\add(\ParentResidual)$ \Comment{Add same center node as first child}
                \State $\NewParent.\children.add(\NewNode)$
                \State $\ParentAnno.\tree = \NewParent$ \Comment{Update annotation to point to lowest corresponding node}
                
            \ElsIf{$c_p < 0$} \Comment{Parent has no children added}
                \State $\NewNode.\parent = \ParentAnno.\tree$
                \State $\ParentResidual = \Node(\parent = \ParentAnno.\tree, \cost = -1, \id = \ParentAnno.\id)$
                \State $\ParentAnno.\tree.\children.\add(\ParentResidual)$
                \State $\ParentAnno.\tree.\children.\add(\NewNode)$
                
            \Else \Comment{Parent has children added with same cost}
                \State $\ParentAnno.\tree.\children.\add(\NewNode)$
                \State $\NewNode.\parent = \ParentAnno.\tree$
            \EndIf
        \EndFor
        \State \textbf{return} Root
    \end{algorithmic}
\end{algorithm}

\newpage

\begin{algorithm}[h]
    \newcommand{\searchK}{searchK}
    \newcommand{\minK}{minK}
    \newcommand{\maxK}{maxK}

    \caption{$k$-centroid-cluster}\label{alg:k-centroid-cluster}
    \textbf{Input:} An annotated $k$-centroid hierarchy-tree $T$ over the dataset $\mathbf{X}$, $\searchK$
    \begin{algorithmic}[1]
        \If{$|T| = 1$}
            \State \textbf{Output} $T$            
        \EndIf
        
        \State $\maxK = \text{maxK}(T.\children), \minK = \text{minK}(T.\children)$

        \If{$\maxK \leq \searchK$} \Comment{Every edge should be cut}
            \For{$C \in T.\children $}
                \State $k\text{-centroid-cluster}(C, \searchK)$
            \EndFor
        \Else \Comment{Only some edges should be cut}
            \State $A = []$
            \For{$C \in T.\children $}
                \If{$C.k > \searchK \lor C.\text{is\_orig\_cluster}$} \Comment{Non-cut edges merge with original}
                    \State $A.\add(C)$
                \Else
                    \State \textbf{Output} $C$
                \EndIf
            \EndFor
            
            \State \textbf{Output} $\text{Merge}(A)$
        \EndIf
    \end{algorithmic}
\end{algorithm}

\begin{algorithm}[h]
    \caption{Optimize Annotations}\label{alg:opt_annos}
    \textbf{Input:} Sorted list of annotations in decreasing order $\Anno$, tree $T$ annotated with representatives 
    \begin{algorithmic}[1]
       \For{$\CurrAnno \in \Anno$} \Comment{For each annotation}
            \State $\node = \CurrAnno.leaf$
            \While{$\node \neq null$} \Comment{Traverse from center leaf to root}
                \State $\node.mark(\CurrAnno.center)$ 
                \For{$C \in N.children$}
                    \If{$C.mark = null$} \Comment{Any unmarked children of marked path}
                        \State $C.anno.bestParent = min\{C.anno.bestParent, Euclid(C.rep, Anno.center)\}$
                    \EndIf
                \EndFor
                \State $\node = \node.parent$
            \EndWhile
       \EndFor
       \State \textbf{Return} $\Anno$ \Comment{Return list of annotations with updated pointers}
    \end{algorithmic}
\end{algorithm}

\newpage
\section{Implementations of the used HSTs}
\label{app:experiments}

\begin{figure*}[t]
    \centering
    \begin{subfigure}[t]{0.95\textwidth}
        \centering
        \captionsetup{oneside,margin={0.6cm,0cm}}
        \begin{tikzpicture}
            \tikzmath{
            	\xmax = 6.3;
                \ymax = 2.7;
            }
            \node[inner sep=0pt] (img1) {
                \rotatebox{90}{
                    \includegraphics[width=0.38\linewidth, trim={0 0 28.72cm 0}, clip]{figures/DCFramework_Ultrametrics.jpg}
                }
            };
            \node (labels) at ($(-0.67*\xmax, 1.25*\ymax)$) {\textcolor{darkgray}{\small Cover tree}};
            \node (labels) at ($(0, 1.25*\ymax)$) {\textcolor{darkgray}{\small KD tree}};
            \node (labels) at ($(0.67*\xmax, 1.25*\ymax)$) {\textcolor{darkgray}{\small HST-DPO}};

            \node (labels) at ($(-0.67*\xmax, 1.09*\ymax)$) {\textcolor{darkgray}{\small $k$-median/GT}};
            \node (labels) at ($(0, 1.09*\ymax)$) {\textcolor{darkgray}{\small $k$-median/GT}};
            \node (labels) at ($(0.67*\xmax, 1.09*\ymax)$) {\textcolor{darkgray}{\small $k$-median/GT}};

        \end{tikzpicture}
        \caption{Comparison of HST ultrametrics using the $k$-median hierarchy and the ground-truth value of $k$.}
        \label{fig:exp_ablation_ultrametrics_d31}
    \end{subfigure} \vspace*{1cm}
    \\
    \centering
    \begin{subfigure}[t]{0.88\textwidth}
        \centering
        \captionsetup{oneside,margin={0.3cm,0cm}}
        \begin{tikzpicture}
            \tikzmath{
            	\xmax = 6.3;
                \ymax = 2.7;
            }
            \node[inner sep=0pt] (img2) {
                \rotatebox{90}{
                    \includegraphics[width=0.4122\linewidth, trim={0 0 28.72cm 0}, clip]{figures/DCFramework_Methods.jpg}
                }
            };
            \node (labels) at ($(-0.665*\xmax, 1.25*\ymax)$) {\textcolor{darkgray}{\small DC tree}};
            \node (labels) at ($(0, 1.25*\ymax)$) {\textcolor{darkgray}{\small DC tree}};
            \node (labels) at ($(0.665*\xmax, 1.25*\ymax)$) {\textcolor{darkgray}{\small DC tree}};

            \node (labels) at ($(-0.665*\xmax, 1.09*\ymax)$) {\textcolor{darkgray}{\small $k$-center/Stability}};
            \node (labels) at ($(0, 1.09*\ymax)$) {\textcolor{darkgray}{\small $k$-median/MoE}};
            \node (labels) at ($(0.665*\xmax, 1.09*\ymax)$) {\textcolor{darkgray}{\small $k$-means/Elbow}};

        \end{tikzpicture}
        \caption{Comparison of hierarchy/partition combinations on the dc-dist ultrametric.}
        \label{fig:exp_ablation_partitioning_d31}
    \end{subfigure}\vspace*{-0.1cm}
    \caption{A visualization of ultrametrics and hierarchy/partition combinations on the D31 dataset.}
    \label{fig:combined_ablation_d31}
\end{figure*}

We build KD trees, Cover trees using the C++ library \emph{mlpack4} \citep{mlpack}, and build HST-DPO trees using the C++ code from \url{https://github.com/yzengal/ICDE21-HST}.

\section{Used Datasets}
Table \ref{tbl:dataset_overview} lists the datasets on which we validated and compared the proposed framework SHiP to other competitors.

\section{Tables -- Runtimes and performance metrics}
Table 5 shows the runtimes, Table 6 the ARI values, Table 7 shows the NMI values, Table 8 shows the AMI and Table 9 shows the correlation coefficients cores for all datasets. 

\begin{table}[!ht]
\centering
\caption{Dataset properties. Number of samples ($\bm n$), dimensions ($\bm d$), number of ground truth clusters ($\bm k$), number of noise points (\#noise), and the source.}
\label{tbl:dataset_overview}
\resizebox{0.65\linewidth}{!}{
\begin{tabular}{cclrrrrr}
\toprule
\toprule
&& Dataset & $n$ & $d$ & $k$ & \#noise & Source \\
\midrule
\parbox[t]{2mm}{\multirow{19}{*}{\rotatebox[origin=c]{90}{\textbf{Density-based 2D-Data}}}}
&& Boxes & 21,600 & 2 & 12 & 0 & \cite{yoon2023clustering}\\
\cmidrule{2-8}
&\parbox[t]{2mm}{\multirow{18}{*}{\rotatebox[origin=c]{90}{Tomas Barton Benchmark}}}
& D31 & 3100 & 2 & 31 & 0 & \cite{deric}\\
&& 3-spiral & 312 & 2 & 3 & 0 & \cite{deric}\\
&& aggregation & 788 & 2 & 7 & 0 & \cite{deric}\\
&& chainlink & 1,000 & 3 & 2 & 0 & \cite{deric}\\
&& cluto-t4-8k & 8,000 & 2 & 6 & 764 & \cite{deric}\\
&& cluto-t5-8k & 8,000 & 2 & 7 & 1,153 & \cite{deric}\\
&& cluto-t7-10k & 10,000 & 2 & 9 & 792 & \cite{deric}\\
&& cluto-t8-8k & 8,000 & 2 & 8 & 323 & \cite{deric}\\
&& complex8 & 2,551 & 2 & 8 & 0 & \cite{deric}\\
&& complex9 & 3,031 & 2 & 9 & 0 & \cite{deric}\\
&& compound & 399 & 2 & 6 & 0 & \cite{deric}\\
&& dartboard1 & 1,000 & 2 & 4 & 0 & \cite{deric}\\
&& diamond9 & 3,000 & 2 & 9 & 0 & \cite{deric}\\
&& jain & 3,373 & 2 & 2 & 0 & \cite{deric}\\
&& pathbased & 299 & 2 & 3 & 0 & \cite{deric}\\
&& smile1 & 1,000 & 2 & 4 & 0 & \cite{deric}\\
\cmidrule{2-8}
&& Synth\_low & 5,000 & 100 & 10 & 500 & \cite{densiredPaper}\\
&& Synth\_high & 5,000 & 100 & 10 & 500 & \cite{densiredPaper}\\
\midrule
\parbox[t]{2mm}{\multirow{13}{*}{\rotatebox[origin=c]{90}{\textbf{Real-World Data}}}}
&\parbox[t]{2mm}{\multirow{7}{25mm}{\rotatebox[origin=c]{90}{Tabular Data}}}
& Mice & 1,077 & 68 & 8 & 0 & \cite{uciRepo}\\
&& adipose & 14,947 & 2 & 12 & 0 & \cite{broad_singlecell}
\\
&& airway & 14,163 & 2 & 10 & 0 & \cite{broad_singlecell}
\\
&& lactate & 39,825 & 2 & 6 & 0 & \cite{broad_singlecell}
\\
&& HAR & 10,299 & 561 & 6 & 0 & \cite{uciRepo}\\
&& letterrec. & 20,000 & 16 & 26 & 0 & \cite{uciRepo}\\
&& Pendigits & 10,992 & 16 & 10 & 0 & \cite{uciRepo}\\
\cmidrule{2-8}
&\parbox[t]{2mm}{\multirow{6}{25mm}{\rotatebox[origin=c]{90}{Image Data}}}
& COIL20 & 1,440 & 16,384 & 20 & 0 & \cite{source_coil20}\\
&& COIL100 & 7,200 & 49,152 & 100 & 0 & \cite{source_coil100}\\
&& cmu\_faces & 624 & 960 & 20 & 0 & \cite{uciRepo}\\
&& Optdigits & 5,620 & 64 & 10 & 0 & \cite{uciRepo}\\
&& USPS & 9,298 & 256 & 10 & 0 & \cite{usps}\\
&& MNIST & 70,000 & 784 & 10 & 0 & \cite{mnist}\\
\bottomrule
\bottomrule
\end{tabular}}
\end{table}

\newpage
\cleardoublepage

\includepdf[pages=-]{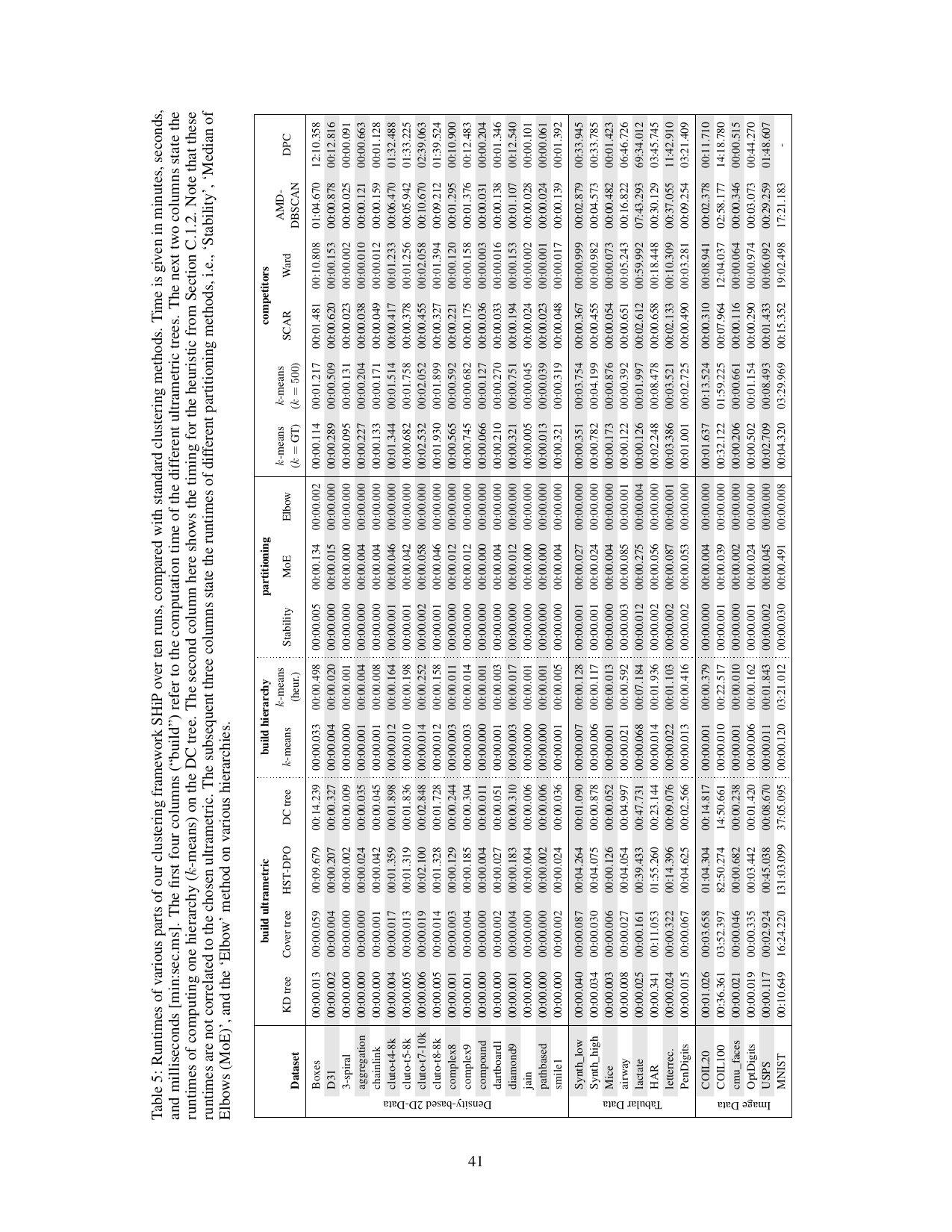}


\end{document}